\documentclass[11pt]{article}

\usepackage[T1]{fontenc}
\usepackage[utf8]{inputenc}
\usepackage[a4paper,margin=1in]{geometry}
\usepackage{microtype}
\usepackage{setspace}
\usepackage{titlesec}
\usepackage{authblk}

\usepackage[english]{babel}
\usepackage{csquotes}

\usepackage{amsmath,amssymb,amsfonts,mathtools,mathrsfs}
\usepackage{bm}
\usepackage{siunitx}
\sisetup{detect-all=true}
\usepackage{physics}
\usepackage{enumitem}

\DeclareMathOperator{\conv}{conv}

\newcommand{\R}{\mathbb{R}}

\newcommand{\1}{\mathbf{1}}
\newcommand{\Newt}{\mathrm{Newt}}        
\newcommand{\Trop}{\mathcal{T}}          
\newcommand{\Argmax}{\operatorname{Argmax}}



\usepackage{amsthm}
\numberwithin{equation}{section}
\newtheorem{theorem}{Theorem}[section]
\newtheorem{proposition}[theorem]{Proposition}
\newtheorem{lemma}[theorem]{Lemma}

\theoremstyle{definition}
\newtheorem{definition}[theorem]{Definition}

\newtheorem{example}[theorem]{Example}
\theoremstyle{remark}
\newtheorem{remark}[theorem]{Remark}

\usepackage{graphicx}
\usepackage{xcolor}
\usepackage{booktabs,multirow,makecell}
\usepackage{caption}
\usepackage{subcaption}
\captionsetup{labelfont=bf}
\usepackage{float}
\usepackage{wrapfig}
\usepackage{placeins}
\usepackage{tikz}
\usetikzlibrary{calc,arrows.meta,positioning}
\usepackage{tikz-3dplot}
\usepackage{pgfplots}
\pgfplotsset{compat=1.18}

\usepackage{algorithm}
\usepackage{algpseudocode}
\algrenewcommand\algorithmicrequire{\textbf{Input:}}
\algrenewcommand\algorithmicensure{\textbf{Output:}}

\usepackage{listings}
\lstset{
  basicstyle=\ttfamily\small,
  frame=single,
  breaklines=true,
  columns=fullflexible,
  tabsize=2
}

\usepackage[numbers,sort&compress]{natbib} 
\usepackage[hidelinks]{hyperref}
\hypersetup{
  colorlinks=true,
  linkcolor=blue!60!black,
  citecolor=blue!60!black,
  urlcolor=blue!60!black,
  pdftitle={From Universal Approximation Theorem to Tropical Geometry of Multi-Layer Perceptrons}
}
\usepackage[nameinlink,noabbrev]{cleveref}

\title{From Universal Approximation Theorem to Tropical Geometry of Multi-Layer Perceptrons}

\author[1]{Yi-Shan Chu\thanks{Corresponding author. Email: \href{mailto:easonchu7@gmail.com}{easonchu7@gmail.com}}}
\author[1]{Yueh-Cheng Kuo\thanks{Email: \href{mailto:kuoyc@nccu.edu.tw}{kuoyc@nccu.edu.tw}}}

\affil[1]{Department of Mathematical Sciences, National Chengchi University, Taipei, Taiwan}

\date{\today}

\begin{document}
\maketitle

\begin{abstract}
We revisit the Universal Approximation Theorem(UAT) through the lens of the tropical geometry of neural networks and introduce a constructive, geometry-aware initialization for sigmoidal multi-layer perceptrons (MLPs). Tropical geometry shows that Rectified Linear Unit (ReLU) networks admit decision functions with a combinatorial structure often described as a tropical rational, namely a difference of tropical polynomials. Focusing on planar binary classification, we design purely sigmoidal MLPs that adhere to the finite-sum format of UAT: a finite linear combination of shifted and scaled sigmoids of affine functions. The resulting models yield decision boundaries that already align with prescribed shapes at initialization and can be refined by standard training if desired. This provides a practical bridge between the tropical perspective and smooth MLPs, enabling interpretable, shape-driven initialization without resorting to ReLU architectures. We focus on the construction and empirical demonstrations in two dimensions; theoretical analysis and higher-dimensional extensions are left for future work.
\end{abstract}


\section{Introduction}

Multi-layer perceptrons (MLPs) are a foundational model class with numerous extensions and architectural variants, and they have demonstrated empirical success across artificial intelligence, computer vision, robotics, and beyond. Despite this breadth of applications, our theoretical understanding of MLPs and the connections among different viewpoints remain incomplete.

A classical line of work establishes \emph{existence} results: under mild assumptions on the activation function and with suitable choices of parameters, finite linear combinations of sigmoids of affine forms are dense in natural function spaces. In particular, Cybenko \citep{Cybenko1989} showed that, for a sigmoidal activation~$\sigma$, functions of the form
\begin{equation}
\label{eq:cybenko}
G(x)\;=\;\sum_{j=1}^{N}\alpha_{j}\,\sigma\!\big(w_{j}^{\top}x+\theta_{j}\big), \qquad x\in\mathbb{R}^{d},
\end{equation}
can approximate any continuous mapping on a compact domain to arbitrary accuracy; subsequent extensions and refinements were provided, e.g., by \citet{Hornik1991}. While these universal approximation theorems (UATs) certify \emph{what} can be represented, they do not prescribe \emph{how} to construct such networks for a desired decision set, which limits interpretability and hinders geometry-aware design.

A complementary, more combinatorial perspective comes from the tropical geometry of neural networks. For ReLU models, decision functions admit piecewise-linear descriptions with rich polyhedral structure, and the decision boundary can often be analyzed through a \emph{tropical rational} (a tropical quotient of two tropical polynomials) \citep{ZhangNaitzatLim2018,AlfarraEtAl2022,BrandenburgLohoMontufar2024}. This viewpoint highlights explicit boundary programming and dual polyhedral objects (e.g., zonotopes), yet most existing constructions are tailored to ReLU piecewise-linear networks.

In this work, we revisit UAT \emph{through} the tropical lens and develop a constructive, geometry-aware initialization for \emph{sigmoidal} MLPs that retains the finite-sum structure in \eqref{eq:cybenko}. Our approach compiles a geometric covering of a target region into the weights of a purely sigmoidal network, yielding decision boundaries that match the prescribed shape at initialization; subsequent training is optional and serves primarily as refinement. In doing so, we provide a practical bridge from tropical-style boundary reasoning to smooth MLPs within the classical UAT format.
\paragraph{Related work.}
The universal approximation literature spans several complementary directions. Beyond the earliest sigmoidal results of \citet{Cybenko1989} and extensions by \citet{Hornik1991}, there are alternative proofs and activation assumptions \citep{Funahashi1989,Leshno1993}, as well as rate and complexity analyses (e.g., Barron-type bounds) \citep{Barron1993,Pinkus1999}. Depth further changes the picture via expressivity and separation phenomena \citep{MontufarEtAl2014,Telgarsky2016}, while practical constructive schemes include radial-basis and extreme-learning approaches that select or randomize hidden units and then fit linear output layers \citep{ParkSandberg1991,HuangZhuSiew2006}. 

A complementary line analyzes piecewise-linear networks through tropical geometry, where ReLU decision functions inherit polyhedral structure and can be described via \emph{tropical rational} (differences of tropical polynomials) with dual objects such as zonotopes \citep{ZhangNaitzatLim2018,AlfarraEtAl2022}. Recent work investigates parameter spaces and ``real tropical'' viewpoints that connect continuous families of networks with combinatorial models \citep{BrandenburgLohoMontufar2024}, and explores compression and structure preservation in tropical terms \citep{FotopoulosMaragosMisiakos2024}. 

Our contribution differs in two aspects: (i) we stay entirely within the sigmoidal, finite-sum format of UAT yet construct \emph{geometrically aligned} hidden units with appropriate weights by constructing a finite ball cover of the target region; (ii) we apply the tropical perspective only as a guiding analogy for boundary programming, without relying on ReLU or log-sum-exp mechanisms. This positions our method between constructive approximation and tropical analyses, emphasizing interpretable, shape-driven initialization.

Section~\ref{sec:uat} recalls the Universal Approximation Theorem (UAT) for sigmoidal multi-layer perceptrons and summarizes the guarantees relevant to our finite-sum construction. 
Section~\ref{sec:tropical} introduces tropical algebra and the tropical-geometric viewpoint on neural decision functions, providing the terminology used throughout.
Section~\ref{sec:oned} presents a one-dimensional warm-up (\(\R \to \{0,1\}\)) that illustrates the basic cases and how MLPs act.
Section~\ref{sec:planar-convex} develops the planar case with convex target regions and unions of convex sets.
Section~\ref{sec:planar-general} consider general planar regions and apply an algorithm that compiles a ball cover into the weights of a sigmoidal MLP, together with implementation details and complexity considerations.
Section~\ref{sec:higher-d} outlines extensions beyond the plane and discusses how the construction adapts in higher dimensions.
Section~\ref{sec:apps} demonstrates applications and case studies using real data.
Section~\ref{sec:conclusion} concludes with limitations and directions for future work.

\section{Universal Approximation for Sigmoidal MLPs}\label{sec:uat}
We adopt the classical finite-sum model for single-hidden-layer networks:
\begin{equation}\label{eq:finite-sum}
  G_N(x)\;=\;\sum_{j=1}^{N}\alpha_j\,\sigma\!\big(w_j^{\top}x+\theta_j\big), 
  \qquad x\in\R^{d},
\end{equation}
where $N\in\mathbb{N}$, $\alpha_j\in\R$, $w_j\in\R^{d}$, and $\theta_j\in\R$ are free parameters, and $\sigma:\R\to\R$ is a fixed activation.

\begin{definition}[Sigmoidal activation; {\citep[Def.~1]{Cybenko1989}}]\label{def:sigmoidal}
A function $\sigma:\R\to\R$ is called \emph{sigmoidal} if it is bounded and
\[
\lim_{t\to+\infty}\sigma(t)=1,
\qquad
\lim_{t\to-\infty}\sigma(t)=0.
\]
Typical examples include the logistic function(sigmoid function) $\sigma(t)=(1+e^{-t})^{-1}$ and smooth variants such as $\tanh$ (after affine rescaling).
\end{definition}

\paragraph{Universal approximation (UAT).}
Cybenko proved that the class $\{G_N\}_{N\ge 1}$ with a sigmoidal $\sigma$ is dense in $C(K)$, the continuous real-valued functions on a compact set $K\subset\R^d$, under the uniform norm.

\begin{theorem}[Cybenko's UAT; {\citep[Thm.~2]{Cybenko1989}}]\label{thm:cybenko}
Let $\sigma$ be sigmoidal in the sense of \text{Definition} \ref{def:sigmoidal}. Then, for any compact $K\subset\R^d$, the set of finite sums \eqref{eq:finite-sum} is dense in $C(K)$ with respect to the sup norm, i.e., for every $f\in C(K)$ and every $\varepsilon>0$, there exist $N$ and parameters $\{\alpha_j,w_j,\theta_j\}_{j=1}^N$ such that $\sup_{x\in K}\lvert f(x)-G_N(x)\rvert<\varepsilon$.
\end{theorem}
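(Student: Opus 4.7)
The plan is to follow Cybenko's original functional-analytic strategy, which reduces density in $C(K)$ to a statement about measures and then to a property of the activation itself. First I would let $S\subset C(K)$ denote the set of all finite sums of the form \eqref{eq:finite-sum}; $S$ is plainly a linear subspace. To show $\overline{S}=C(K)$, I would argue by contradiction: if the closure $\overline{S}$ were a proper closed subspace, the Hahn--Banach theorem would furnish a bounded linear functional $L$ on $C(K)$ that is nonzero yet vanishes identically on $\overline{S}$. The Riesz representation theorem would then realize $L$ as integration against a nonzero signed regular Borel measure $\mu$ on $K$, so that
\begin{equation*}
\int_{K}\sigma\bigl(w^{\top}x+\theta\bigr)\,d\mu(x)\;=\;0
\qquad\text{for every }w\in\R^{d},\ \theta\in\R.
\end{equation*}
The goal becomes showing that this forces $\mu=0$, contradicting the choice of $L$.

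The crux is therefore to establish that any sigmoidal $\sigma$ in the sense of \text{Definition} \ref{def:sigmoidal} is \emph{discriminatory}, i.e.\ that the displayed vanishing of all its affine translates against $\mu$ really does imply $\mu\equiv 0$. I would carry this out in two steps. The first is a pointwise rescaling: for fixed $w,\theta,\varphi$ the function $\sigma(\lambda(w^{\top}x+\theta)+\varphi)$ is uniformly bounded, and as $\lambda\to+\infty$ it converges pointwise to the indicator $\1_{\{w^{\top}x+\theta>0\}}$ plus a boundary correction on $\{w^{\top}x+\theta=0\}$ and a constant multiple of $\sigma(\varphi)$ on $\{w^{\top}x+\theta<0\}$. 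Dominated convergence then transfers the hypothesis into vanishing of $\mu$ against indicators of open and closed half-spaces. The second step is to upgrade half-space vanishing to $\mu\equiv 0$: for fixed $w$, the map $h\mapsto\int_{\R}h(w^{\top}x)\,d\mu(x)$ is a bounded linear functional on $L^{\infty}(\R)$, and killing all half-line indicators kills all simple functions, hence all bounded measurable functions; specializing to $h(t)=e^{ist}$ for $s\in\R$ yields that the Fourier transform of every one-dimensional push-forward $(w\mapsto w^{\top}x)_{*}\mu$ vanishes, and Fourier uniqueness forces $\mu=0$.

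I would expect the main obstacle to lie precisely in the second step: handling the measure-theoretic bookkeeping at the boundary hyperplane $\{w^{\top}x+\theta=0\}$ when passing $\lambda\to\infty$, and then carefully promoting ``$\mu$ integrates every half-space to zero'' into ``every Fourier coefficient of every projection vanishes.'' The first step, by contrast, is largely routine once a dominating constant (the bound on $\sigma$) is in hand, and Hahn--Banach plus Riesz only provide the scaffolding. Once discriminatoriness is established, the proof closes immediately: the existence of such a nonzero $\mu$ is incompatible with $\sigma$ being sigmoidal, so no proper closed subspace can contain $S$, which is the claim. Finally, I would remark that the argument is purely nonconstructive — it certifies existence of parameters $\{\alpha_j,w_j,\theta_j\}$ but gives no control over $N$ or on how to choose them, which is exactly the gap the subsequent sections of the paper aim to fill via the tropical/geometric construction.
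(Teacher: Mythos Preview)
The paper does not supply its own proof of this theorem: it is stated with a citation to \citet[Thm.~2]{Cybenko1989} and then used as background, so there is nothing in the manuscript to compare your argument against. Your proposal faithfully reconstructs Cybenko's original functional-analytic proof (Hahn--Banach plus Riesz to produce an annihilating measure, followed by the discriminatory lemma via rescaling and Fourier uniqueness), which is exactly the source the paper is citing.

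One small wobble worth tightening: the sentence ``killing all half-line indicators kills all simple functions, hence all bounded measurable functions'' overstates things, since simple functions are not dense in $L^{\infty}$ in the sup norm. The cleaner route---and the one Cybenko actually takes---is to observe that vanishing on half-line indicators forces the push-forward measure $\mu_w(A)=\mu(\{x:w^{\top}x\in A\})$ to vanish on every interval, whence $\mu_w\equiv 0$ as a signed Borel measure on $\R$; then $\int h(w^{\top}x)\,d\mu(x)=0$ for \emph{every} bounded Borel $h$, in particular $h(t)=e^{it}$, and Fourier injectivity closes the argument. This is a bookkeeping fix, not a genuine gap.
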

We adopt the classical single-hidden-layer, finite-sum model defined as in \eqref{eq:finite-sum}.
Subsequent work broadened both the function classes and the activation assumptions. Funahashi established a closely related universality result for continuous mappings on compact sets using analytic arguments \citep{Funahashi1989}. Hornik extended universality beyond uniform approximation on $C(K)$ to vector-valued outputs and to $L^p(\mu)$ spaces, and relaxed activation requirements far beyond the sigmoidal case \citep{HornikStinchcombeWhite1989,Hornik1991}. A sharp activation criterion was later given by Leshno, Lin, Pinkus, and Schocken, who characterized universality purely in terms of $\sigma$ \citep{Leshno1993}.

\begin{theorem}[Nonpolynomial criterion {\citep[Thm.~3.2]{Leshno1993}}]\label{thm:nonpoly}
Let $K\subset\R^d$ be compact with nonempty interior. The linear span of $\{\,\sigma(w^\top x+\theta):w\in\R^d,\theta\in\R\,\}$ is dense in $C(K)$ if and only if $\sigma$ is a. e.  not a polynomial.
\end{theorem}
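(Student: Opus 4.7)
The plan is to handle the two directions separately. Necessity (density $\Rightarrow$ not a polynomial) is elementary: if $\sigma$ coincides almost everywhere with a polynomial of degree $m$, then taking the continuous polynomial representative, every element $\sigma(w^\top x+\theta)$ is a polynomial of degree at most $m$ in $x$, so every finite linear combination lies in the finite-dimensional subspace of polynomials of degree $\le m$, which cannot be dense in $C(K)$ since $K$ has nonempty interior.

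For sufficiency I would follow the classical three-stage program of Leshno--Lin--Pinkus--Schocken. \textbf{Stage 1 (mollification):} reduce to the case $\sigma \in C^\infty(\R)$. For any $\phi \in C_c^\infty(\R)$, the convolution $\tilde\sigma := \sigma * \phi$ is smooth, and for each fixed $(w,\theta)$ the function $\tilde\sigma(w^\top x+\theta) = \int \sigma(w^\top x+\theta-t)\phi(t)\,dt$ is the uniform limit on $K$ of Riemann sums $\sum_i \phi(t_i)\,\sigma(w^\top x+\theta-t_i)\Delta t_i$; each summand lies in our family (same $w$, bias $\theta-t_i$), so $\tilde\sigma(w^\top x+\theta)$ belongs to the closure of the linear span. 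One then argues that if $\sigma$ is not a.e.\ a polynomial, some choice of $\phi$ makes $\tilde\sigma$ also not a polynomial, reducing the problem to the smooth, non-polynomial case.

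\textbf{Stage 2 (generating monomials via derivatives):} assuming $\tilde\sigma \in C^\infty$ is not a polynomial, for every $k\ge 0$ there exists $\theta_k \in \R$ with $\tilde\sigma^{(k)}(\theta_k)\neq 0$. For any $u \in \R^d$ and $h>0$, the finite difference
\begin{equation*}
\Delta_h^k(x) \;:=\; h^{-k}\sum_{i=0}^{k}(-1)^{k-i}\binom{k}{i}\,\tilde\sigma\!\big((ih)\,u^\top x + \theta_k\big)
\end{equation*}
lies in the linear span (each summand uses weight $(ih)u$ and bias $\theta_k$), and by Taylor's theorem it converges uniformly on the compact set $K$ to $(u^\top x)^k\,\tilde\sigma^{(k)}(\theta_k)$ as $h\to 0$. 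Hence $(u^\top x)^k$ lies in the closure for every $u \in \R^d$ and every $k\in\mathbb{N}$. \textbf{Stage 3 (polynomial density):} linear combinations of $(u^\top x)^k$ over $u \in \R^d$ span all homogeneous polynomials of degree $k$ in $x_1,\dots,x_d$ (a standard polarization identity for symmetric tensors), so every polynomial in $x$ lies in the closure of the span, and the Weierstrass approximation theorem finishes the job.

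The main obstacle is Stage~1: rigorously justifying that \emph{some} mollifier $\phi$ yields a non-polynomial $\tilde\sigma$. The tempting but wrong demand would be that $\tilde\sigma$ be non-polynomial for \emph{every} $\phi$; the correct argument is by contradiction, using that if $\sigma*\phi$ agreed with a polynomial for every $\phi \in C_c^\infty(\R)$, then passing to a delta-sequence of mollifiers, together with a degree-control / distributional argument, would force $\sigma$ itself to equal a polynomial almost everywhere, contradicting the hypothesis. A secondary technicality is uniform convergence of both the Riemann sums (Stage~1) and the finite differences (Stage~2) on $K$; compactness of $K$, together with continuity of $\tilde\sigma$ and its derivatives, keeps the arguments $(ih)u^\top x + \theta_k$ in a bounded interval for $x \in K$ and $h\le h_0$, giving the required uniform estimates.
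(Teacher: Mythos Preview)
The paper does not supply its own proof of this theorem; it is quoted as a background result with a citation to Leshno, Lin, Pinkus, and Schocken (1993). Your three-stage outline (mollification to reduce to smooth non-polynomial $\sigma$; generation of $(u^\top x)^k$ via finite differences in the scaling parameter; polarization plus Weierstrass) is precisely the argument given in that original reference, so in the relevant sense your approach matches. One omission worth flagging: the original theorem imposes mild regularity on $\sigma$ (locally bounded, with the closure of its discontinuity set of Lebesgue measure zero) so that $\sigma*\phi$ is well defined and the Riemann-sum approximation in Stage~1 converges uniformly on $K$; the bare hypothesis ``$\sigma$ is a.e.\ not a polynomial'' does not by itself make Stage~1 go through, and you should state this assumption explicitly.
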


While these results certify density, they do not quantify rates or provide constructive recipes for choosing the parameters in \eqref{eq:finite-sum}. Barron addressed rates by identifying a Fourier-based function class (now called the Barron class) for which there exist networks of the form \eqref{eq:finite-sum} achieving $L^2$ error of order $O(N^{-1/2})$, with constants independent of input dimension \citep{Barron1993}; see also \citet{Pinkus1999} for a survey. Beyond function values, Hornik, Stinchcombe, and White showed that networks can approximate a target mapping together with its partial derivatives to arbitrary accuracy under mild smoothness assumptions \citep{HornikStinchcombeWhite1990}.

These developments establish the expressive power of the finite-sum architecture \eqref{eq:finite-sum} but remain largely nonconstructive regarding \emph{how} to realize a desired decision set. In this paper we remain strictly within the UAT format and propose a geometry-aware procedure that \emph{constructs} the parameters in \eqref{eq:finite-sum} from a prescribed planar region via a ball cover. The resulting sigmoidal networks inherit all classical UAT guarantees by design, while our compilation yields decision boundaries that already align with the target at initialization; training, when used, serves primarily as refinement rather than discovery.
\section{Tropical Geometry}\label{sec:tropical}
Tropical geometry studies algebraic objects after replacing the usual $(+,\times)$ arithmetic by an idempotent, piecewise-linear calculus. Under this correspondence, polynomial equations give rise to polyhedral complexes equipped with rich combinatorics, so that questions from algebraic geometry can be translated into convex- and graph-theoretic statements; see \citet{Morrison2019} for a concise introduction and \citet{MaclaganSturmfels2015} for a comprehensive reference. In this section, we only state som relevant concepts in tropical grometry.

\begin{definition}[Tropical semiring (max--plus)]
\label{def:trop-semiring}
The tropical semiring is $\mathbb{T}=\mathbb{R}\cup\{-\infty\}$ with
\[
a\oplus b := \max\{a,b\}, \qquad
a\odot b := a+b,
\]
additive identity $-\infty$ and multiplicative identity $0$.
\end{definition}

\begin{proposition}[Basic properties of tropical semiring]
\label{prop:basic}
Over $(\mathbb{T},\oplus,\odot)$:
\begin{enumerate}[label=\textnormal{(\alph*)}]
\item $\oplus$ is commutative, associative, and idempotent ($a\oplus a=a$).
\item $\odot$ is commutative and associative with identity $0$.
\item $\odot$ distributes over $\oplus$: $a\odot(b\oplus c)=(a\odot b)\oplus(a\odot c)$.
\item The natural order $a\le b \iff a\oplus b=b$ coincides with the usual order on $\mathbb{R}$.
\end{enumerate}
Hence $\mathbb{T}$ is a commutative idempotent semiring (not a ring: $\oplus$ has no additive inverses). \citep[cf.][]{Morrison2019,MaclaganSturmfels2015}
\end{proposition}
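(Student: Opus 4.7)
The plan is to reduce each of the four claims to an elementary property of $\max$ and ordinary addition on $\R$, extended by the absorbing convention $-\infty+a=-\infty$ for every $a\in\mathbb{T}$. Once this reduction is made, every item becomes essentially a one-line verification, and the only nontrivial bookkeeping concerns the point at infinity.

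First, for (a), I would note that commutativity of $\oplus$ is immediate from $\max\{a,b\}=\max\{b,a\}$, associativity follows because both $(a\oplus b)\oplus c$ and $a\oplus(b\oplus c)$ coincide with $\max\{a,b,c\}$, and idempotence is just $\max\{a,a\}=a$. For (b), the commutativity and associativity of $\odot$ are inherited directly from those of $+$ on $\R$ (extended with $-\infty$ as absorbing), and the computation $a\odot 0 = a+0 = a$ identifies $0$ as the multiplicative identity.

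Next, for (c), the key observation is that translation by a constant is monotone on $\R$: if $b\le c$ then $a+b\le a+c$, whence $a+\max\{b,c\}=\max\{a+b,a+c\}$. Written in tropical notation this is precisely $a\odot(b\oplus c)=(a\odot b)\oplus(a\odot c)$, and the symmetric case $c\le b$ is handled identically. Finally, (d) is the standard characterization $a\le b \iff \max\{a,b\}=b$ on $\R$, which extends verbatim to $\mathbb{T}$ by treating $-\infty$ as the minimum element.

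The only delicate point, and the main \emph{obstacle} if one insists on full rigor, is the treatment of the additive identity $-\infty$: one must check that the absorbing rule $-\infty\odot a=-\infty$ is consistent with distributivity in mixed cases such as $(-\infty)\odot(b\oplus c)$, and observe that idempotence of $\oplus$ precludes additive inverses, so that $\mathbb{T}$ is a semiring but not a ring (as the statement already notes). A brief case analysis on whether any argument equals $-\infty$ resolves both points and completes the verification.
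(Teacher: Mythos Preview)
Your proposal is correct: each item reduces exactly as you describe to an elementary property of $\max$ and $+$ on $\R\cup\{-\infty\}$, and your treatment of the absorbing element $-\infty$ and the failure of additive inverses is accurate. The paper does not actually supply a proof of this proposition; it simply states the result with a citation to \citet{Morrison2019} and \citet{MaclaganSturmfels2015}, so your write-up provides strictly more detail than the paper itself.
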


\begin{remark}[Min--plus convention]
\label{rem:minplus}
Many texts use the min--plus semiring $(\overline{\mathbb{R}},\boxplus,\boxtimes)$ with $a\boxplus b=\min\{a,b\}$ and $a\boxtimes b=a+b$, additive identity $+\infty$. It is anti-isomorphic to max--plus via $x\mapsto -x$ \cite{Morrison2019,MaclaganSturmfels2015}. 
\end{remark}

\subsection*{Tropical monomials, polynomials, and hypersurfaces}
\begin{definition}\label{def:inner}
For $u=(u_1,\dots,u_n)\in\mathbb{Z}_{\ge 0}^n$ and $x=(x_1,\dots,x_n)\in\mathbb{T}^n$ we write
\[
\langle u,x\rangle := \sum_{i=1}^n u_i\,x_i ,
\]
the Euclidean dot product. 
\end{definition}

\begin{definition}[Monomials and polynomials]
\label{def:poly}
For $x=(x_1,\dots,x_n)\in\mathbb{T}^n$ and $u=(u_1,\dots,u_n)\in\mathbb{Z}_{\ge0}^n$, the tropical monomial with coefficient $c\in\mathbb{T}$ is
$c\odot x^u = c+\langle u,x\rangle$.
A \emph{tropical polynomial} is a finite tropical sum
\[
F(x)=\bigoplus_{k=1}^{m}\big(c_k\odot x^{u^{(k)}}\big)
=\max_{1\le k\le m}\big\{\,c_k+\langle u^{(k)},x\rangle\,\big\}.
\]
\end{definition}

\begin{remark}[Tropical powers]\label{rem:trop-powers}
For $a\in\mathbb{T}$ and $m\in\mathbb{Z}_{\ge 0}$, the tropical $m$-th power is the $m$-fold tropical product
\[
a^{\odot m} := \underbrace{a\odot a\odot\cdots\odot a}_{m\ \text{times}} \;=\; m\,a \quad(\text{ordinary sum}).
\]
For $x=(x_1,\dots,x_n)$ and a multi-index $u=(u_1,\dots,u_n)$ we set
\[
x^{\odot u} \;:=\; x_1^{\odot u_1}\odot\cdots\odot x_n^{\odot u_n}
\;=\; u_1 x_1 + \cdots + u_n x_n .
\]
For convenience we drop the symbol $\odot$ in exponents and write
\[
a^{u}\equiv a^{\odot u}, \qquad x^{u}\equiv x^{\odot u}.
\]
With this convention, a tropical monomial with coefficient $c$ can be written as
\[
c\odot x^{u} \;=\; c + \langle u,x\rangle .
\]
\end{remark}

\begin{proposition}[Convex piecewise-linear structure]\label{prop:convexPL}
Every tropical polynomial $F:\mathbb{R}^n\to\mathbb{R}$ can be written as
$F(x)=\max_{1\le k\le m}\{c_k+\langle u^{(k)},x\rangle\}$.
Hence $F$ is convex and piecewise-linear with finitely many linear regions, and its set of non-differentiability points equals the corner locus $\mathcal{T}(F)$. \citep{Morrison2019,MaclaganSturmfels2015}
\end{proposition}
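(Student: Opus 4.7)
The plan is to unpack the definition and then verify each assertion in turn. The displayed equality is essentially definitional: by Definition~\ref{def:poly} together with the conventions of Remark~\ref{rem:trop-powers}, each tropical monomial satisfies $c_k\odot x^{u^{(k)}} = c_k+\langle u^{(k)},x\rangle$ and $\bigoplus$ unfolds to $\max$, giving at once $F(x)=\max_{1\le k\le m}\{c_k+\langle u^{(k)},x\rangle\}$. This exhibits $F$ as a pointwise maximum of finitely many affine functions, so convexity follows immediately from the standard fact that a pointwise maximum of finitely many convex functions is convex.

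For the piecewise-linear structure I would partition $\mathbb{R}^n$ by the closed \emph{winning regions}
\[
R_k \;=\; \{\,x\in\mathbb{R}^{n} : c_k+\langle u^{(k)},x\rangle \ge c_j+\langle u^{(j)},x\rangle \text{ for all } 1\le j\le m\,\},
\]
each a polyhedron cut out by at most $m-1$ affine inequalities. The $R_k$ cover $\mathbb{R}^n$ since some index always attains the maximum, and on each nonempty $R_k$ the function $F$ agrees with the affine map $x\mapsto c_k+\langle u^{(k)},x\rangle$; hence $F$ has at most $m$ linear pieces.

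For the identification of the non-differentiability set with the corner locus $\mathcal{T}(F)$, I would first pass to a canonical representative by retaining, among monomials with a common exponent $u$, only the one of largest coefficient, so that the surviving gradients $u^{(k)}$ become pairwise distinct. Then two inclusions suffice: if a single monomial $k$ strictly achieves the maximum at $x$, continuity keeps $k$ the unique maximizer on a neighborhood of $x$, so $F$ is locally affine and hence classically differentiable there; conversely, if two distinct indices both maximize at $x$, a short convex-analysis computation identifies the subdifferential $\partial F(x)$ with $\conv\{u^{(k)} : k \text{ maximizing at } x\}$, a non-singleton, which rules out classical differentiability. The main obstacle I anticipate is exactly this bookkeeping around degenerate monomials defining $\mathcal{T}(F)$: without the canonical reduction, two distinct monomials could share a gradient yet both maximize, in which case the point is still smooth and the naive bijection between multi-maximizer loci and non-smoothness fails. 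Once the canonical form is fixed, the remaining steps are routine: polyhedral partitioning for the piecewise-linear claim, and the standard description of subdifferentials of finite maxima for the differentiability characterization.
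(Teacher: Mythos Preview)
Your argument is correct, but there is nothing in the paper to compare it against: Proposition~\ref{prop:convexPL} is stated as a background fact with citations to \cite{Morrison2019,MaclaganSturmfels2015} and is not given a proof in the text. Your write-up therefore supplies what the paper simply imports.

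The one genuinely delicate step is exactly the one you flag. The corner locus $\mathcal{T}(F)=\Trop(F)$ is defined (in Definition~\ref{def:trop-hypersurface}) via the index set $\Argmax_k$, so if the presentation of $F$ listed two monomials with the same exponent, a tie in indices could occur at a point where $F$ is perfectly smooth. Your passage to a canonical representative---keeping, for each exponent, only the monomial of largest coefficient---removes this pathology without changing $F$ as a function, after which the standard subdifferential identity $\partial F(x)=\conv\{u^{(k)}:k\text{ active at }x\}$ gives the desired dichotomy. With that reduction in hand the rest (pointwise max of affines is convex; the winning regions $R_k$ give the polyhedral decomposition) is routine, and your proof goes through.
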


\begin{example}[Tropical line in $\mathbb{R}^2$]\label{ex:tropLine}
Let $F(x,y)=x\oplus y\oplus 0=\max\{x,y,0\}$. Then $\mathcal{T}(F)$ is the union of three rays meeting at the origin, with primitive directions $(-1,0)$, $(0,-1)$, and $(1,1)$ \citep{Morrison2019}. See Fig.~\ref{fig:tropical-line}.
\end{example}

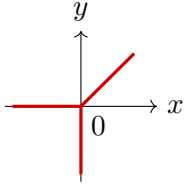
\begin{figure}[H]
  \centering
  \begin{tikzpicture}[scale=0.5]
    \draw[->] (-2,0)--(2,0) node[right] {$x$};
    \draw[->] (0,-2)--(0,2) node[above] {$y$};
    \draw[very thick,draw=red!80!black] (0,0)--(-1.8,0);
    \draw[very thick,draw=red!80!black] (0,0)--(0,-1.8);
    \draw[very thick,draw=red!80!black] (0,0)--(1.4,1.4);
    \node[below right] at (0,0) {$0$};
  \end{tikzpicture}
  \caption{The tropical line $x\oplus y\oplus 0$: three rays at the origin.}
  \label{fig:tropical-line}
\end{figure}

\begin{figure}[H]
\centering
\setlength{\tabcolsep}{6pt}

\newcommand{\TFbox}{\useasboundingbox (-2.25,-2.25) rectangle (2.25,2.25);} 
\newcommand{\DNbox}{\useasboundingbox (-0.85,-0.85) rectangle (3.45,2.85);} 

\tikzset{
  outnorm/.style={
    red,dashed,thick,-{Latex[length=2mm]},
    dash pattern=on 4pt off 3pt,
    line cap=round,line join=round,
    preaction={draw=white,line width=3pt}
  }
}

\begin{minipage}[t]{0.315\textwidth}\centering
\footnotesize \(f_{0}(x,y)=0\oplus x\oplus y\)\\[-0.15em]
\begin{tikzpicture}[scale=0.78]
  \TFbox
  \draw[->] (-1.8,0)--(1.9,0) node[right] {$x$};
  \draw[->] (0,-1.8)--(0,1.9) node[above] {$y$};
  \draw[line width=2pt,red!80!black] (0,0)--(-1.6,0);
  \draw[line width=2pt,red!80!black] (0,0)--(0,-1.6);
  \draw[line width=2pt,red!80!black] (0,0)--(1.15,1.15);
  \fill (0,0) circle (1.9pt);
  \node[below left] at (0,0) {\scriptsize $T(f_{0})$};
\end{tikzpicture}

\vspace{0.15em}
\begin{tikzpicture}[scale=0.88]
  \DNbox
  \draw[->] (-0.65,0)--(2.05,0) node[below] {$u_1$};
  \draw[->] (0,-0.65)--(0,2.05) node[left] {$u_2$};

  \filldraw[fill=blue!60, fill opacity=0.22, draw=blue!60!black, thick]
           (0,0)--(1,0)--(0,1)--cycle;
  \foreach \p/\lab in {(0,0)/{\tiny(0,0)},(1,0)/{\tiny(1,0)},(0,1)/{\tiny(0,1)}}{
    \fill \p circle (1.5pt) node[above right] {\lab};}

  \draw[outnorm] (0.50,0.35) -- (0.50,-0.55); 
  \draw[outnorm] (0.35,0.50) -- (-0.55,0.50); 
  \draw[outnorm] (0.45,0.45) -- (1.25,1.25);  

  \node[below left] at (0,0) {\scriptsize $\delta(f_{0})$};
\end{tikzpicture}
\end{minipage}
\hfill
\begin{minipage}[t]{0.315\textwidth}\centering
\footnotesize \(f_{1}(x,y)=(0\oplus x\oplus y)\odot(0\oplus(x{-}1)\oplus(y{+}1))\)\\[-0.15em]
\begin{tikzpicture}[scale=0.78]
  \TFbox
  \draw[->] (-1.8,0)--(1.9,0) node[right] {$x$};
  \draw[->] (0,-1.8)--(0,1.9) node[above] {$y$};
  \draw[line width=2pt,red!80!black] (0,0)--(-1.6,0);
  \draw[line width=2pt,red!80!black] (0,0)--(0,-1.6);
  \draw[line width=2pt,red!80!black] (0,0)--(1.0,1.0);
  \fill (0,0) circle (1.9pt);
  \draw[line width=2pt,red!80!black] (1,-1)--(-0.6,-1);
  \draw[line width=2pt,red!80!black] (1,-1)--(1,-2.6);
  \draw[line width=2pt,red!80!black] (1,-1)--(1.9,-0.1);
  \fill (1,-1) circle (1.9pt);
  \node at (-1.55,1.55) {\scriptsize $T(f_{1})$};
\end{tikzpicture}

\vspace{0.15em}
\begin{tikzpicture}[scale=0.88]
  \DNbox
  \draw[->] (-0.65,0)--(2.90,0) node[below] {$u_1$};
  \draw[->] (0,-0.65)--(0,2.90) node[left] {$u_2$};

  \filldraw[fill=blue!60, fill opacity=0.22, draw=blue!60!black, thick]
           (0,0)--(2,0)--(0,2)--cycle;
  \draw[blue!60!black] (1,0)--(1,1);
  \draw[blue!60!black] (0,1)--(1,1);
  \foreach \p/\lab in {(0,0)/{\tiny(0,0)},(2,0)/{\tiny(2,0)},(0,2)/{\tiny(0,2)},
                       (1,0)/{\tiny(1,0)},(0,1)/{\tiny(0,1)},(1,1)/{\tiny(1,1)}}{
    \fill \p circle (1.5pt) node[above right] {\lab};}

  \draw[outnorm] (0.50,1.35) -- (0.50,-0.55);
  \draw[outnorm] (1.50,0.45) -- (1.50,-0.55);
  \draw[outnorm] (1.40,0.50) -- (-0.55,0.50);
  \draw[outnorm] (0.45,1.50) -- (-0.55,1.50);
  \draw[outnorm] (0.55,1.45) -- (1.35,2.25);
  \draw[outnorm] (1.45,0.55) -- (2.25,1.35);

  \node[below left] at (0,0) {\scriptsize $\delta(f_{1})$};
\end{tikzpicture}
\end{minipage}
\hfill
\begin{minipage}[t]{0.315\textwidth}\centering
\footnotesize \(f_{2}(x,y)=(0\oplus x\oplus y)\odot(0\oplus(x{-}1)\oplus(y{+}1))\odot(0\oplus(x{+}1)\oplus(y{-}1))\)\\[-0.15em]
\begin{tikzpicture}[scale=0.78]
  \TFbox
  \draw[->] (-1.8,0)--(1.9,0) node[right] {$x$};
  \draw[->] (0,-1.8)--(0,1.9) node[above] {$y$};
  \foreach \ax/\ay in {0/0,1/-1,-1/1}{
    \draw[line width=2pt,red!80!black] (\ax,\ay)--(\ax-1.6,\ay);
    \draw[line width=2pt,red!80!black] (\ax,\ay)--(\ax,\ay-1.6);
    \draw[line width=2pt,red!80!black] (\ax,\ay)--(\ax+1.0,\ay+1.0);
    \fill (\ax,\ay) circle (1.9pt);
  }
  \node at (-1.55,1.55) {\scriptsize $T(f_{2})$};
\end{tikzpicture}

\vspace{0.15em}
\begin{tikzpicture}[scale=0.88]
  \DNbox
  \draw[->] (-0.65,0)--(3.30,0) node[below] {$u_1$};
  \draw[->] (0,-0.65)--(0,3.30) node[left] {$u_2$};

  \filldraw[fill=blue!60, fill opacity=0.22, draw=blue!60!black, thick]
           (0,0)--(3,0)--(0,3)--cycle;
  \draw[blue!60!black] (1,0)--(1,2);
  \draw[blue!60!black] (2,0)--(2,1);
  \draw[blue!60!black] (0,1)--(2,1);
  \draw[blue!60!black] (0,2)--(1,2);
  \foreach \p/\lab in {(0,0)/{\tiny(0,0)},(3,0)/{\tiny(3,0)},(0,3)/{\tiny(0,3)},
                       (1,0)/{\tiny(1,0)},(2,0)/{\tiny(2,0)},(0,1)/{\tiny(0,1)},
                       (0,2)/{\tiny(0,2)},(1,1)/{\tiny(1,1)},(2,1)/{\tiny(2,1)},
                       (1,2)/{\tiny(1,2)}}{
    \fill \p circle (1.4pt) node[above right] {\lab};}

  \draw[outnorm] (0.5,2.45) -- (0.5,-0.60);
  \draw[outnorm] (1.5,1.45) -- (1.5,-0.60);
  \draw[outnorm] (2.5,0.45) -- (2.5,-0.60);

  \draw[outnorm] (2.45,0.50) -- (-0.60,0.50);
  \draw[outnorm] (1.45,1.50) -- (-0.60,1.50);
  \draw[outnorm] (0.45,2.50) -- (-0.60,2.50);

  \draw[outnorm] (0.55,2.45) -- (1.35,3.25);
  \draw[outnorm] (1.50,1.50) -- (2.30,2.30);
  \draw[outnorm] (2.45,0.55) -- (3.25,1.35);

  \node[below left] at (0,0) {\scriptsize $\delta(f_{2})$};
\end{tikzpicture}
\end{minipage}

\caption{\textbf{Tropical hypersurfaces \(\Trop(f)\) and subdivision of its Newton polygons \(\delta(f)\).}
Top: \(\Trop(f)\)—unions of 1, 2, and 3 tropical lines (apices \((0,0)\), \((1,-1)\), \((-1,1)\)).
Bottom: \(\delta(f)\)—Newton triangles of sizes \(1,2,3\). Red dashed lines are
\emph{straight outward primitive normals} (left \((-1,0)\), bottom \((0,-1)\), diagonal \((1,1)\)),
each drawn as a single segment that runs from an interior anchor point, across the edge,
and extends outside the polygon.}
\label{fig:trop-one-row-fixed}
\end{figure}
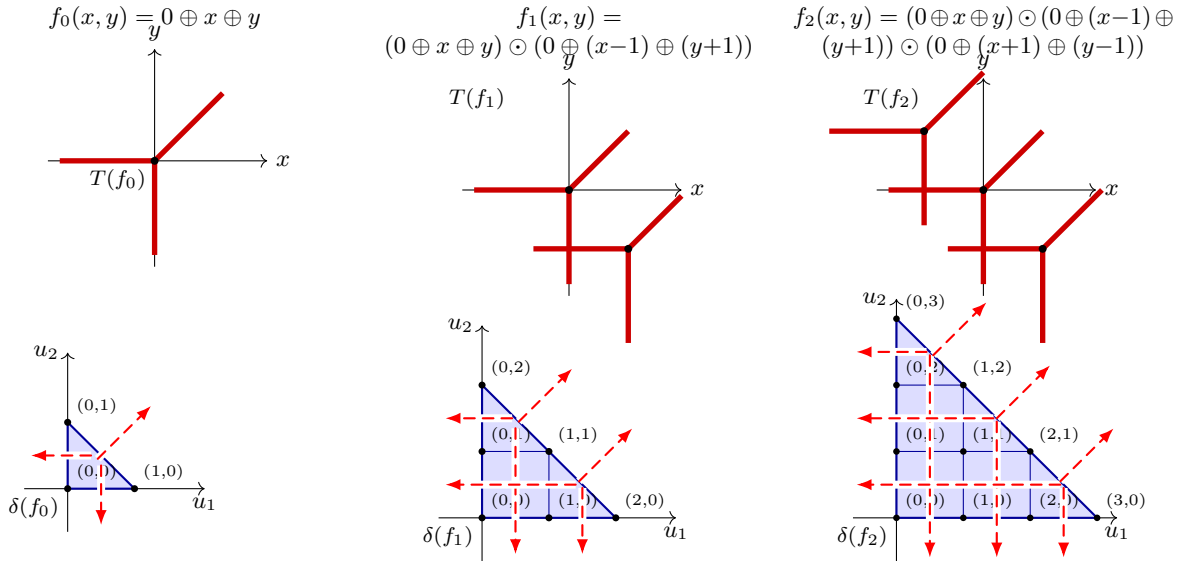


\begin{definition}\label{def:trop-hypersurface}
A tropical polynomial is the tropical sum of finitely many monomials,
\[
F(x)\;=\;\bigoplus_{k=1}^{m}\bigl(c_k\odot x^{\odot u^{(k)}}\bigr)
\;= c_1x^{u^{(1)}} \oplus c_2x^{u^{(2)}} \oplus ... \oplus c_mx^{u^{(m)}}\;=\;\max_{1\le k\le m}\Bigl\{\,c_k+\langle u^{(k)},x\rangle\,\Bigr\},\qquad x\in\mathbb{R}^n.
\]
Its \emph{tropical hypersurface} is
\[
\begin{aligned}
\Trop(F)
&= \Bigl\{\,x\in\mathbb{R}^n:\ \bigl|\Argmax_{k}\{\,c_k \odot x^{u^{(k)}}\,\}\bigr|\ge 2\,\Bigr\} \\
&= \Bigl\{\,x\in\mathbb{R}^n:\ \exists\, i\neq j\ \text{ with }\ 
     c_i \odot x^{\odot u^{(i)}} \;=\; c_j \odot x^{\odot u^{(j)}} \;=\; F(x)\,\Bigr\}.
\end{aligned}
\]
Equivalently, $\Trop(F)$ is the set of non-differentiability points of the convex piecewise-linear map $F$.
For $n=2$ it is a tropical curve.
\citep[cf.][]{Morrison2019,MaclaganSturmfels2015}
\end{definition}

\begin{definition}\label{def:newton}
Let $A=\{u^{(1)},\dots,u^{(m)}\}\subset\mathbb{Z}^n$ be the support of a tropical polynomial $F$, i.e.,
$F=\bigoplus_k(c_k\odot x^{u^{(k)}})$. The \emph{Newton polytope} of $F$ is the convex hull of its support $A$:
\[
\Newt(F)\;:=\;\conv(A)\ = \conv\{u^{(k)}\in \mathbb{R}^n : c_k \neq - \infty, k = 1,2,...,m\} \subset\ \mathbb{R}^n.
\]
When $n=2$, $\Newt(F)$ is a lattice polygon.
\citep{Morrison2019,MaclaganSturmfels2015}
\end{definition}

\begin{definition}[Upper faces, projection, and dual subdivision]\label{def:UF-dual}
Let
\[
F(x)=\bigoplus_{k=1}^{m}\bigl(c_k\odot x^{\odot u^{(k)}}\bigr),\qquad x\in\mathbb{R}^d,
\]
and set $\widehat{A}=\{(u^{(k)},c_k)\}\subset\mathbb{R}^{d+1}$, $P^\wedge=\conv(\widehat{A})$.
Write
\[
\mathrm{UF}(P^\wedge)\ :=\ \bigl\{\,p\subset P^\wedge:\ p\ \text{is a face with an outer normal }\nu\text{ s.t. } \nu_{d+1}>0\,\bigr\}
\]
for the collection of upper faces, and let
\[
\pi:\mathbb{R}^{d}\times\mathbb{R}\longrightarrow\mathbb{R}^{d},\qquad \pi(v,h)=v
\]
be the projection that drops the last coordinate. The \emph{dual subdivision} determined by $F$ is
\[
\delta(F)\ :=\ \{\,\pi(p)\subset\mathbb{R}^{d}\ :\ p\in \mathrm{UF}(P^\wedge)\,\}.
\]
Then $\delta(F)$ is a polyhedral complex with support $\Delta(F):=\Newt(F)=\conv\{u^{(k)}\}$.
By \citet[Prop.~3.1.6]{MaclaganSturmfels2015}, the tropical hypersurface $\Trop(F)$ is the
$(d-1)$\nobreakdash-skeleton of the polyhedral complex dual to $\delta(F)$.
In particular, each vertex of $\delta(F)$ corresponds to one linearity cell of the convex PL map $F$,
so the number of vertices of $\delta(F)$ upper-bounds the number of linear regions of $F$.
\end{definition}

\begin{theorem}[Duality Theorem; {\citep[Prop.~3.1.6]{MaclaganSturmfels2015}}]\label{thm:planar-duality-concise}
Let \(F\) be a tropical polynomial on \(\mathbb{R}^2\) and set \(P=\Newt(F)\).
Let \(\delta(F)=\{\pi(p): p\in \mathrm{UF}(P^\wedge)\}\) be the dual subdivision induced by the coefficients of \(F\)
(as in \Cref{def:UF-dual}). Then the tropical curve \(\Trop(F)\) is dual to \(\delta(F)\) in the following sense:
\begin{itemize}
  \item vertices of \(\Trop(F)\) correspond to polygons in \(\delta(F)\);
  \item edges of \(\Trop(F)\) correspond to interior edges in \(\delta(F)\);
  \item rays of \(\Trop(F)\) correspond to boundary edges of \(P\) used by \(\delta(F)\);
  \item regions of \(\mathbb{R}^2\) separated by \(\Trop(F)\) correspond to lattice points of \(P\) used in \(\delta(F)\).
\end{itemize}
Moreover, dual edges are orthogonal, adjacency is preserved, and dimensions are reversed.
\end{theorem}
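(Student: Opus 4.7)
The plan is to establish the bijection through the common bookkeeping object, namely the upper convex hull of the lifted support. Write $F(x)=\max_k\{c_k+\langle u^{(k)},x\rangle\}$ and, for each $x\in\mathbb{R}^2$, record the active set $I(x)=\Argmax_k\{c_k+\langle u^{(k)},x\rangle\}$. Observe that $(x,1)\in\mathbb{R}^3$ is an outer normal to the face $\conv\{(u^{(i)},c_i):i\in I(x)\}\subset P^\wedge$, and its last coordinate is positive, so this face lies in $\mathrm{UF}(P^\wedge)$. Conversely, any upper face $p$ of $P^\wedge$ has a normal $(\nu,\lambda)$ with $\lambda>0$ and therefore rescales to some $(-x,1)$, certifying that $p$ realizes the argmax at that $x$. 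This produces the cell correspondence
\[
C_I\ :=\ \{x\in\mathbb{R}^2:\ I(x)\supseteq I\}\ \longleftrightarrow\ p_I\ :=\ \conv\{(u^{(i)},c_i):i\in I\}\in\mathrm{UF}(P^\wedge),
\]
and after projecting by $\pi$ it becomes the desired correspondence with the cells of $\delta(F)$.

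Next I would extract the four numbered bullets by a dimension count. Because the affine map $x\mapsto c_i+\langle u^{(i)},x\rangle - c_j - \langle u^{(j)},x\rangle$ is linear in $x$ with gradient $u^{(i)}-u^{(j)}$, a tie between two monomials constrains $x$ to a line whose direction is orthogonal to $u^{(i)}-u^{(j)}$, while a tie among three or more monomials generically pins $x$ to a point. Thus $\dim C_I + \dim \pi(p_I)=2$ in the planar case, so open regions of $\mathbb{R}^2\setminus\Trop(F)$ (dim $2$) correspond to vertices of $\delta(F)$ (dim $0$), bounded edges of $\Trop(F)$ (dim $1$) correspond to interior edges of $\delta(F)$ (dim $1$), and vertices of $\Trop(F)$ (dim $0$) correspond to polygons in $\delta(F)$ (dim $2$). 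Orthogonality drops out of the same tie equation: an edge of $\Trop(F)$ where monomials $i$ and $j$ are equi-maximal lies on $\{x:\langle u^{(i)}-u^{(j)},x\rangle = c_j-c_i\}$, whose direction is perpendicular to $u^{(i)}-u^{(j)}$, the direction of the dual edge in $\delta(F)$. Adjacency (two cells share a face) is preserved because $I\subset J$ iff $p_I\supset p_J$ iff $C_I\supset C_J$.

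Finally I would treat the ray/boundary-edge case separately. A ray of $\Trop(F)$ is an unbounded tie set between two monomials whose lifts define an edge of $P^\wedge$; following $x$ to infinity along such a ray forces the normal $(x,1)/\|(x,1)\|$ to approach a limit direction lying in the linear hull of the support of $P$, which forces the dual edge of $P^\wedge$ to be a boundary edge of $P$ rather than an interior edge of the upper hull. Conversely, any boundary edge of $P$ used by $\delta(F)$ lifts to a boundary edge of $\mathrm{UF}(P^\wedge)$ whose normal cone extends to infinity in the $(x,1)$ direction, producing an unbounded ray of $\Trop(F)$. The main obstacle will be the clean handling of degeneracies: collinear lifted points, coefficients with $c_k=-\infty$ that change which lattice points actually participate, and boundary faces whose upper-normal cone is degenerate; these are exactly the situations where one must argue that the upper-face structure of $P^\wedge$ matches the coefficient-induced subdivision of $P$ rather than its trivial subdivision, and where one must insist that $\delta(F)$ is read off from $\mathrm{UF}(P^\wedge)$ and not from the full boundary of $P^\wedge$.
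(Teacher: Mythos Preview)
The paper does not supply its own proof of this theorem: it is stated with a citation to \textup{Maclagan--Sturmfels, Prop.~3.1.6} and left unproved. There is therefore nothing in the paper to compare your argument against; your proposal is an independent proof sketch of a result the authors simply import.

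On its own merits, your outline is the standard normal-fan argument and is essentially correct. Two small slips are worth fixing. First, there is a sign inconsistency: you correctly identify $(x,1)$ as an outer normal to $P^\wedge$ at the face $\conv\{(u^{(i)},c_i):i\in I(x)\}$, but in the converse direction you write that an upper normal ``rescales to some $(-x,1)$''; it should again be $(x,1)$. Second, your adjacency line is miswritten. With your definitions $C_I=\{x:I(x)\supseteq I\}$ and $p_I=\conv\{(u^{(i)},c_i):i\in I\}$, one has
\[
I\subset J \quad\Longleftrightarrow\quad p_I\subset p_J \quad\Longleftrightarrow\quad C_I\supset C_J,
\]
so the correspondence $p_I\leftrightarrow C_I$ is \emph{inclusion-reversing}; you wrote $p_I\supset p_J$, which would make it inclusion-preserving. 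This does not damage the conclusion (adjacency is still preserved), but the stated chain of equivalences is wrong as written.

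Your treatment of the ray case is adequate in spirit but could be made sharper: rather than taking a limit of normalized $(x,1)$, it is cleaner to note that the normal cone of a face $p\in\mathrm{UF}(P^\wedge)$, intersected with the hyperplane $\{\nu_{3}=1\}$, is unbounded if and only if the normal cone of $p$ also meets $\{\nu_{3}=0\}$ nontrivially, which happens exactly when $\pi(p)$ lies on $\partial P$. That avoids the limiting argument and handles the degenerate boundary cases you flag at the end.
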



\subsection*{Tropical rational functions and neural networks}

\begin{definition}[Tropical rational function]\label{def:trop-rational}
Let $U,V$ be tropical polynomials on $\mathbb{R}^n$. The \emph{tropical rational} function is the classical difference
\[
R(x)\;\equiv\;U(x)\oslash V(x) \equiv U(x) - V(x).
\]
Its zero set $\{x:\,U(x)=V(x)\}$ is a finite union of polyhedral pieces contained in tropical hypersurfaces associated with $U$ and $V$.
\end{definition}

\begin{proposition}\label{prop:zero-equality}
For tropical polynomials $U,V$, the decision set $\{x:\,R(x)\ge 0\}$ coincides with $\{x:\,U(x)\ge V(x)\}$, and the boundary $\partial\{R\ge 0\}$ is contained in the equality locus $\{x:\,U(x)=V(x)\}\subset\mathcal{T}(U)\cup\mathcal{T}(V)$. In particular the boundary is a polyhedral complex of dimension $n-1$.
\end{proposition}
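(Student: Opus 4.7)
The plan is to establish the three assertions---the set equality, the boundary inclusion, and the polyhedral dimension---by combining the algebraic definition of $R$ with the convex piecewise-linear structure of $U$ and $V$ recorded in Proposition~\ref{prop:convexPL}.

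For the first equality, I would unwind Definition~\ref{def:trop-rational}: since $R(x)=U(x)-V(x)$ as an ordinary real function, $R(x)\ge 0\iff U(x)\ge V(x)$ pointwise, so $\{R\ge 0\}=\{U\ge V\}$ is immediate. For the boundary containment, I would invoke continuity: $U$ and $V$ are continuous PL maps by Proposition~\ref{prop:convexPL}, hence $R$ is continuous, and consequently $\{R\ge 0\}$ is closed while $\{R>0\}$ is open. For $x\in\partial\{R\ge 0\}$, any sequence $x_k\to x$ with $R(x_k)<0$ gives $R(x)\le 0$ by continuity; combined with $R(x)\ge 0$, this yields $R(x)=0$, i.e., $U(x)=V(x)$.

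For the polyhedral structure, I would pass to the common refinement $\Sigma$ of the polyhedral subdivisions of $\mathbb{R}^n$ induced by the linearity cells of $U$ and of $V$; by construction, the $(n-1)$-skeleta of these two subdivisions are $\mathcal{T}(U)$ and $\mathcal{T}(V)$. On each top-dimensional cell $C$ of $\Sigma$ both $U|_C$ and $V|_C$ are single affine forms, so $R|_C$ is affine; hence $\{R=0\}\cap C$ is either empty, all of $C$, or a codimension-$1$ polyhedral slice along an affine hyperplane. Taking the union over the finitely many cells of $\Sigma$ produces a polyhedral complex, and the $(n-1)$-dimensional stratum of this complex houses $\partial\{R\ge 0\}$, giving it the required dimension.

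The main obstacle is the secondary inclusion $\{U=V\}\subset\mathcal{T}(U)\cup\mathcal{T}(V)$, which is not literally true: inside a common linearity cell $C$ the affine function $U-V$ may vanish on a hyperplane slice that is not part of either tropical curve. What one can honestly assert---and what suffices for the polyhedral-complex conclusion---is that $\partial\{R\ge 0\}$ is supported on the refined arrangement generated by $\mathcal{T}(U)\cup\mathcal{T}(V)$ together with the zero loci of the local affine pieces of $U-V$ on the common linearity cells; for coefficients in general position the extra slices fail to meet the interiors of common cells, and the cleaner containment of the proposition holds as stated.
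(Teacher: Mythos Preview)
The paper states Proposition~\ref{prop:zero-equality} without proof, so there is no argument to compare against; your write-up is essentially a complete treatment. Your handling of the set equality and of $\partial\{R\ge 0\}\subset\{U=V\}$ via continuity is correct, and the common-refinement argument for the polyhedral structure is the standard one.

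You are also right to flag the intermediate containment $\{U=V\}\subset\mathcal{T}(U)\cup\mathcal{T}(V)$ as the genuine weak point: this fails already for single monomials. Take $n=2$, $U(x,y)=x$ and $V(x,y)=0$; both are tropical polynomials with a single term, hence $\mathcal{T}(U)=\mathcal{T}(V)=\emptyset$, yet $\{U=V\}$ is the entire line $\{x=0\}$, which is exactly $\partial\{R\ge 0\}$. So even the weaker inclusion $\partial\{R\ge 0\}\subset\mathcal{T}(U)\cup\mathcal{T}(V)$ is false in general, not merely the stronger one. Your proposed repair---replacing $\mathcal{T}(U)\cup\mathcal{T}(V)$ by the refined arrangement generated by $\mathcal{T}(U)\cup\mathcal{T}(V)$ together with the zero sets of $U-V$ on common linearity cells, equivalently the $(n-1)$-skeleton of the common refinement $\Sigma$---is the correct fix, and it preserves the conclusion that the boundary is an $(n-1)$-dimensional polyhedral complex. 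An alternative repair, closer in spirit to the tropical language, is to observe that $\{U=V\}\subset\mathcal{T}(U\oplus V)$, since on that locus the maximum in $U\oplus V$ is attained by at least two terms; this single tropical hypersurface already carries the boundary.
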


\begin{proposition}[ReLU networks as tropical rational maps]\label{prop:relu-tropical}
Let $f:\mathbb{R}^n\to\mathbb{R}$ be the output of a feedforward network with ReLU activations. Then there exist tropical polynomials $U,V$ such that
\[
f(x)\;=\;U(x) \oslash V(x).
\]
Equivalently, $f$ is a difference of finitely many affine maxima, i.e.,
$f(x)=\big(\max_{i}\{a_i+\langle p_i,x\rangle\}\big)-\big(\max_{j}\{b_j+\langle q_j,x\rangle\}\big)$. Consequently, the linear regions of $f$ form a polyhedral complex, and any binary decision boundary $\{f=0\}$ is a tropical equality set $U=V$. \citep[see][]{ZhangNaitzatLim2018}
\end{proposition}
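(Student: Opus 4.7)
My plan is to proceed by induction on the number of layers (counting input-to-hidden compositions). The key algebraic facts that make everything work are: (a) the class of tropical polynomials (interpreted operationally as finite pointwise maxima of real affine functionals) is closed under pointwise maximum $\oplus$ and pointwise sum (the latter realizing tropical multiplication $\odot$), and (b) it is closed under scaling by non-negative scalars. As a consequence, the class of tropical rational functions $U \oslash V$ is closed under sums, non-negative scalar multiplication, and the $\relu$ activation, the last via the identity
\[
\relu\bigl(U(x)-V(x)\bigr) \;=\; \max\{U(x),V(x)\} - V(x) \;=\; (U\oplus V)(x) - V(x),
\]
which converts a post-ReLU term into a tropical rational with tropical-polynomial numerator and denominator.

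\paragraph{Inductive step.}
Let $z^{(\ell)}(x)\in\R^{d_\ell}$ denote the pre-activation at layer $\ell$, with $z^{(1)}(x)=W^{(1)}x+b^{(1)}$ affine. The induction hypothesis is that each coordinate admits a representation $z^{(\ell)}_i = U^{(\ell)}_i - V^{(\ell)}_i$ with $U^{(\ell)}_i,V^{(\ell)}_i$ tropical polynomials. The base case is immediate since any affine functional is a single-term tropical monomial (and $0$ is a tropical polynomial). For the step, first apply the ReLU identity coordinatewise to obtain $h^{(\ell)}_i = \relu(z^{(\ell)}_i) = A_i - B_i$ tropical rational. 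Next, form $z^{(\ell+1)} = W^{(\ell+1)} h^{(\ell)} + b^{(\ell+1)}$ by splitting weights as $w_{ij}=w_{ij}^+ - w_{ij}^-$ with $w_{ij}^{\pm}\ge 0$, yielding
\[
\sum_{j} w_{ij}\,h^{(\ell)}_j + b^{(\ell+1)}_i \;=\; \Bigl(b^{(\ell+1)}_i + \sum_{j}\bigl(w_{ij}^{+}A_j + w_{ij}^{-}B_j\bigr)\Bigr) \;-\; \Bigl(\sum_{j}\bigl(w_{ij}^{+}B_j + w_{ij}^{-}A_j\bigr)\Bigr),
\]
where the bias is absorbed into the numerator (or both sides) as a constant tropical monomial. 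Each parenthesized group is a non-negative scalar combination and sum of tropical polynomials, hence itself tropical polynomial by closure (a)--(b). Iterating through to the output layer gives $f = U - V$, i.e., $f = U\oslash V$.

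\paragraph{Consequences and main obstacle.}
Once this representation is established, the remaining claims follow quickly: by \Cref{prop:convexPL}, $U$ and $V$ are convex piecewise-linear with finitely many linear regions, so their difference is piecewise-linear on a common polyhedral refinement, and the binary decision boundary $\{f=0\}=\{U=V\}$ is contained in $\Trop(U)\cup\Trop(V)$, hence is a polyhedral complex. The main conceptual obstacle is the tension between the restricted \Cref{def:poly}/\Cref{def:sigmoidal} (exponents in $\mathbb{Z}_{\ge 0}^n$) and the real-valued weights of a generic ReLU network: formally, scaling a tropical polynomial by a non-integer positive real leaves the integer-exponent class. I will resolve this by adopting, throughout this proof, the operational convention used in \citet{ZhangNaitzatLim2018}, namely that a tropical polynomial is a finite maximum of real affine functionals; under this convention the closure properties (a)--(b) are elementary, and the rest of the argument is bookkeeping. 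A brief remark will flag this convention so that the reader can reconcile it with \Cref{def:poly}.
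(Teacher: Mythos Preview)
The paper does not supply its own proof of \Cref{prop:relu-tropical}; it is stated as a known result and delegated entirely to the citation \citep{ZhangNaitzatLim2018}. Your proposal reproduces precisely the standard layer-by-layer induction from that reference: split each weight as $w=w^{+}-w^{-}$ with $w^{\pm}\ge 0$, use the identity $\relu(U-V)=(U\oplus V)-V$ to pass the activation, and invoke closure of finite affine maxima under $\oplus$ and $\odot$. So your argument is correct and is exactly the one the paper implicitly defers to; there is nothing further to compare.

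Two minor remarks on the write-up. First, the containment $\{U=V\}\subset\Trop(U)\cup\Trop(V)$ that you quote (following \Cref{prop:zero-equality}) fails in general---take $U(x)=\langle e_1,x\rangle$ and $V(x)=0$, whose tropical hypersurfaces are empty while $\{U=V\}$ is a hyperplane---so you should justify the polyhedral-complex claim directly from the common refinement of the linearity cells of $U$ and $V$, which you already have from \Cref{prop:convexPL}. Second, the cross-reference to \Cref{def:sigmoidal} in your last paragraph is a slip (that definition concerns activations, not exponents); the integer-exponent tension you identify lives only in \Cref{def:poly}, and your resolution via the operational convention of \citet{ZhangNaitzatLim2018} is the right one.
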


Proposition~\ref{prop:relu-tropical} provides a polyhedral (tropical) model for ReLU networks. In contrast, the constructions developed later remain strictly within the sigmoidal, finite-sum UAT format, yet borrow the “boundary programming’’ intuition from the tropical viewpoint.

In later sections we compile a ball cover of a target planar region into the weights of a sigmoidal MLP. Although our use is constructive and analytic, unions of balls relate to the \emph{nerve} of a cover; under mild hypotheses, a space is homotopy-equivalent to the nerve of a good cover, providing a combinatorial summary of geometry \citep{Mohnhaupt2023Nerve}.

\section{A one–dimensional warm-up: least–squares construction}
\label{sec:oned}

In one dimension, we adopt the simplest possible realization that still adheres to the finite–sum UAT template: fix a set of shifts (centers) \(p_1,\dots,p_m\in\R\), choose a sigmoidal activation \(\sigma\), and solve a linear least–squares problem for the output weights. Unlike our 2D construction (Sections~\ref{sec:planar-convex}–\ref{sec:planar-general}), no tropical reasoning is needed here—everything reduces to a closed-form linear solve.

\paragraph{Model and design matrix.}
Given data \(\{(x_i,y_i)\}_{i=1}^N\) with \(x_i\in[a,b]\subset\R\) and \(y_i\in\R\), define
\[
\Phi(x)\;=\;\big[\ \sigma(x-p_1),\ \dots,\ \sigma(x-p_m),\ 1\ \big]^\top\in\R^{m+1},
\qquad
\Phi\;=\;\begin{bmatrix}\Phi(x_1)^\top\\ \vdots \\ \Phi(x_N)^\top\end{bmatrix}\in\R^{N\times(m+1)}.
\]
A single-hidden-layer predictor in the UAT finite-sum form is then
\(
\hat y(x)=\Phi(x)^\top \alpha
=\sum_{j=1}^m \alpha_j\,\sigma(x-p_j)+\alpha_{m+1}.
\)

\paragraph{Closed-form weights.}
With squared loss, the optimal output weights solve
\[
\min_{\alpha\in\R^{m+1}} \|\Phi\alpha-y\|_2^2
\quad\Longrightarrow\quad
\alpha^\star=(\Phi^\top\Phi)^{-1}\Phi^\top y,
\]
assuming \(\Phi\) has full column rank. This yields a network that fits the data without any gradient training and stays within the classical UAT format (finite linear combination of shifted/scaled sigmoids of affine inputs).

\paragraph{Binary classification outputs.}
For \(\R\!\to\!\{0,1\}\), one may (i) treat \(\hat y(x)\) as a probability and threshold at \(0.5\), or (ii) add a final logistic squashing \(p(x)=\sigma(\beta_1 \hat y(x)+\beta_0)\) and learn \((\beta_1,\beta_0)\) by 1D logistic regression if desired. In our demos we simply threshold \(\hat y\).

\begin{algorithm}[H]
\caption{Least–squares 1D initializer (training-free)}
\label{alg:1d-ls}
\begin{algorithmic}[1]
\Require Data \(\{(x_i,y_i)\}\); centers \(\{p_j\}_{j=1}^m\); activation \(\sigma\)
\State Build \(\Phi\in\R^{N\times(m+1)}\) with \(\Phi_{ij}=\sigma(x_i-p_j)\) and a final bias column of ones
\State Solve \(\alpha^\star=(\Phi^\top\Phi)^{-1}\Phi^\top y\)
\State Define \(\hat y(x)=\sum_{j=1}^m \alpha^\star_j\,\sigma(x-p_j)+\alpha^\star_{m+1}\)
\State \textbf{return} \(\alpha^\star\), \(\hat y(\cdot)\) \hfill (optional: threshold at \(0.5\))
\end{algorithmic}
\end{algorithm}

\paragraph{Choice of centers and sharpness.}
Placing centers \(\{p_j\}\) near anticipated breakpoints (e.g., interval endpoints) improves accuracy; more centers increase capacity. A larger sigmoid sharpness \(k\) yields sharper transitions at the cost of conditioning—see Fig.~\ref{fig:1d-grid}.

\begin{figure}[H]
\centering
\begin{subfigure}[t]{0.29\textwidth}
    \centering
    \includegraphics[width=\linewidth]{./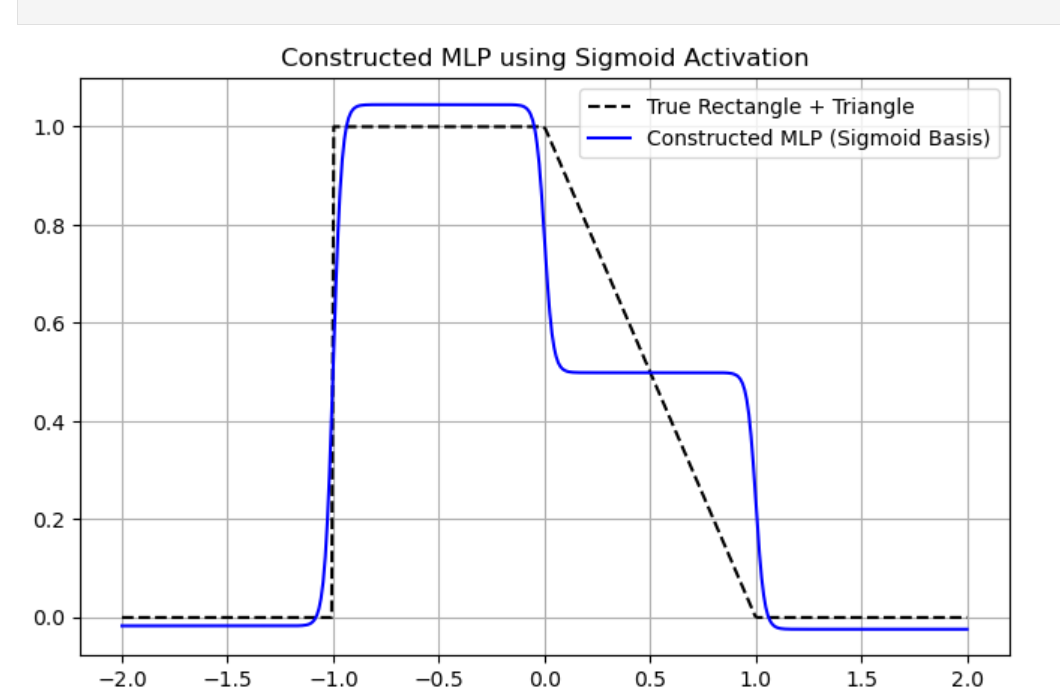}
    \caption*{$k=50$}
\end{subfigure}
\hfill
\begin{subfigure}[t]{0.29\textwidth}
    \centering
    \includegraphics[width=\linewidth]{./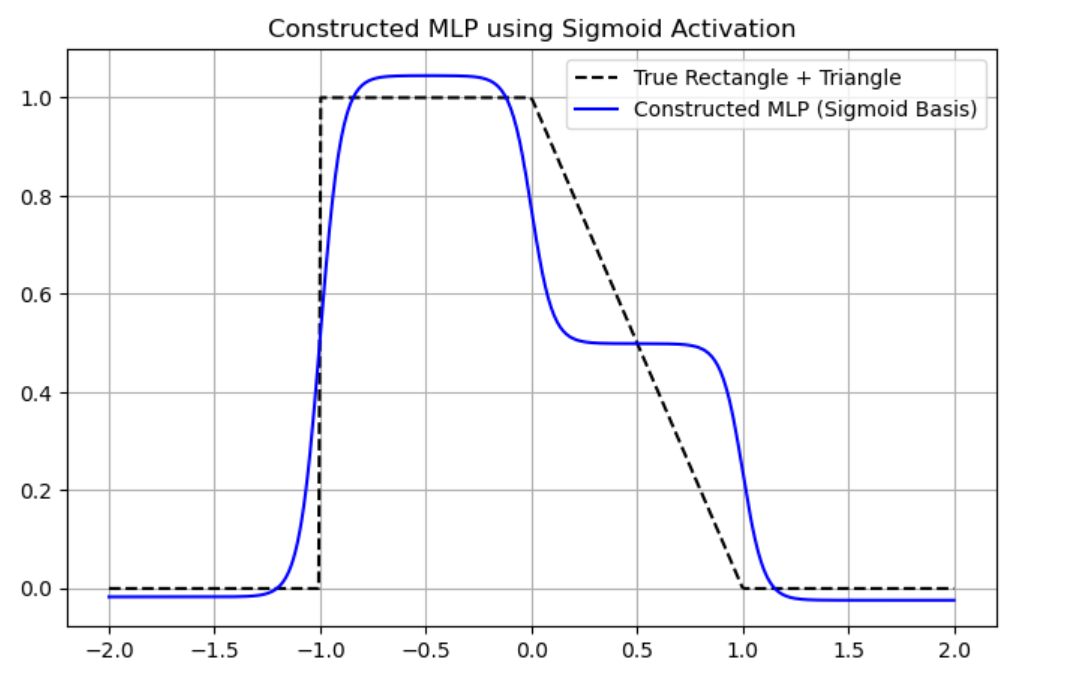}
    \caption*{$k=20$}
\end{subfigure}
\hfill
\begin{subfigure}[t]{0.29\textwidth}
    \centering
    \includegraphics[width=\linewidth]{./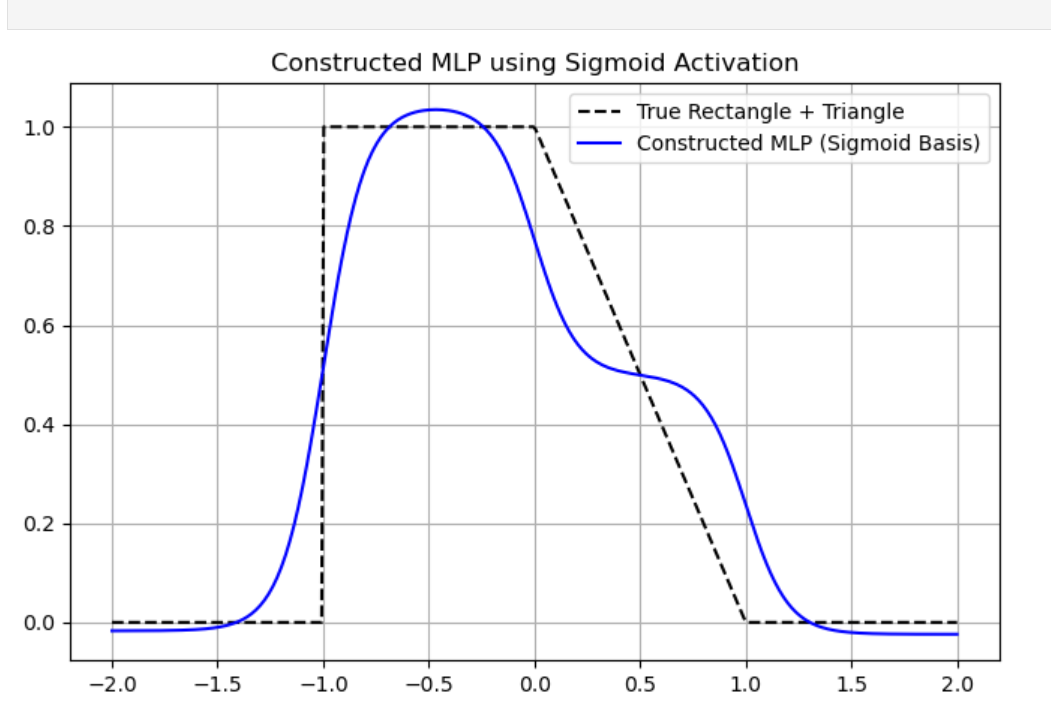}
    \caption*{$k=10$}
\end{subfigure}

\vspace{0.4em}

\begin{subfigure}[t]{0.29\textwidth}
    \centering
    \includegraphics[width=\linewidth]{./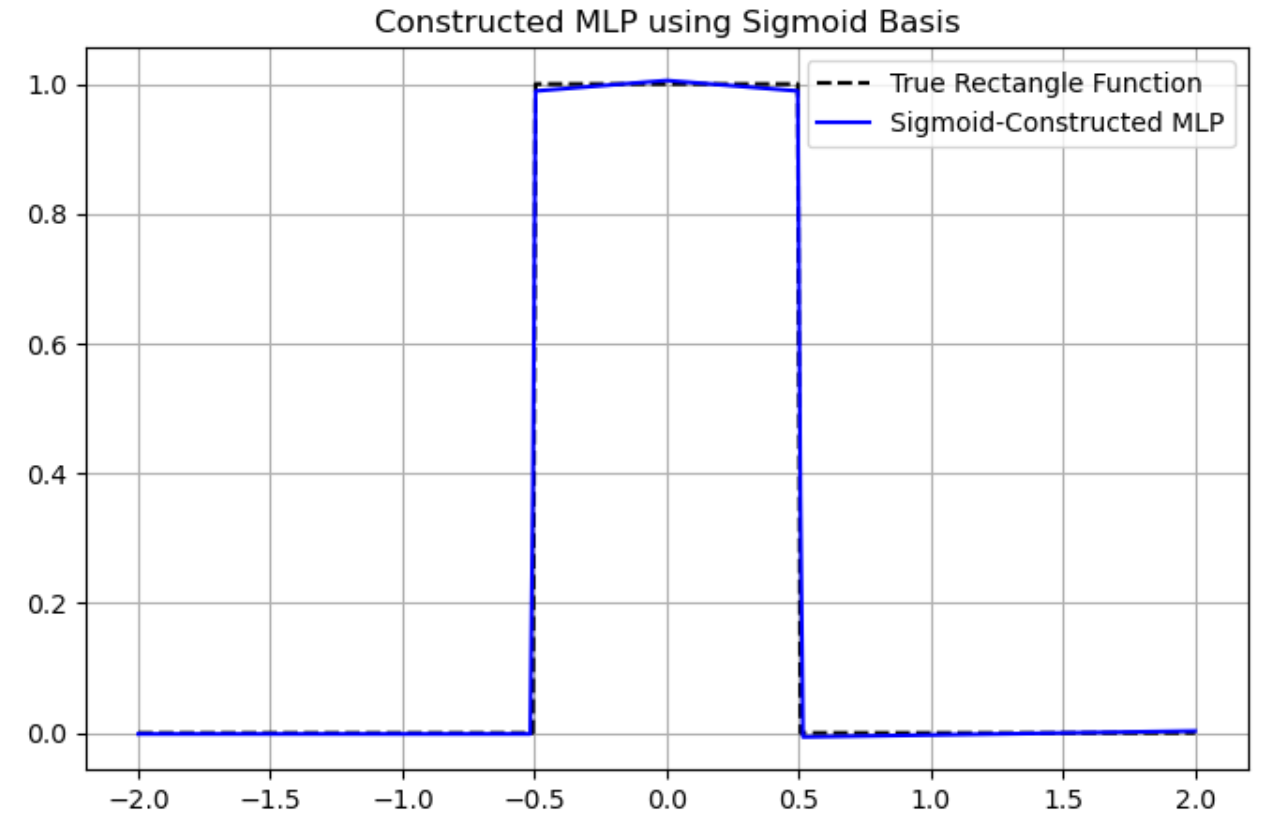}
    \caption*{p=\{-0.51,-0.5,0,0.5,0.51\}}
\end{subfigure}
\hfill
\begin{subfigure}[t]{0.29\textwidth}
    \centering
    \includegraphics[width=\linewidth]{./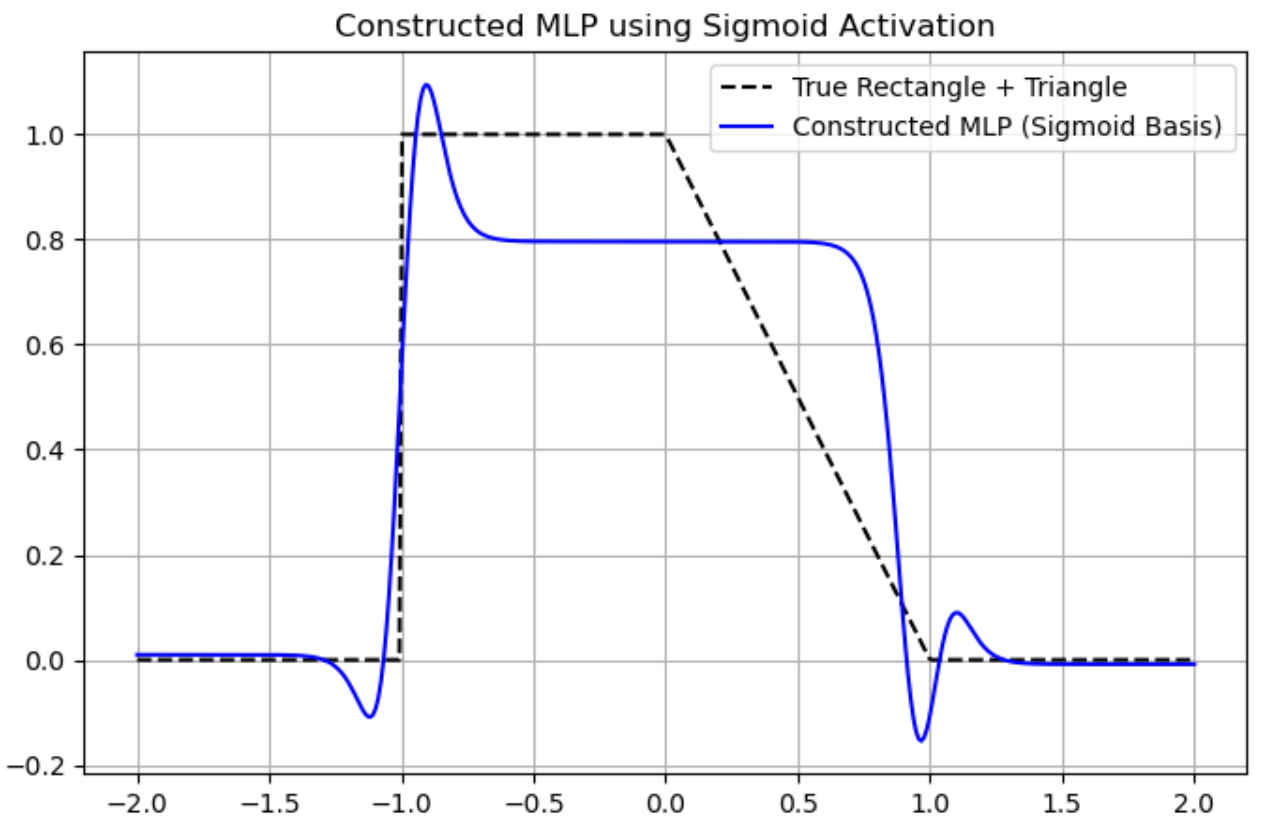}
    \caption*{p=\{-1.1,-1.0,-0.9,0.9,1.0,1.1\}}
\end{subfigure}
\hfill
\begin{subfigure}[t]{0.29\textwidth}
    \centering
    \includegraphics[width=\linewidth]{./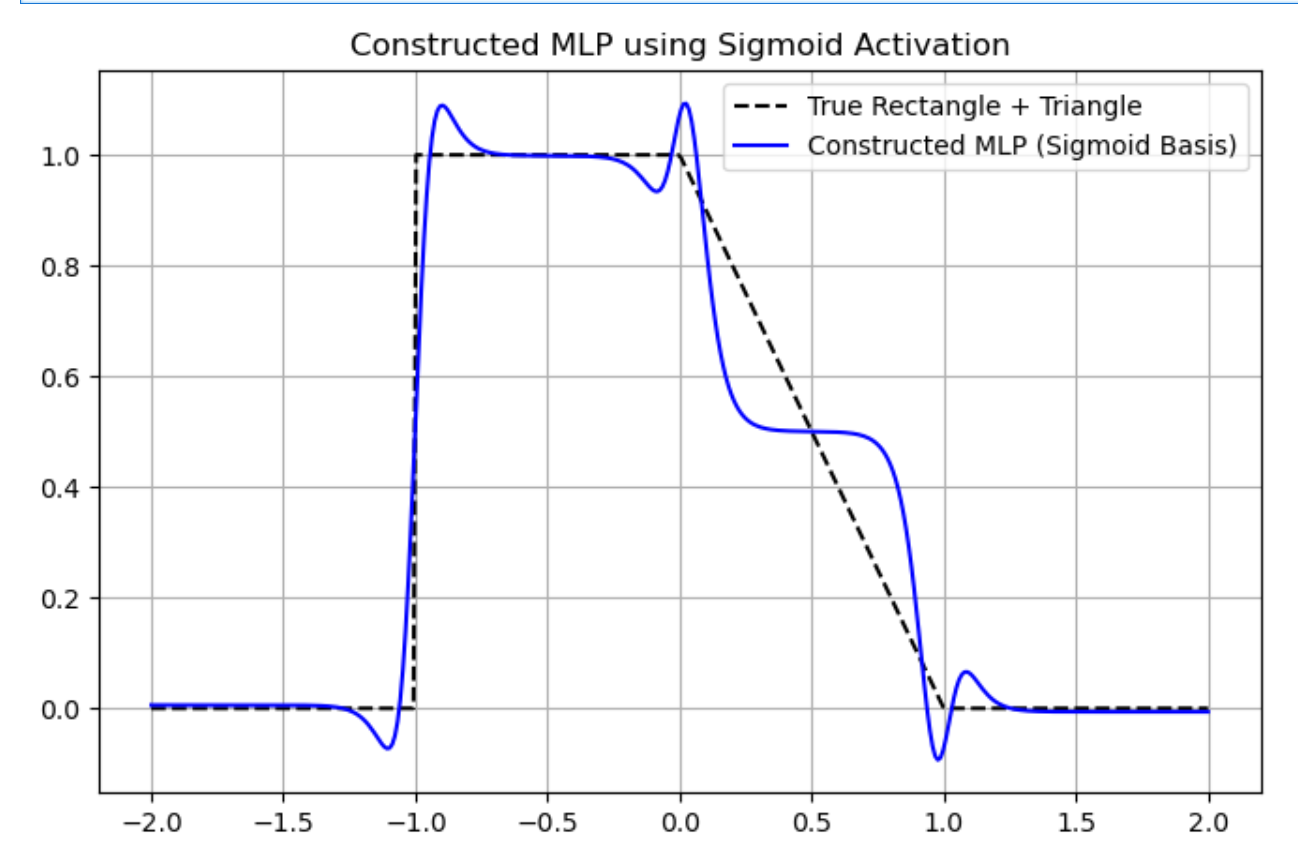}
    \caption*{p=\{-1.01,-1.0,-0.99,-0.01,0\\,0.01,0.99,1.0,1.01\}}
\end{subfigure}

\vspace{0.4em}

\begin{subfigure}[t]{0.29\textwidth}
  \centering\includegraphics[width=\linewidth]{./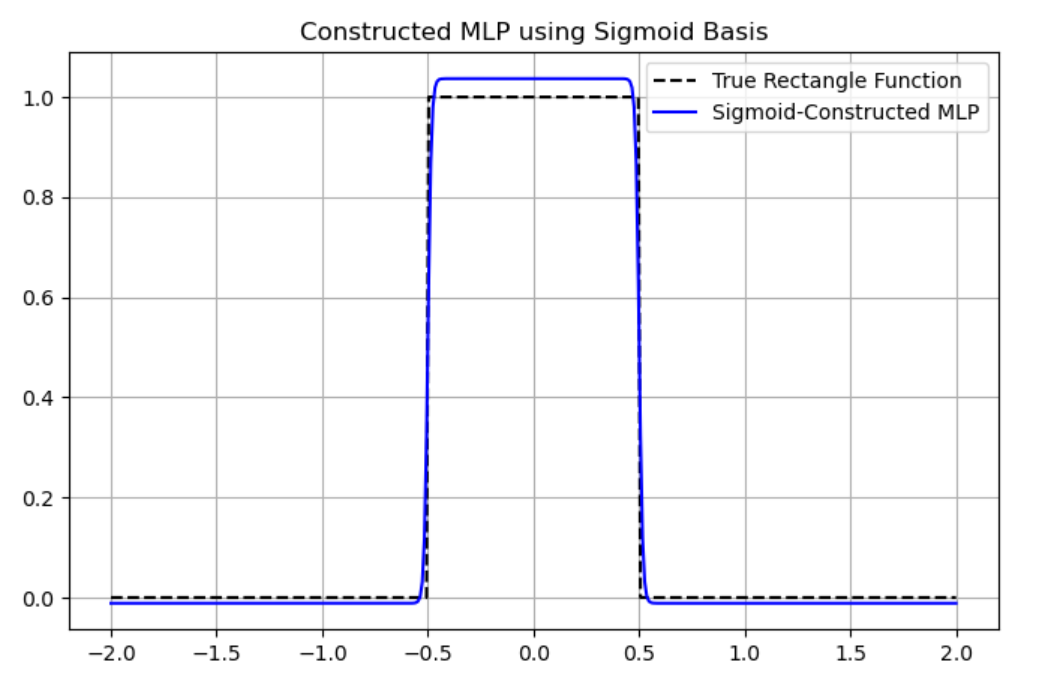}
  \caption*{Rectangle function with Sigmoid(k=120), the given set of points is $\{-0.5, 0.5\}$}
\end{subfigure}
\hfill
\begin{subfigure}[t]{0.29\textwidth}
  \centering\includegraphics[width=\linewidth]{./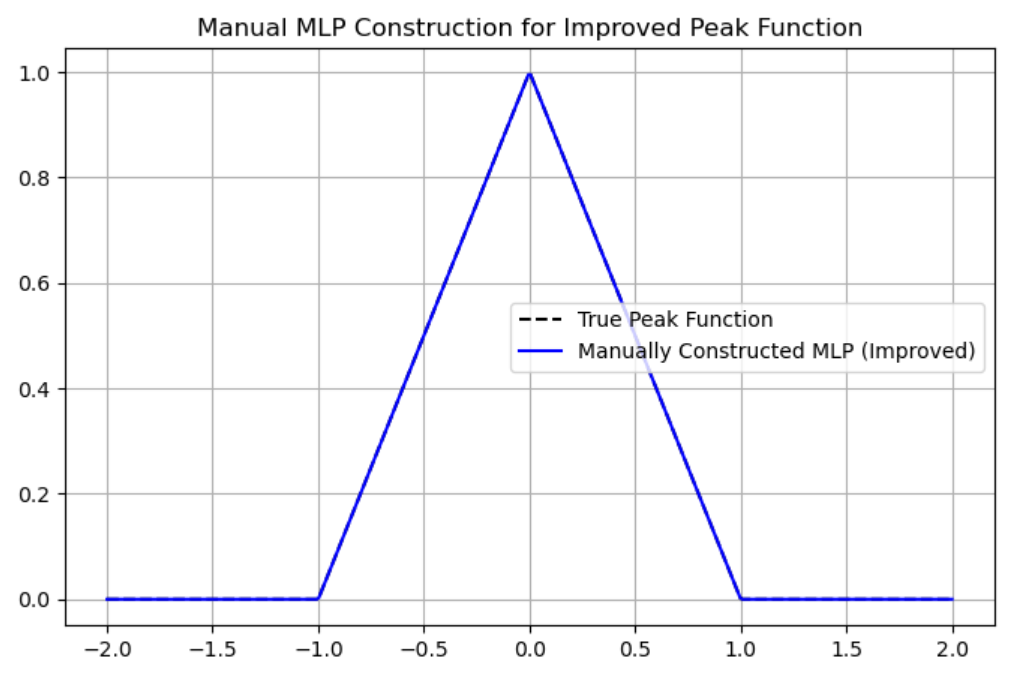}
  \caption*{Triangle function with ReLU, the given set of points is $\{-1, 0.5, 1\}$}
\end{subfigure}
\hfill
\begin{subfigure}[t]{0.29\textwidth}
  \centering\includegraphics[width=\linewidth]{./imgs/ex_trapezoid.png}
  \caption*{Trapezoid function with Sigmoid(k=20), the given set of points is $\{-1, 0.5, 1\}$}
\end{subfigure}

\caption{One–dimensional function fitting with the least–squares initializer.
Top row: varying sigmoid sharpness \(k\).
Middle row: varying center placements \(p_j\) with sigmoid(k=20).
Bottom row: different shapes reconstructed using algorithm \ref{alg:1d-ls} without training.}
\label{fig:1d-grid}
\end{figure}

\paragraph{Summary.}
The 1D case provides a minimal, training-free baseline: once centers are fixed, the network is obtained by a single linear solve. In Section~\ref{sec:planar-convex} we will abandon least-squares in favor of a geometry-aware compilation on the plane (ball covers and differentiable set operations), which is closer in spirit to tropical analyses of decision boundaries while remaining fully sigmoidal.

\section{Planar Convex Regions and Unions}\label{sec:planar-convex}

We now give a constructive, geometry–aware initializer for planar binary
classification \( \R^2\!\to\!\{0,1\} \) that stays within the classical
finite–sum sigmoidal format \eqref{eq:finite-sum}. Geometrically, we
approximate a convex target by supporting half–spaces and place steep sigmoid
“gates’’ on those supports; analytically, we \emph{count} the satisfied
constraints and threshold; tropically, we view the support stencil as a
tropical polynomial whose corner locus prescribes the desired boundary.

\paragraph{Tropical preface.}
Given outward unit normals \(u_\ell\in\mathbb{S}^1\) and supports
\(h_\ell\in\R\) (one per active side), define the tropical polynomial
\[
F(x)\;=\;\bigoplus_{\ell=1}^{m}\bigl(h_\ell \odot \langle -u_\ell,x\rangle\bigr)
\;=\;\max_{1\le \ell\le m}\bigl\{\,h_\ell-\langle u_\ell,x\rangle\,\bigr\}.
\]
Its tropical hypersurface \(\Trop(F)\) is orthogonal to the boundary edges of the
Newton polygon \(\delta(F)\) used by the coefficients, and the set
\(\{F(x)\ge 0\}\) equals the polytope cut out by the corresponding supports.
Our sigmoidal construction below is a smooth relaxation of this hard tropical inequality.


We first recall some basic concepts that are widely used in geometric analysis.

\begin{definition}[Unit circle \(\mathbb{S}^1\)]
\(\mathbb{S}^1:=\{\,u\in\mathbb{R}^2:\ \|u\|_2=1\,\}=\{(\cos\theta,\sin\theta):\theta\in[0,2\pi)\}\).
We write \(u_\ell\in\mathbb{S}^1\) for outward unit normals to supporting lines of a convex set.
\end{definition}

\begin{definition}[Support function and support values]\label{def:support-fn}
For a nonempty compact convex set \(C\subset\mathbb{R}^2\), its support function is
\[
h_C(u)\;:=\;\sup_{x\in C}\ \langle u,x\rangle,\qquad u\in\mathbb{R}^2.
\]
Given a unit outward normal \(u_\ell\in\mathbb{S}^1\), the \emph{support value} is
\(h_\ell:=h_C(u_\ell)=\sup_{x\in C}\langle u_\ell,x\rangle\).
The supporting line with outward normal \(u_\ell\) is \(\{x:\langle u_\ell,x\rangle=h_\ell\}\), and
the containing half-space is \(\{x:\langle u_\ell,x\rangle\le h_\ell\}\).
\end{definition}

\begin{definition}[Hausdorff distance]\label{def:hausdorff}
For nonempty compact \(A,B\subset\mathbb{R}^2\),
\[
d_{\mathrm{H}}(A,B)
=\max\!\Big\{\sup_{a\in A}\inf_{b\in B}\|a-b\|_2,\ \sup_{b\in B}\inf_{a\in A}\|a-b\|_2\Big\}.
\]
Equivalently, \(d_{\mathrm{H}}(A,B)=\inf\{\varepsilon\ge0:\ A\subseteq \mathcal{N}_\varepsilon(B),\ B\subseteq \mathcal{N}_\varepsilon(A)\}\),
where \(\mathcal{N}_\varepsilon(S)=\{x:\operatorname{dist}(x,S)\le\varepsilon\}\).
\end{definition}
Geometrically, the Hausdorff distance measures how close two sets are. In this paper, we will analyze our construction error with respect to this metric.
\paragraph{Tropical interpretation.}
Given outward normals \(\{u_\ell\}_{\ell=1}^m\) and supports \(\{h_\ell\}\),
the tropical polynomial
\[
F(x)\;=\;\bigoplus_{\ell=1}^{m}\bigl(h_\ell \odot  x^{-u_\ell}\bigr)
\;=\;\max_{1\le \ell\le m}\{\,h_\ell-\langle u_\ell,x\rangle\,\}
\]
has tropical hypersurface \(\Trop(F)\) dual to the subdivision of the Newton polygon \(\delta(F)\).
The hard set \(\{F(x)\ge 0\}\) equals the polytope cut out by the supports.

\paragraph{Polyhedral approximation and Gates.}
Let \(C\subset\mathbb{R}^2\) be compact and convex. Choose outward unit normals \(u_\ell\in\mathbb{S}^1\) and
support values \(h_\ell=h_C(u_\ell)\) so that the polytope
\begin{equation}\label{eq:Cpoly-planar}
  C_{\mathrm{poly}}
  \;:=\;
  \bigcap_{\ell=1}^{m}\, \bigl\{\,x\in\mathbb{R}^2:\ \langle u_\ell,x\rangle \le h_\ell\,\bigr\}
\end{equation}
approximates \(C\) to the desired accuracy in Hausdorff distance.
For a sharpness parameter \(\kappa>0\), define the \emph{half–space gates}
\begin{equation}\label{eq:gate-planar}
  s_\ell(x) \;=\; \sigma\!\big(\kappa\, (h_\ell - \langle u_\ell,x\rangle)\big),
  \qquad \ell=1,\dots,m ,
\end{equation}
so that \(s_\ell(x)\approx 1\) on \(\{\langle u_\ell,x\rangle\le h_\ell\}\) and \(s_\ell(x)\approx 0\) otherwise.

\begin{lemma}\label{lem:strip-planar}
Fix \(0<\eta<\tfrac12\) and let \(\sigma(t)=\frac{1}{1+e^{-t}}\) be the logistic sigmoid.
For a gate \(s_\ell(x)=\sigma\!\big(\kappa(h_\ell-\langle u_\ell,x\rangle)\big)\) with \(\kappa>0\),
define the signed distance to the supporting line by
\[
d_\ell(x)\;:=\;h_\ell-\langle u_\ell,x\rangle .
\]
Then the following equivalences hold:
\begin{align}
s_\ell(x)\ \ge\ 1-\eta
\quad &\Longleftrightarrow\quad
d_\ell(x)\ \ge\ w_\eta,
\label{eq:gate-high}\\[0.25em]
s_\ell(x)\ \le\ \eta
\quad &\Longleftrightarrow\quad
d_\ell(x)\ \le\ -\,w_\eta,
\label{eq:gate-low}\\[0.25em]
\eta \ \le\ s_\ell(x)\ \le\ 1-\eta
\quad &\Longleftrightarrow\quad
|d_\ell(x)|\ \le\ w_\eta,
\label{eq:gate-band}
\end{align}
where the half–width
\[
w_\eta\;:=\;\frac{1}{\kappa}\,\log\!\Big(\frac{1-\eta}{\eta}\Big)\,.
\]
Consequently, the \emph{uncertain band} (where the gate is not yet close to \(0\) or \(1\))
around the supporting line \(\langle u_\ell,x\rangle=h_\ell\) is the slab
\(\{x:\ |d_\ell(x)|\le w_\eta\}\) of geometric thickness
\[
2\,w_\eta \;=\; \frac{2}{\kappa}\,\log\!\Big(\frac{1-\eta}{\eta}\Big)\;=\;\Theta(\kappa^{-1}),
\]
with constants depending only on \(\eta\) (not on \(\kappa\)).
\end{lemma}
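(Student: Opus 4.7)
The plan is to exploit the fact that the logistic sigmoid $\sigma(t)=(1+e^{-t})^{-1}$ is a strict, smooth bijection $\mathbb{R}\to(0,1)$ with explicit inverse (the logit) $\sigma^{-1}(p)=\log\!\big(p/(1-p)\big)$, together with the reflection symmetry $\sigma(-t)=1-\sigma(t)$. Since the gate is the single-variable composition $s_\ell(x)=\sigma(\kappa\, d_\ell(x))$ with $d_\ell(x)=h_\ell-\langle u_\ell,x\rangle$ and $\kappa>0$, every inequality on $s_\ell$ pulls back to an equivalent inequality on $d_\ell$ by applying $\sigma^{-1}$ and dividing by $\kappa$. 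This turns all three claimed equivalences into one-line identities once the logit of $1-\eta$ is computed.

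First I would set $t:=\kappa\, d_\ell(x)$, so that $s_\ell(x)=\sigma(t)$. For \eqref{eq:gate-high}, strict monotonicity gives $\sigma(t)\ge 1-\eta \Longleftrightarrow t\ge \sigma^{-1}(1-\eta)$, and a direct computation yields $\sigma^{-1}(1-\eta)=\log\!\big((1-\eta)/\eta\big)=\kappa\, w_\eta$; dividing by $\kappa>0$ produces exactly $d_\ell(x)\ge w_\eta$. For \eqref{eq:gate-low}, I would use $\sigma(t)\le \eta \Longleftrightarrow \sigma(-t)\ge 1-\eta$ (by the reflection symmetry), and apply the previous step to get $-t\ge \kappa\, w_\eta$, i.e.\ $d_\ell(x)\le -w_\eta$. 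The band equivalence \eqref{eq:gate-band} is then the logical negation of the previous two: $\eta\le s_\ell(x)\le 1-\eta$ fails exactly when one of \eqref{eq:gate-high} or \eqref{eq:gate-low} fails, which gives $|d_\ell(x)|\le w_\eta$.

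Finally, the thickness statement is immediate from the definition of $w_\eta$: the slab $\{x:|d_\ell(x)|\le w_\eta\}$ has Euclidean width $2w_\eta$ because $u_\ell$ is a unit vector, so $d_\ell$ is the signed distance to the supporting hyperplane; substituting $w_\eta=\kappa^{-1}\log((1-\eta)/\eta)$ gives both the closed form $2w_\eta=(2/\kappa)\log((1-\eta)/\eta)$ and the asymptotic $\Theta(\kappa^{-1})$ with a constant depending only on $\eta$. There is no genuine obstacle here; the only point requiring care is the sign bookkeeping when passing from $s_\ell\le\eta$ to $s_\ell\ge 1-\eta$ via $\sigma(-t)=1-\sigma(t)$, and the observation that $\|u_\ell\|_2=1$ is what makes $d_\ell$ a true signed distance so that $2w_\eta$ is the geometric (not just algebraic) thickness.
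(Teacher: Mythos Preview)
Your proof is correct and follows essentially the same approach as the paper: both set $t=\kappa\,d_\ell(x)$, use the strict monotonicity of $\sigma$ together with the explicit logit $\sigma^{-1}(y)=\log\!\big(\tfrac{y}{1-y}\big)$ to convert each inequality on $s_\ell$ into one on $d_\ell$, then combine the two one-sided bounds for the band and read off the slab width. Your appeal to the reflection symmetry $\sigma(-t)=1-\sigma(t)$ for \eqref{eq:gate-low} is a cosmetic variant of the paper's direct computation $\sigma^{-1}(\eta)=-\log\!\big(\tfrac{1-\eta}{\eta}\big)$.
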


\begin{proof}
Set \(z:=\kappa\,d_\ell(x)\). Since \(\sigma\) is strictly increasing,
inequalities on \(s_\ell(x)=\sigma(z)\) are equivalent to inequalities on \(z\).
We use the explicit inverse \(\sigma^{-1}(y)=\log\!\big(\frac{y}{1-y}\big)\).

\smallskip\noindent
\emph{(i) High-confidence interior (\(\ge 1-\eta\)).}
\[
s_\ell(x)\ge 1-\eta
\ \Longleftrightarrow\
z\ge \sigma^{-1}(1-\eta)
=\log\!\Big(\frac{1-\eta}{\eta}\Big)
\ \Longleftrightarrow\
\kappa\,d_\ell(x)\ \ge\ \log\!\Big(\frac{1-\eta}{\eta}\Big),
\]
which gives \eqref{eq:gate-high} after dividing by \(\kappa>0\).

\smallskip\noindent
\emph{(ii) High-confidence exterior (\(\le \eta\)).}
\[
s_\ell(x)\le \eta
\ \Longleftrightarrow\
z\le \sigma^{-1}(\eta)
=\log\!\Big(\frac{\eta}{1-\eta}\Big)
=-\,\log\!\Big(\frac{1-\eta}{\eta}\Big)
\ \Longleftrightarrow\
\kappa\,d_\ell(x)\ \le\ -\,\log\!\Big(\frac{1-\eta}{\eta}\Big),
\]
which gives \eqref{eq:gate-low}.

\smallskip\noindent
\emph{(iii) Transition (uncertain) band.}
Combining (i)–(ii),
\[
\eta \le s_\ell(x)\le 1-\eta
\ \Longleftrightarrow\
-\,\log\!\Big(\frac{1-\eta}{\eta}\Big)\ \le\ \kappa\,d_\ell(x)\ \le\
\log\!\Big(\frac{1-\eta}{\eta}\Big),
\]
equivalently \(|d_\ell(x)|\le w_\eta\), which is \eqref{eq:gate-band}.

\smallskip
The set \(\{x:\ |d_\ell(x)|\le w_\eta\}\) is a slab of half–width \(w_\eta\) around the line
\(\langle u_\ell,x\rangle=h_\ell\). Its geometric thickness is therefore \(2w_\eta\).
Since \(\log\!\big(\frac{1-\eta}{\eta}\big)\) is a positive constant for fixed \(\eta\in(0,\tfrac12)\),
we have \(2w_\eta=\Theta(\kappa^{-1})\) as \(\kappa\to\infty\).
\end{proof}

\begin{remark}[Other sigmoidal activations]
If \(\sigma\) is any strictly increasing sigmoidal activation with continuous inverse on \((0,1)\),
the same proof yields
\[
\eta \le s_\ell(x)\le 1-\eta
\ \Longleftrightarrow\
|d_\ell(x)|\ \le\ \frac{1}{\kappa}\,
\max\!\Big\{\sigma^{-1}(1-\eta),\ -\sigma^{-1}(\eta)\Big\},
\]
so the band thickness remains \(\Theta(\kappa^{-1})\) with \(\eta\)-dependent constants.
\end{remark}

\begin{remark}[Roles of \(\eta\) and \(\kappa\)]
The parameter \(\eta\in(0,\tfrac12)\) is a \emph{confidence tolerance}: a gate
\(s_\ell(x)=\sigma(\kappa(h_\ell-\langle u_\ell,x\rangle))\) is \emph{confidently inside}
when \(s_\ell(x)\ge 1-\eta\) and \emph{confidently outside} when \(s_\ell(x)\le\eta\).
This choice fixes the constant
\(\log\!\big(\frac{1-\eta}{\eta}\big)\) and thus the half–width
\(w_\eta=\kappa^{-1}\log\!\big(\frac{1-\eta}{\eta}\big)\) of the ambiguous strip around each
supporting line. In contrast, \(\kappa>0\) is a \emph{sharpness} parameter: increasing
\(\kappa\) shrinks the physical band thickness \(2w_\eta=\Theta(\kappa^{-1})\), concentrating
any classification error into a narrower neighborhood of the boundary. Practically, one may fix a
small \(\eta\) (e.g.\ \(10^{-2}\!\sim\!10^{-1}\)) to interpret the numerical limit to 0/1, then tune \(\kappa\) to
achieve the desired geometric band width. \emph{Equivalently, without the definition of \(\eta\) explicitly, one may target a desired saturation \(s_\ell(x)\ge 1-\epsilon\) at distance \(t\) from the boundary and choose }\(\displaystyle \kappa\ge \frac{1}{t}\log\!\frac{1-\epsilon}{\epsilon}\).
\end{remark}

\subsection*{Binary classifier on convex sets}

Define the \emph{count function}
\[
  F_{m,\kappa}(x)\;=\;\sum_{\ell=1}^m s_\ell(x)
  \;=\;\sum_{\ell=1}^m \sigma\!\big(\kappa\, (h_\ell - \langle u_\ell,x\rangle)\big),
\]
and the baseline threshold \(\tau_m := m-\tfrac12\).
More generally, writing \(\tau=m-\delta\) with \(\delta\in(0,1)\), correct separation is
guaranteed whenever
\[
  \boxed{\, m\eta \;<\; \delta \;<\; 1-\eta \,}.
\]
Intuitively, a larger \(\delta\) (stricter threshold, \(\tau\) closer to \(m\)) demands a smaller
\(\eta\) so interior gates are sufficiently close to \(1\); a smaller \(\delta\) (looser threshold)
can admit points with an unsatisfied facet unless \(\eta\) is also reduced (both effects can be achieved by increasing \(\kappa\)). In this paper we fix \(\tau_m=m-\tfrac12\) (i.e., \(\delta=\tfrac12\)),
which is valid for any \(\eta<\tfrac{1}{2m}\); thereafter \(\kappa\) is chosen so that
\(w_\eta=\kappa^{-1}\log\!\big(\tfrac{1-\eta}{\eta}\big)\) meets the desired geometric tolerance.

\begin{theorem}[Single-layer MLP classifier on convex sets]\label{thm:convex-planar}
Let \(C\subset\mathbb{R}^2\) be compact and convex and let \(K\supset C\) be compact.
For every \(\varepsilon>0\) there exist \(m,\kappa\) such that the decision set
\[
  \Omega_{m,\kappa}\ :=\ \{x\in K:\ F_{m,\kappa}(x)\ge \tau_m\}
\]
satisfies \(d_{\mathrm{H}}(\Omega_{m,\kappa},C) \le \varepsilon\).
Moreover, \(F_{m,\kappa}\) is exactly of the finite–sum form \eqref{eq:finite-sum} with
weights \(w_\ell=-\kappa u_\ell\), biases \(\theta_\ell=\kappa h_\ell\), output weights \(\alpha_\ell=1\) (\(\ell=1,\dots,m\)).
\end{theorem}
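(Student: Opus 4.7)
The plan is to split the error budget $\varepsilon$ into a \emph{polytope approximation} error and a \emph{smoothing} error, and to handle each separately. First I would choose outward unit normals $\{u_\ell\}_{\ell=1}^m\subset\mathbb{S}^1$ dense enough on the circle so that the supporting half–space intersection $C_{\mathrm{poly}}$ from \eqref{eq:Cpoly-planar} (with $h_\ell=h_C(u_\ell)$) satisfies $d_{\mathrm{H}}(C_{\mathrm{poly}},C)\le \varepsilon/2$. This is the classical outer polytope approximation for compact convex bodies: continuity of the support function together with compactness of $\mathbb{S}^1$ guarantees that a sufficiently fine angular net yields any prescribed Hausdorff tolerance, with $m$ depending only on $\varepsilon$ and $C$.

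Next I would analyze $\Omega_{m,\kappa}$ in two sandwiched inclusions, relative to $C_{\mathrm{poly}}$. For the \emph{outer} containment $\Omega_{m,\kappa}\subseteq C_{\mathrm{poly}}$: if $F_{m,\kappa}(y)\ge \tau_m=m-\tfrac12$, then $\sum_\ell(1-s_\ell(y))\le \tfrac12$ forces each individual deficit $1-s_\ell(y)\le \tfrac12$, so $s_\ell(y)\ge \tfrac12=\sigma(0)$, and strict monotonicity of $\sigma$ gives $d_\ell(y)=h_\ell-\langle u_\ell,y\rangle\ge 0$ for every $\ell$, i.e.\ $y\in C_{\mathrm{poly}}$. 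For the \emph{inner} containment, pick a confidence tolerance $\eta<\tfrac{1}{2m}$; then any $x$ in the $w_\eta$-erosion $C_{\mathrm{poly}}^-:=\{x:\,d_\ell(x)\ge w_\eta\ \forall\ell\}$ satisfies $s_\ell(x)\ge 1-\eta$ by Lemma~\ref{lem:strip-planar}, hence $F_{m,\kappa}(x)\ge m(1-\eta)>m-\tfrac12=\tau_m$, so $x\in\Omega_{m,\kappa}$. Combining, $C_{\mathrm{poly}}^-\subseteq \Omega_{m,\kappa}\subseteq C_{\mathrm{poly}}$, which immediately yields $d_{\mathrm H}(\Omega_{m,\kappa},C_{\mathrm{poly}})\le d_{\mathrm H}(C_{\mathrm{poly}}^-,C_{\mathrm{poly}})$.

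Then I would choose the sharpness: since $w_\eta=\kappa^{-1}\log((1-\eta)/\eta)$, for any fixed $\eta<\tfrac{1}{2m}$ I can take $\kappa$ large so that $d_{\mathrm{H}}(C_{\mathrm{poly}}^-,C_{\mathrm{poly}})\le \varepsilon/2$; for a convex body with nonempty interior, the $t$-erosion converges to the set in Hausdorff distance as $t\to 0^+$ (a standard Minkowski/Steiner continuity fact applied to a bounded convex polytope). The triangle inequality for $d_{\mathrm H}$ then gives
\[
d_{\mathrm{H}}(\Omega_{m,\kappa},C)\;\le\;d_{\mathrm{H}}(\Omega_{m,\kappa},C_{\mathrm{poly}})+d_{\mathrm{H}}(C_{\mathrm{poly}},C)\;\le\;\tfrac{\varepsilon}{2}+\tfrac{\varepsilon}{2}=\varepsilon.
\]
The finite–sum form is a direct rewrite: $\sigma(\kappa(h_\ell-\langle u_\ell,x\rangle))=\sigma(\langle -\kappa u_\ell,x\rangle+\kappa h_\ell)$, identifying $w_\ell=-\kappa u_\ell$, $\theta_\ell=\kappa h_\ell$, $\alpha_\ell=1$.

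The main obstacle I expect is the degenerate case in which $C$ has empty interior (e.g.\ a segment) or very small inradius relative to $w_\eta$, so that $C_{\mathrm{poly}}^-$ collapses and the inner containment gives no Hausdorff control. I would handle this either by restricting to convex bodies (imposing nonempty interior, a mild assumption consistent with the planar binary classification setting) or, more robustly, by first inflating $C$ by a small amount $\delta=O(\varepsilon)$ to $C_\delta:=C+\bar B_\delta$ (still compact convex, with inradius at least $\delta$), constructing the polytope and gates for $C_\delta$, and absorbing the extra $\delta$ into the error budget $\varepsilon$. The only other bookkeeping subtlety is making sure the choices are made in the right order: first $m$ (fixing the polytope), then $\eta<1/(2m)$ (calibrating gate confidence to the threshold $\tau_m$), and finally $\kappa$ (shrinking the transition band below the remaining geometric tolerance); all three choices are finite and explicit, so the theorem follows constructively.
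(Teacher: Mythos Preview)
Your proof is correct and follows essentially the same approach as the paper: polytope approximation to within $\varepsilon/2$, the strip lemma (Lemma~\ref{lem:strip-planar}) to control gates on the $w_\eta$-erosion, and the triangle inequality to conclude. Your exact outer inclusion $\Omega_{m,\kappa}\subseteq C_{\mathrm{poly}}$ (via the observation that $\sum_\ell(1-s_\ell)\le\tfrac12$ forces each $s_\ell\ge\tfrac12$) is a clean sharpening over the paper's two-sided band argument, and your explicit ordering of choices ($m$, then $\eta<1/(2m)$, then $\kappa$) together with the discussion of the empty-interior edge case is more careful than the paper's rather terse proof.
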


\begin{proof}
Pick \(C_{\mathrm{poly}}\) as in \eqref{eq:Cpoly-planar} with \(d_{\mathrm{H}}(C_{\mathrm{poly}},C)\le \varepsilon/2\).
Fix \(0<\eta<\tfrac12\). By Lemma~\ref{lem:strip-planar}, choose \(\kappa\) large so that the union of all uncertainty bands
has Hausdorff thickness \(<\varepsilon/2\).
Then: (i) if \(x\) lies in the interior of \(C_{\mathrm{poly}}\), all \(m\) inequalities hold and
\(F_{m,\kappa}(x)\ge m(1-\eta)>\tau_m\);
(ii) if \(x\notin C_{\mathrm{poly}}\), at least one inequality fails and
\(F_{m,\kappa}(x)\le (m-1)+\eta<\tau_m\).
Thus errors can only occur inside the thin bands, giving \(d_{\mathrm{H}}(\Omega_{m,\kappa},C)\le \varepsilon\).
The finite–sum representation follows from \eqref{eq:gate-planar}.
\end{proof}

\begin{remark}
The hard tropical classifier \(\1\{F(x)\ge 0\}\) with the tropical polynomial
\(F=\bigoplus_{\ell}(h_\ell\odot \langle -u_\ell,x\rangle)=\max_\ell\{h_\ell-\langle u_\ell,x\rangle\}\)
cuts out \(C_{\rm poly}\). Replacing the hard max by the soft transition gates \eqref{eq:gate-planar} and counting satisfied
constraints yields a smooth approximation of the same set. The tropical curve \(\Trop(F)\) prescribes the
piecewise-linear boundary and is dual to the subdivision of the Newton polygon \(\delta(F)\).
\end{remark}

\subsection*{Finite unions of convex sets}

Let \(C=\bigcup_{r=1}^{R} C_r\) with each \(C_r\subset\R^2\) compact and convex.
For component \(r\), fix a supporting half–space description as in
\eqref{eq:Cpoly-planar} with outward unit normals \(\{u_{r,\ell}\}_{\ell=1}^{m_r}\)
and support values \(h_{r,\ell}\in\R\).
Define the \emph{half–space gates} (for a sharpness \(\kappa>0\))
\begin{equation}\label{eq:gate-union}
  s_{r,\ell}(x)\;:=\;\sigma\!\big(\kappa\,(h_{r,\ell}-\langle u_{r,\ell},x\rangle)\big),
  \qquad \ell=1,\dots,m_r,\ \ r=1,\dots,R.
\end{equation}
Let \(0<\eta<\tfrac12\) be fixed; by Lemma~\ref{lem:strip-planar},
each gate has an uncertainty band of thickness \(\Theta(\kappa^{-1})\).
For component \(r\), define the \emph{centered component score}
\begin{equation}\label{eq:component-score}
  J_r(x)\;:=\;\sum_{\ell=1}^{m_r} s_{r,\ell}(x)\;-\;\Bigl(m_r-\tfrac12\Bigr).
\end{equation}
Inside \(C_r\), all constraints are satisfied so \(J_r(x)\) is positive; outside
\(C_r\), at least one constraint is violated and \(J_r(x)\) is negative.

\begin{lemma}\label{lem:margin-one}
Fix \(r\) and \(0<\eta<\tfrac12\). There exists \(\kappa_0\) such that for all
\(\kappa\ge \kappa_0\):
\begin{enumerate}[label=\textnormal{(\alph*)}]
\item If \(x\in \mathrm{int}(C_r)\) and \(\mathrm{dist}(x,\partial C_r)
      \ge t_\eta:=\kappa^{-1}\log\!\big(\tfrac{1-\eta}{\eta}\big)\),
      then \(s_{r,\ell}(x)\ge 1-\eta\) for all \(\ell\) and
      \[
         J_r(x)\ \ge\ m_r(1-\eta) - \Bigl(m_r-\tfrac12\Bigr)
         \;=\; \tfrac12 - m_r\eta.
      \]
\item If \(x\notin C_r\) and \(\mathrm{dist}(x,\partial C_r)\ge t_\eta\),
      then at least one gate satisfies \(s_{r,\ell^\star}(x)\le \eta\), while all
      others are \(\le 1\), hence
      \[
         J_r(x)\ \le\ (m_r-1)+\eta - \Bigl(m_r-\tfrac12\Bigr)
         \;=\; -\tfrac12+\eta.
      \]
\end{enumerate}
\end{lemma}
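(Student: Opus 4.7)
The plan is to reduce the lemma to the single-gate strip bound of \Cref{lem:strip-planar}: recall the signed distance
\[
d_{r,\ell}(x)\;:=\;h_{r,\ell}-\langle u_{r,\ell},x\rangle,
\]
and note that the threshold $t_\eta$ in the hypothesis coincides with the strip half-width $w_\eta$ appearing in \Cref{lem:strip-planar}. In each case it then suffices to identify which facet gates saturate at level $1-\eta$ or $\eta$, and then add up the resulting inequalities against the centering constant $m_r-\tfrac12$.

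For part (a), the key geometric fact I would invoke is the interior identity
\[
\mathrm{dist}(x,\partial C_r)\;=\;\min_{\ell}\,d_{r,\ell}(x),
\]
valid because $C_r^c=\bigcup_\ell\{y:\langle u_{r,\ell},y\rangle>h_{r,\ell}\}$, and for an interior point the distance to this open cover equals the minimum perpendicular distance to the supporting hyperplanes, which in turn equals the distance to $\partial C_r$. The hypothesis then forces $d_{r,\ell}(x)\ge t_\eta=w_\eta$ for every $\ell$, and \eqref{eq:gate-high} of \Cref{lem:strip-planar} yields $s_{r,\ell}(x)\ge 1-\eta$ uniformly. Summing and subtracting $m_r-\tfrac12$ produces the claimed bound $J_r(x)\ge \tfrac12-m_r\eta$, uniformly in $\kappa$ (so any $\kappa_0>0$ works for (a)).

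For part (b), $x\notin C_r$ guarantees at least one violated facet. The natural witness $\ell^\star$ is the facet whose relative interior contains the orthogonal projection $p:=\mathrm{proj}_{C_r}(x)$: in that case $x-p=\mathrm{dist}(x,\partial C_r)\,u_{r,\ell^\star}$, so $d_{r,\ell^\star}(x)=-\mathrm{dist}(x,\partial C_r)\le -t_\eta$, and \eqref{eq:gate-low} of \Cref{lem:strip-planar} yields $s_{r,\ell^\star}(x)\le\eta$. Bounding the remaining $m_r-1$ gates trivially by $1$ and summing delivers $J_r(x)\le -\tfrac12+\eta$, as required.

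The main obstacle is the subcase of (b) in which the projection $p$ is a vertex of $C_r$ rather than lying in the relative interior of an edge: then $x$ sits inside the outward normal cone of that vertex, and the perpendicular distance from $x$ to either adjacent facet hyperplane is $\|x-p\|\cos\theta$ for some $\theta\in(0,\pi/2)$, strictly smaller than $\|x-p\|=\mathrm{dist}(x,\partial C_r)$. The raw Euclidean hypothesis alone therefore does not by itself force $-d_{r,\ell^\star}(x)\ge t_\eta$ for any single adjacent facet. I would resolve this either by (i) tightening the hypothesis to the facet-metric $\max_\ell(-d_{r,\ell}(x))_+\ge t_\eta$, which is precisely the quantity the sigmoid gates see, or (ii) absorbing a geometry-dependent factor $\cos\theta_{\min}(C_r)^{-1}$ into $\kappa_0$, where $\theta_{\min}$ reflects the sharpest exterior angle of $C_r$. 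Option (i) keeps the statement uniform across components and aligns most cleanly with the ``soft tropical inequality'' reading used in \Cref{thm:convex-planar}, so that is the formulation I would adopt when writing out the proof in full.
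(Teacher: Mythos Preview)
Your reduction to \Cref{lem:strip-planar} is exactly the paper's approach: the paper's proof is a two-sentence appeal to that lemma, asserting that interior points satisfy all inequalities (hence (a)) and exterior points violate at least one (hence (b)). Your treatment of (a) via $\mathrm{dist}(x,\partial C_r)=\min_\ell d_{r,\ell}(x)$ is the correct fleshing-out of that sentence.

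Where you go further than the paper is precisely the vertex normal-cone obstruction in (b), and you are right that the paper's proof glosses over it. The paper writes only ``at least one inequality is violated,'' which gives $d_{r,\ell^\star}(x)<0$ but not $d_{r,\ell^\star}(x)\le -t_\eta$; your square-type counterexample (project to a vertex, two adjacent facets each at perpendicular distance $\mathrm{dist}(x,\partial C_r)\cos\theta<t_\eta$) shows that the specific intermediate claim $s_{r,\ell^\star}(x)\le\eta$ can genuinely fail under the Euclidean hypothesis as stated.

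One caution on your proposed fix (ii): merely enlarging $\kappa_0$ cannot repair the claim, because the defect is scale-invariant. For every $\kappa$ the point at Euclidean distance exactly $t_\eta(\kappa)$ along the bisector of a vertex normal cone still has $\max_\ell(-d_{r,\ell}(x))=t_\eta\cos\theta<t_\eta$. So ``absorbing $\cos\theta_{\min}^{-1}$ into $\kappa_0$'' only works if you mean inflating the \emph{distance threshold} in the hypothesis by that factor, which is really a variant of option (i). Option (i) --- replacing the Euclidean distance by the facet metric $\max_\ell(-d_{r,\ell}(x))_+$ --- is the clean fix and is harmless for the downstream Hausdorff arguments, since the set where the two metrics disagree by more than a fixed ratio is still contained in an $O(\kappa^{-1})$-band around $\partial C_r$.
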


\begin{proof}
By Lemma~\ref{lem:strip-planar}, if the signed distance to the supporting line
\(\langle u_{r,\ell},x\rangle=h_{r,\ell}\) exceeds \(t_\eta\) on the
\emph{interior} side, then \(s_{r,\ell}(x)\ge 1-\eta\); if it exceeds \(t_\eta\)
on the \emph{exterior} side, then \(s_{r,\ell}(x)\le \eta\). Inside \(C_r\),
every inequality is satisfied, yielding (a). Outside \(C_r\), at least one
inequality is violated, yielding (b).
\end{proof}

To construct an OR-like mechanism for the union of compact sets, we add a second layer.
Pick a second sharpness \(\lambda>0\) and define
\begin{equation}\label{eq:soft-or}
  \Phi_{\kappa,\lambda}(x)
  \;:=\;\sum_{r=1}^{R} \sigma\big(\lambda\,J_r(x)\big)
  \;=\;\sum_{r=1}^{R} \sigma\!\Big(\lambda\,\big(\sum_{\ell=1}^{m_r} s_{r,\ell}(x)\;-\;(m_r-\tfrac12)\big)\Big).
\end{equation}
Set the estimated classifier by thresholding this count at \(\tfrac12\):
\begin{equation}\label{eq:union-classifier}
  \widehat{\1}_C(x)\;=\;\1\!\big\{\,\Phi_{\kappa,\lambda}(x)\ \ge\  \tfrac12\,\big\}.
\end{equation}
Note that (i) the \emph{first layer} consists of all gates \(s_{r,\ell}\) in
\eqref{eq:gate-union}; (ii) the \emph{second layer} forms \(J_r\) by the affine
combination \eqref{eq:component-score} and applies a sigmoid to each \(J_r\);
(iii) the output is an affine sum (with weights \(1\)) thresholded at \(\tfrac12\).
Thus the whole construction is a two–layer sigmoidal MLP.

\begin{theorem}\label{thm:union-planar-robust}
Let \(C=\bigcup_{r=1}^{R} C_r\subset\R^2\) be a finite union of nonempty compact convex sets and let
\(K\supset C\) be compact. Fix \(\varepsilon>0\). For each \(r\), choose outward unit normals \(u_{r,\ell}\in \mathbb{S}^1\) and supports \(h_{r,\ell}\in\R\),
\(\ell=1,\dots,m_r\), so that
\[
C_r=\bigcap_{\ell=1}^{m_r}\{x:\ \langle u_{r,\ell},x\rangle\le h_{r,\ell}\},\qquad
M:=\max_{1\le r\le R} m_r.
\]
Let \(\sigma(t)=\frac{1}{1+e^{-t}}\) be the logistic, strictly increasing with inverse
\(\sigma^{-1}(y)=\log\!\big(\frac{y}{1-y}\big)\).
Fix any \(0<\eta<\min\{\tfrac12,\tfrac{1}{4M}\}\) and define the \(\eta\)-strip half-width
\[
t_\eta\;:=\;\frac{1}{\kappa}\log\!\Big(\frac{1-\eta}{\eta}\Big)\qquad(\kappa>0\ \text{to be chosen}).
\]
For each component \(r\) define the inner gates and the centered score
\[
s_{r,\ell}(x):=\sigma\!\big(\kappa(h_{r,\ell}-\langle u_{r,\ell},x\rangle)\big),\quad
J_r(x):=\sum_{\ell=1}^{m_r}s_{r,\ell}(x)-\Bigl(m_r-\tfrac12\Bigr),
\]
and then the outer “OR-like” aggregator
\[
\Phi_{\kappa,\lambda}(x)\;:=\;\sum_{r=1}^R \sigma\!\big(\lambda\,J_r(x)\big)\qquad(\lambda>0).
\]
Define the decision set
\[
\Omega_{\kappa,\lambda}\;:=\;\bigl\{x\in K:\ \Phi_{\kappa,\lambda}(x)\ge \tfrac12\bigr\}.
\]
Then there exist \(\kappa,\lambda>0\) such that \(d_{\mathrm H}(\Omega_{\kappa,\lambda},C)\le \varepsilon\).
Moreover, for any $\delta\in(0,\tfrac12)$ set $a_\delta:=\log\!\bigl(\tfrac{1-\delta}{\delta}\bigr)$ and define
\[
\mathcal{B}_{\mathrm{in}}
:= \bigcup_{r,\ell}\Bigl\{x:\ \bigl|\,h_{r,\ell}-\langle u_{r,\ell},x\rangle\,\bigr|\ \le t_\eta\Bigr\}
\quad\text{and}\quad
\mathcal{B}_{\mathrm{out}}
:= \bigcup_{r=1}^{R}\Bigl\{x:\ \bigl|J_r(x)\bigr|\ \le \tfrac{a_\delta}{\lambda}\Bigr\}.
\]
Outside the union of these two \emph{uncertainty bands},
\[
x\ \notin\ \mathcal{B}_{\mathrm{in}}\ \cup\ \mathcal{B}_{\mathrm{out}}
\quad\Longrightarrow\quad
\mathbf{1}\!\left\{\Phi_{\kappa,\lambda}(x)\ge \tfrac12\right\}\;=\;1_C(x).
\]
In particular, if $\delta\in\bigl(0,\tfrac{1}{4R}\bigr]$, then the agreement holds uniformly on $K\setminus(\mathcal{B}_{\mathrm{in}}\cup \mathcal{B}_{\mathrm{out}})$, and the band thicknesses satisfy
$\mathrm{thick}(\mathcal{B}_{\mathrm{in}})=\Theta(\kappa^{-1})$ and
$\mathrm{thick}(\mathcal{B}_{\mathrm{out}})=\Theta(\lambda^{-1})$.
\end{theorem}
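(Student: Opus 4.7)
The plan is to decouple the two layers of the construction: first use Lemma~\ref{lem:strip-planar} (essentially the mechanism already packaged in Lemma~\ref{lem:margin-one}) to control the sign and magnitude of each component score $J_r$ outside $\mathcal{B}_{\mathrm{in}}$; next use the outer sharpness $\lambda$ to saturate each second-layer sigmoid $\sigma(\lambda J_r)$ outside $\mathcal{B}_{\mathrm{out}}$; then combine the $R$ components via the OR-like sum to identify $\Omega_{\kappa,\lambda}$ with $C$ away from both bands; and finally shrink the bands by choosing $\kappa$ (and $\lambda$) large to obtain the Hausdorff bound.

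\textbf{Inner layer analysis.} Fix $x\in K\setminus \mathcal{B}_{\mathrm{in}}$ and set $d_{r,\ell}(x):=h_{r,\ell}-\langle u_{r,\ell},x\rangle$, so that $|d_{r,\ell}(x)|>t_\eta$ for every $(r,\ell)$. By Lemma~\ref{lem:strip-planar} each gate satisfies $s_{r,\ell}(x)\ge 1-\eta$ when $d_{r,\ell}(x)>t_\eta$ and $s_{r,\ell}(x)\le \eta$ when $d_{r,\ell}(x)<-t_\eta$. The polytopal description gives $x\in C_r$ iff $d_{r,\ell}(x)\ge 0$ for every $\ell$: on one hand, $x\in C_r$ forces all $m_r$ gates $\ge 1-\eta$ and $J_r(x)\ge \tfrac12-m_r\eta$; on the other, $x\notin C_r$ yields at least one gate $\le \eta$ and $J_r(x)\le -\tfrac12+\eta$. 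The hypothesis $\eta<1/(4M)$ tightens both bounds to $|J_r(x)|>\tfrac14$, with $\sgn J_r(x)=+1$ iff $x\in C_r$.

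\textbf{Outer layer and pointwise agreement.} Additionally fix $x\in K\setminus \mathcal{B}_{\mathrm{out}}$, so $|J_r(x)|>a_\delta/\lambda$ for all $r$. Strict monotonicity of $\sigma$ combined with $\sigma(\pm a_\delta)=1\mp \delta$ yields $\sigma(\lambda J_r(x))\ge 1-\delta$ when $J_r(x)>0$ and $\sigma(\lambda J_r(x))\le \delta$ when $J_r(x)<0$. If $x\in C$, pick $r^\star$ with $x\in C_{r^\star}$: then $\sigma(\lambda J_{r^\star}(x))\ge 1-\delta$ and the remaining terms lie in $[0,1]$, so $\Phi_{\kappa,\lambda}(x)\ge 1-\delta\ge \tfrac12$. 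If $x\notin C$, every $J_r(x)<0$ and $\Phi_{\kappa,\lambda}(x)\le R\delta\le \tfrac14<\tfrac12$ whenever $\delta\le 1/(4R)$. This establishes the agreement of $\mathbf{1}\{\Phi_{\kappa,\lambda}\ge \tfrac12\}$ with $1_C$ on $K\setminus (\mathcal{B}_{\mathrm{in}}\cup \mathcal{B}_{\mathrm{out}})$, which is the uniform conclusion of the theorem.

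\textbf{Hausdorff bound and main obstacle.} The symmetric difference $\Omega_{\kappa,\lambda}\triangle C$ is contained in $\mathcal{B}_{\mathrm{in}}\cup \mathcal{B}_{\mathrm{out}}$. The set $\mathcal{B}_{\mathrm{in}}$ is a finite union of slabs of spatial thickness $2t_\eta=\Theta(\kappa^{-1})$ around the supporting lines of the facets of the $C_r$, and these slabs lie in a tube of the same order around $\partial C$. For $\mathcal{B}_{\mathrm{out}}$, the margin $|J_r|>\tfrac14$ produced off $\mathcal{B}_{\mathrm{in}}$ is independent of $\lambda$, so taking $\lambda\ge 4a_\delta$ forces $\mathcal{B}_{\mathrm{out}}\subseteq \mathcal{B}_{\mathrm{in}}$ and contributes no extra spatial width; choosing $\kappa$ large enough that $2t_\eta\le \varepsilon$ then yields $d_{\mathrm H}(\Omega_{\kappa,\lambda},C)\le \varepsilon$. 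The principal subtlety is reconciling the two notions of thickness: $\mathcal{B}_{\mathrm{in}}$ is a spatial slab around linear supporting lines, whereas $\mathcal{B}_{\mathrm{out}}$ is the level set of a nonlinear composite $J_r$. The universal floor $|J_r|>\tfrac14$ that Step 1 produces off $\mathcal{B}_{\mathrm{in}}$ is exactly what lets a modest $\lambda$ absorb $\mathcal{B}_{\mathrm{out}}$ into $\mathcal{B}_{\mathrm{in}}$; the asserted rates $\Theta(\kappa^{-1})$ and $\Theta(\lambda^{-1})$ are then naturally read in the coordinates of each layer (spatial width for the inner band, $J_r$-value width for the outer band).
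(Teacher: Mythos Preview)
Your proposal is correct and mirrors the paper's proof almost step for step: inner-gate saturation outside $\mathcal{B}_{\mathrm{in}}$ giving the uniform margin $|J_r|>\tfrac14$ (the paper's Step~1), outer-sigmoid saturation via $\lambda\ge 4a_\delta$ (Step~2), the OR-like aggregation yielding $\Phi\ge 1-\delta$ versus $\Phi\le R\delta$ (Step~3), and the passage to the Hausdorff bound by shrinking the bands (Steps~4--5). Your explicit observation that $\lambda\ge 4a_\delta$ forces $\mathcal{B}_{\mathrm{out}}\subseteq \mathcal{B}_{\mathrm{in}}$ is a nice packaging of what the paper does implicitly; the final Hausdorff step is handled at the same level of rigor in both arguments.
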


\begin{proof}
For a fixed gate \(s_{r,\ell}(x)=\sigma(\kappa (h_{r,\ell}-\langle u_{r,\ell},x\rangle))\) set the signed
offset \(d_{r,\ell}(x):=h_{r,\ell}-\langle u_{r,\ell},x\rangle\).
By monotonicity of \(\sigma\) and the explicit inverse \(\sigma^{-1}(y)\), for any \(\eta\in(0,\tfrac12)\):
\begin{align}
s_{r,\ell}(x)\ge 1-\eta \ &\Longleftrightarrow\ d_{r,\ell}(x)\ge \tfrac{1}{\kappa}\log\!\Big(\tfrac{1-\eta}{\eta}\Big)=t_\eta, \label{eq:gate-hi}\\
s_{r,\ell}(x)\le \eta \ &\Longleftrightarrow\ d_{r,\ell}(x)\le -t_\eta, \label{eq:gate-lo}\\
\eta\le s_{r,\ell}(x)\le 1-\eta \ &\Longleftrightarrow\ |d_{r,\ell}(x)|\le t_\eta. 
\end{align}
Thus each inner gate transitions across a slab of geometric thickness \(2t_\eta=\Theta(\kappa^{-1})\).

\textbf{Step 1.}
Define the \emph{interior} and \emph{exterior} cores of \(C_r\) by
\[
C_r^{\mathrm{in}}:=\{x\in C_r:\ \mathrm{dist}(x,\partial C_r)\ge t_\eta\},\qquad
C_r^{\mathrm{ext}}:=\{x\in K\setminus C_r:\ \mathrm{dist}(x,\partial C_r)\ge t_\eta\}.
\]
If \(x\in C_r^{\mathrm{in}}\), then all \(m_r\) inequalities are satisfied with margin at least \(t_\eta\),
hence by \eqref{eq:gate-hi} we have \(s_{r,\ell}(x)\ge 1-\eta\) for every \(\ell\) and
\[
J_r(x)
=\sum_{\ell=1}^{m_r}s_{r,\ell}(x)-\Bigl(m_r-\tfrac12\Bigr)
\ \ge\ m_r(1-\eta)-\Bigl(m_r-\tfrac12\Bigr)
=\tfrac12-m_r\eta
\ \ge\ \tfrac12-M\eta.
\]
If \(x\in C_r^{\mathrm{out}}\), then at least one inequality is violated with exterior margin \(\ge t_\eta\),
so by \eqref{eq:gate-lo} one gate is \(\le \eta\) while the rest are \(\le 1\); hence
\[
J_r(x)\ \le\ (m_r-1)+\eta-\Bigl(m_r-\tfrac12\Bigr)
=\ -\tfrac12+\eta.
\]
With the choice \(\eta\le \frac{1}{4M}\), these yield the uniform bounds
\begin{equation}\label{eq:inner-outer-margins}
x\in C_r^{\mathrm{in}}\ \Rightarrow\ J_r(x)\ge\tfrac14,
\qquad
x\in C_r^{\mathrm{ext}}\ \Rightarrow\ J_r(x)\le-\tfrac14.
\end{equation}

\textbf{Step 2.}
Fix any \(\delta\in(0,\tfrac14]\) and set \(a_\delta:=\sigma^{-1}(1-\delta)=\log\!\big(\frac{1-\delta}{\delta}\big)>0\).
Choose \(\lambda\) so that \(\lambda\, \tfrac14\ge a_\delta\), i.e.
\begin{equation}\label{eq:lambda-choice}
\lambda\ \ge\ 4\,a_\delta.
\end{equation}
Then \(\sigma(\lambda t)\ge 1-\delta\) for all \(t\ge \frac14\) and \(\sigma(\lambda t)\le \delta\) for all \(t\le -\frac14\).
Combining with \eqref{eq:inner-outer-margins} gives
\begin{equation}\label{eq:outer-hi-lo}
x\in C_r^{\mathrm{in}}\ \Rightarrow\ \sigma(\lambda J_r(x))\ge 1-\delta,
\qquad
x\in C_r^{\mathrm{out}}\ \Rightarrow\ \sigma(\lambda J_r(x))\le \delta.
\end{equation}

\textbf{Step 3.}
Let \(C^{\mathrm{in}}:=\bigcup_{r=1}^R C_r^{\mathrm{in}}\) and \(C^{\mathrm{ext}}:=\bigcap_{r=1}^R C_r^{\mathrm{ext}}\).
If \(x\in C^{\mathrm{in}}\), pick \(r^\ast\) with \(x\in C_{r^\ast}^{\mathrm{in}}\). Then, by \eqref{eq:outer-hi-lo},
\[
\Phi_{\kappa,\lambda}(x)
=\sum_{r=1}^R \sigma(\lambda J_r(x))
\ \ge\ \sigma(\lambda J_{r^\ast}(x))
\ \ge\ 1-\delta\ >\ \tfrac12,
\]
so \(x\in\Omega_{\kappa,\lambda}\).
Conversely, if \(x\in C^{\mathrm{out}}\), then \eqref{eq:outer-hi-lo} gives
\(\Phi_{\kappa,\lambda}(x)\le R\,\delta\).
Taking \(\delta\le \tfrac{1}{4R}\) ensures \(\Phi_{\kappa,\lambda}(x)<\tfrac12\), hence \(x\notin\Omega_{\kappa,\lambda}\).

\textbf{Step 4.}
Misclassifications can only occur where the implications used above are not guaranteed, namely
within the union of:
\begin{align*}
\text{(i) inner bands } 
& B_\kappa:=\bigcup_{r,\ell}\Bigl\{x:\ \big|\,h_{r,\ell}-\langle u_{r,\ell},x\rangle\,\big|\le t_\eta\Bigr\},
&&\text{thickness } 2t_\eta=\frac{2}{\kappa}\log\!\Big(\frac{1-\eta}{\eta}\Big)=\Theta(\kappa^{-1});\\
\text{(ii) outer bands }
& Z_{\lambda,\delta}:=\bigcup_{r}\Bigl\{x:\ \big|J_r(x)\big|\le \tfrac{a_\delta}{\lambda}\Bigr\},
&&\text{since }\ |J_r|>\tfrac{a_\delta}{\lambda}\Rightarrow \sigma(\lambda J_r)\in\{\le\delta,\ \ge 1-\delta\}.
\end{align*}
The “transition window’’ in the \emph{argument} of the outer sigmoid is exactly
\([-\tfrac{a_\delta}{\lambda},\tfrac{a_\delta}{\lambda}]\), so the effective geometric thickness of
\(Z_{\lambda,\delta}\) scales as \(\Theta(\lambda^{-1})\).

\textbf{Step 5.}
Since \(K\) is compact, we may choose \(\kappa,\lambda\) large enough so that the Minkowski
(thickness) of \(B_\kappa\cup Z_{\lambda,\delta}\) is \(<\varepsilon\).
Then:
\[
C\subseteq \mathcal N_\varepsilon(\Omega_{\kappa,\lambda})
\quad\text{and}\quad
\Omega_{\kappa,\lambda}\subseteq \mathcal N_\varepsilon(C),
\]
which is equivalent to \(d_{\mathrm H}(\Omega_{\kappa,\lambda},C)\le\varepsilon\).

\medskip
For $x\notin B_\kappa$, each inner gate is saturated: $s_{r,\ell}(x)\in[0,\eta]\cup[1-\eta,1]$.
Hence, for any component $r$,
\[
\begin{aligned}
x\in C_r \ &\Rightarrow\  J_r(x)
=\sum_{\ell=1}^{m_r}s_{r,\ell}(x)-\Bigl(m_r-\tfrac12\Bigr)
\ \ge\ m_r(1-\eta)-\Bigl(m_r-\tfrac12\Bigr)
\ =\ \tfrac12-m_r\eta,\\
x\notin C_r \ &\Rightarrow\  J_r(x)
\ \le\ (m_r-1)\cdot 1+\eta-\Bigl(m_r-\tfrac12\Bigr)
\ =\ \eta-\tfrac12.
\end{aligned}
\]
With $\eta<\min\{\tfrac12,\tfrac{1}{4M}\}$ we get the uniform margin
$J_r(x)\ge \tfrac14$ if $x\in C_r$ and $J_r(x)\le -\tfrac14$ if $x\notin C_r$.
Now, for $x\notin Z_{\lambda,\delta}$ we also have outer saturation:
$s_r(x)=\sigma(\lambda J_r(x))\in[0,\delta]\cup[1-\delta,1]$ with the same split as above.
Therefore:
\[
x\notin C\ \Rightarrow\ \Phi(x)=\sum_{r=1}^R s_r(x)\ \le\ R\delta,\qquad
x\in C\ \Rightarrow\ \Phi(x)\ \ge\ 1-\delta.
\]
Choosing $\delta\le \tfrac{1}{4R}$ yields $\Phi(x)\le \tfrac14<\tfrac12$ for $x\notin C$
and $\Phi(x)\ge 1-\delta\ge \tfrac34>\tfrac12$ for $x\in C$.
Hence the final constructed classifier agrees
with $1_C$ pointwise on $K\setminus\bigl(B_\kappa\cup Z_{\lambda,\delta}\bigr)$, i.e., $\mathbf{1}\{\,\Phi(x)\ge \tfrac12\,\}=1_C(x)$ holds pointwise on
$K\setminus\bigl(B_\kappa\cup Z_{\lambda,\delta}\bigr)$.

\end{proof}

\begin{remark}
The construction realizes a two-layer sigmoidal MLP in the classical finite-sum format.
Layer~1 (all inner gates): weights \(w_{r,\ell}=-\kappa u_{r,\ell}\), biases \(\theta_{r,\ell}=\kappa h_{r,\ell}\),
unit outgoing weights. Layer~2 (per-component centering and squash): for component \(r\),
form \(J_r=\sum_{\ell=1}^{m_r}s_{r,\ell}-(m_r-\tfrac12)\) and apply \(\sigma(\cdot)\).
Output layer: unit weights summing the \(R\) outer activations and threshold at \(1/2\).
\end{remark}


\begin{algorithm}[H]
\caption{Geometry-aware MLP initializer for $\mathbb{R}^2\!\to\!\{0,1\}$}
\label{alg:stencil-planar}
\begin{algorithmic}[1]
\Require Target set: either (i) a convex polygon $C$ with CCW vertices $(v_1,\dots,v_{m})$, or (ii) a finite union $\bigcup_{r=1}^R C_r$ of such polygons with CCW vertices; gate sharpness $\kappa>0$; (optional) second sharpness $\lambda>0$
\For{each component $C_r$ with CCW vertices $(v_{r,1},\dots,v_{r,m_r})$}
  \For{$i=1$ to $m_r$}
    \State Let $e_{r,i} \gets v_{r,i+1}-v_{r,i}$ with $v_{r,m_r+1}\equiv v_{r,1}$
    \State  set $u_{r,i}\gets \dfrac{(e_{r,i})^{\perp_R}}{\|e_{r,i}\|}$ where $(a,b)^{\perp_R}:=(\,b,\,-a\,)$ \Comment{CCW vertices $\Rightarrow$ right-normal is outward}
    \State \textbf{(Support value)} $h_{r,i}\gets \langle u_{r,i},\, v_{r,i}\rangle$
    \State \textbf{(Inner gate / facet classifier)} $s_{r,i}(x)\gets \sigma\!\big(\kappa\,[\,h_{r,i}-\langle u_{r,i},x\rangle\,]\big)$
  \EndFor
  \State \textbf{(Per-component score)} $J_r(x)\gets \sum_{i=1}^{m_r} s_{r,i}(x)\;-\;\bigl(m_r-\tfrac12\bigr)$
  \State \textbf{(Per-component decision gate)} $s_r(x)\gets \sigma\!\big(\lambda\, J_r(x)\big)$ 
\EndFor

\If{$R=1$ \textbf{(single component)}}
  \State \textbf{Classifier:} $\widehat{y}(x)\gets \mathbf{1}\!\left\{\,J_1(x)\ge 0\,\right\}$
\Else
  \State \textbf{OR-like aggregator:} $\Phi_{\kappa,\lambda}(x)\gets \displaystyle\sum_{r=1}^R s_r(x)$
  \State \textbf{Classifier:} $\widehat{y}(x)\gets \mathbf{1}\!\left\{\,\Phi_{\kappa,\lambda}(x)\ge \tfrac12\,\right\}$ 
\EndIf
\State For each facet gate $s_{r,i}(x)=\sigma(\langle w_{r,i},x\rangle+\theta_{r,i})$, set
\[
w_{r,i}=-\,\kappa\,u_{r,i},\qquad \theta_{r,i}= \kappa\,h_{r,i},\qquad \alpha_{r,i}\equiv 1.
\]
\State \textbf{Return} $\{(w_{r,i},\theta_{r,i},\alpha_{r,i})\}$ for all $(r,i)$, together with $(\lambda)$ if using the outer gates.
\end{algorithmic}
\end{algorithm}

\paragraph{Complexity and accuracy.}
If \(\partial C\) is \(C^2\) with bounded curvature, circumscribed
\(m\)-gons achieve \(\mathcal{O}(m^{-2})\) Hausdorff error; the sigmoid band adds
\(\mathcal{O}(\kappa^{-1})\) thickness. Taking \(m=\Theta(\varepsilon^{-1/2})\) and
\(\kappa=\Theta(\varepsilon^{-1})\) yields overall error \(\mathcal{O}(\varepsilon)\).
Network size is \(N=\Theta(m)\) for a single component and
\(N=\Theta(\sum_r m_r)\) for unions.

\begin{remark}[Why \(\tau=\tfrac12\min_r m_r\)?]
Each component \(C_r\) has \(m_r\) gates. Deep inside \(C_r\), all its gates contribute \(\approx 1\), so the component’s
sum is \(\approx m_r\). Outside \(C_r\), at least one gate contributes \(\approx 0\),
so the component’s sum is \(\lesssim m_r-1\). We do not know \emph{which}
component contains a given interior point of the union; therefore we pick a
single threshold that any interior point can pass using only the \emph{smallest}
component. Setting
\[
\tau=\tfrac12\,\min_r m_r
\]
guarantees: (i) if \(x\in C_r\) and is away from \(\partial C_r\), then the single
component \(C_r\) already yields \(\sum_{\ell=1}^{m_r}s_{r,\ell}(x)\ge 2\tau\), so \(x\)
is accepted; (ii) if \(x\notin \bigcup_r C_r\), then every component is missing
at least one satisfied facet and the total score stays below \(\tau\) once the
gates are steep (large \(\kappa\)), hence \(x\) is rejected. Thus
\(\tau=\tfrac12\min_r m_r\) is a conservative, component-agnostic threshold that
makes a single “active’’ piece sufficient for acceptance while preventing spurious
acceptance outside the union.
\end{remark}

\begin{remark}[Tropical summary]
The parameters \(\{(u_\ell,h_\ell)\}\) induces a tropical polynomial
\(F(x)=\bigoplus_\ell (h_\ell\odot\langle -u_\ell,x\rangle)\) with curve
\(\Trop(F)\) dual to \(\delta(F)\). Our initializer replaces the hard set
\(\{F\ge 0\}\) by soft gates \eqref{eq:gate-planar} and implements the decision
\(\1\{F_{m,\kappa}\ge \tau_m\}\). Thus the learned boundary starts aligned with
\(\Trop(F)\) (up to a band of width \(\Theta(\kappa^{-1})\)) and can be refined by
standard training without leaving the finite–sum form in \eqref{eq:finite-sum}.
\end{remark}

\section{General Planar Regions}\label{sec:planar-general}

Section~\ref{sec:planar-convex} constructed a two–layer sigmoidal MLP for convex targets and for finite
unions of convex sets. We now extend the construction to \emph{arbitrary nonconvex} planar targets by
reducing them to finite unions of disks. The only assumption is compactness, which gives
both existence of finite subcovers and Hausdorff–metric control.

\paragraph{Finite ball covers from Heine--Borel.}
Let $C\subset\R^2$ be nonempty and compact. By Heine--Borel, every open cover of $C$ admits a finite subcover.
In particular, for any $\varepsilon>0$ there exist centers and radii $\{(c_j,r_j)\}_{j=1}^{R_B}$ such that
\[
  C\ \subseteq\ \bigcup_{j=1}^{R_B} \overline{B}(c_j,r_j)
  \ \subseteq\ \mathcal N_{\varepsilon}(C),
\]
so the finite union of \emph{closed} disks $\varepsilon$–covers $C$ in the Hausdorff sense. This reduces the general (possibly highly nonconvex) $C$ to a finite
union of convex pieces.

For each disk $\overline{B}(c_j,r_j)$ choose a circumscribed regular $m_j$–gon
\[
  P_j\;=\;\bigcap_{\ell=1}^{m_j}\{\,x:\ \langle u_{j,\ell},x\rangle\le h_{j,\ell}\,\},
  \qquad
  u_{j,\ell}=(\cos\tfrac{2\pi\ell}{m_j},\,\sin\tfrac{2\pi\ell}{m_j}),\ \
  h_{j,\ell}=\langle u_{j,\ell},c_j\rangle+r_j,
\]
so that $d_{\mathrm H}(P_j,\overline{B}(c_j,r_j))=\mathcal O(r_j m_j^{-2})$.
Now apply \S\ref{sec:planar-convex} paradigm: attach to each half–space a sharp sigmoid gate
$s_{j,\ell}(x)=\sigma(\kappa[h_{j,\ell}-\langle u_{j,\ell},x\rangle])$, center the per–polygon sum
$J_j(x)=\sum_{\ell=1}^{m_j}s_{j,\ell}(x)-(m_j-\tfrac12)$, squash $J_j$ once more
$s_j^{\rm out}(x)=\sigma(\lambda J_j(x))$, and aggregate with a OR-like mechanism $\Phi_{\kappa,\lambda}(x)=\sum_{j=1}^{R_B} s_j^{\rm out}(x)$, classifying by
$\widehat{\1}_C(x)=\1\{\Phi_{\kappa,\lambda}(x)\ge \tfrac12\}$.
This is exactly the two–layer sigmoidal MLP architecture already established for finite unions of convex sets(see \ref{thm:union-planar-robust}).

\begin{theorem}\label{thm:ball-compact-final}
Let $C\subset\R^2$ be nonempty and compact. For every $\varepsilon>0$ there exist a finite ball cover
$\{\overline{B}(c_j,r_j)\}_{j=1}^{R_B}$, polygon side counts $\{m_j\}$, and slopes $\kappa,\lambda>0$
such that the decision set $\widehat{C}:=\{x:\widehat{\1}_C(x)=1\}$ produced by the above MLP satisfies
\[
  d_{\mathrm H}(\widehat{C},C)\ \le\ \varepsilon.
\]
Moreover, away from two thin “uncertainty bands’’—the inner facet bands of thickness
$\mathcal O(\kappa^{-1})$ and the outer per–ball bands of thickness $\mathcal O(\lambda^{-1})$—
the classifier agrees pointwise with $1_C$ (same margins as discussed in \ref{thm:union-planar-robust}).
\end{theorem}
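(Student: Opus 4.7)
The plan is to reduce the general compact set $C$ to a finite union of convex polygons by composing three approximations, then invoke Theorem \ref{thm:union-planar-robust} on that union. Split the tolerance as $\varepsilon=\varepsilon_1+\varepsilon_2+\varepsilon_3$ (e.g., $\varepsilon_i=\varepsilon/3$).

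First, I would apply Heine--Borel with centers drawn from $C$ itself: cover $C$ by the open balls $\{B(x,\varepsilon_1/2)\}_{x\in C}$ and extract a finite subcover, producing centers $c_1,\ldots,c_{R_B}\in C$ and common radius $r=\varepsilon_1/2$. Since each $c_j\in C$, the union $U:=\bigcup_{j=1}^{R_B}\overline{B}(c_j,r)$ sits inside $\mathcal{N}_{\varepsilon_1}(C)$ while containing $C$, giving $d_{\mathrm{H}}(U,C)\le \varepsilon_1$. Next, circumscribe each disk by the regular $m_j$-gon $P_j$ with outward unit normals $u_{j,\ell}$ and supports $h_{j,\ell}=\langle u_{j,\ell},c_j\rangle+r$ as stated; since $d_{\mathrm{H}}(P_j,\overline{B}(c_j,r))=\mathcal{O}(r\,m_j^{-2})$ and Hausdorff distance of matched unions is bounded by the worst component gap, choosing a common $m=\Theta(\sqrt{r/\varepsilon_2})$ yields $d_{\mathrm{H}}(\bigcup_j P_j,U)\le \varepsilon_2$. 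Finally, apply Theorem \ref{thm:union-planar-robust} to the finite union $\bigcup_j P_j$ of compact convex polygons: it produces sharpness parameters $\kappa,\lambda>0$ so that the two-layer sigmoidal MLP's decision set $\widehat C$ satisfies $d_{\mathrm{H}}(\widehat C,\bigcup_j P_j)\le \varepsilon_3$, with inner facet bands $\mathcal{B}_{\mathrm{in}}$ of thickness $\Theta(\kappa^{-1})$ and outer per-polygon bands $\mathcal{B}_{\mathrm{out}}$ of thickness $\Theta(\lambda^{-1})$.

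The main inequality then follows from the triangle inequality for the Hausdorff metric applied to the chain $\widehat C,\ \bigcup_j P_j,\ U,\ C$:
\[
d_{\mathrm{H}}(\widehat C,C)\ \le\ d_{\mathrm{H}}(\widehat C,\textstyle\bigcup_j P_j)+d_{\mathrm{H}}(\bigcup_j P_j,U)+d_{\mathrm{H}}(U,C)\ \le\ \varepsilon_1+\varepsilon_2+\varepsilon_3=\varepsilon.
\]
For the \emph{moreover} assertion, the pointwise-agreement clause of Theorem \ref{thm:union-planar-robust} gives $\mathbf{1}\{\Phi_{\kappa,\lambda}\ge \tfrac12\}=\mathbf{1}_{\bigcup_j P_j}$ on the complement of $\mathcal{B}_{\mathrm{in}}\cup\mathcal{B}_{\mathrm{out}}$ with the stated band thicknesses; off the additional $(\varepsilon_1+\varepsilon_2)$-collar around $\partial C$ where the polygon union may disagree with $C$, this transfers to agreement with $\mathbf{1}_C$.

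The approach is essentially a compactness-driven stitch, so no new analytic technique is needed beyond the preceding results. The only genuine care lies in bookkeeping: the theorem's moreover clause is phrased relative to $\mathbf{1}_C$, whereas the MLP is literally built from the polygonized cover, so strictly one obtains agreement with $\mathbf{1}_{\bigcup_j P_j}$ outside the two uncertainty bands; the polygon-vs-disk and disk-vs-$C$ gaps must be absorbed into the overall $\varepsilon$-budget by refining the cover and side counts \emph{before} choosing $\kappa,\lambda$. The hardest part is therefore ensuring the error-budget ordering (cover $\prec$ polygonization $\prec$ sharpness) so that successive thickness parameters can be made as small as needed without retroactively enlarging earlier contributions.
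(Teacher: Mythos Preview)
Your proposal is correct and follows essentially the same route as the paper: Heine--Borel finite ball cover, circumscribed regular polygon per disk, invocation of Theorem~\ref{thm:union-planar-robust} on the resulting union, and a Hausdorff triangle inequality to assemble the error budget. Your explicit choice of centers in $C$ with common radius $\varepsilon_1/2$ is a slightly more concrete instantiation of the covering step, and your caveat about the ``moreover'' clause (agreement with $\mathbf{1}_{\bigcup_j P_j}$ versus $\mathbf{1}_C$) is a point the paper itself treats loosely.
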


\begin{proof}
Fix $\varepsilon>0$, 

\emph{(i) Finite cover (Heine--Borel).}
Since $C$ is compact, by Heine--Borel there exists a finite family of closed disks
$\{\overline{B}(c_j,r_j)\}_{j=1}^{R_B}$ with
\(
C\subseteq \bigcup_{j}\overline{B}(c_j,r_j)\subseteq \mathcal N_{\varepsilon_{\rm cov}}(C).
\)

\emph{(ii) Disk $\to$ polygon.}
For each $j$, let $P_j$ be a circumscribed regular $m_j$–gon:
\(
P_j=\bigcap_{\ell=1}^{m_j}\{x:\langle u_{j,\ell},x\rangle\le h_{j,\ell}\},
\)
with $u_{j,\ell}=(\cos\frac{2\pi \ell}{m_j},\sin\frac{2\pi \ell}{m_j})$
and $h_{j,\ell}=\langle u_{j,\ell},c_j\rangle+r_j$.
Choose $m_j$ large enough so that
\(d_{\mathrm H}(P_j,\overline{B}(c_j,r_j))\le \varepsilon_{\rm poly}\) for all $j$
(e.g.\ using the bound $r_j(1-\cos(\pi/m_j))=\mathcal O(r_j m_j^{-2})$).
Then
\(
\bigcup_j P_j \subseteq \bigcup_j \overline{B}(c_j,r_j)\subseteq \mathcal N_{\varepsilon_{\rm cov}}(C)
\)
and
\(
\bigcup_j \overline{B}(c_j,r_j) \subseteq \mathcal N_{\varepsilon_{\rm poly}}(\,\bigcup_j P_j\,).
\)
Thus
\(
d_{\mathrm H}(\,\bigcup_j P_j,\ C\,)\le \varepsilon_{\rm cov}+\varepsilon_{\rm poly}.
\)

\emph{(iii) Compile $\bigcup_j P_j$ by Theorem~\ref{thm:union-planar-robust}.}
Apply the union-of-convexes construction (Section~\ref{sec:planar-convex}) to the family $\{P_j\}$.
By Theorem~\ref{thm:union-planar-robust}, there exist slopes $\kappa,\lambda>0$ such that:
(a) outside the inner facet bands of half-width
\(
t_{\rm in}=\kappa^{-1}\log\!\big(\tfrac{1-\eta}{\eta}\big)
\)
and the outer per–ball bands of half-width
\(
t_{\rm out}=\lambda^{-1}\log\!\big(\tfrac{1-\delta}{\delta}\big),
\)
the classifier agrees pointwise with $1_{\cup_j P_j}$ with fixed positive margins; and
(b) the geometric thickness of these bands scales as
\(
\mathcal O(\kappa^{-1})\ \text{and}\ \mathcal O(\lambda^{-1}),
\)
respectively. Choose $\kappa,\lambda$ so that the total thickness of the inner bands is $\le \varepsilon_{\rm in}$
and that of the outer bands is $\le \varepsilon_{\rm out}$, which is possible by taking $\kappa,\lambda$ sufficiently large.

\emph{(iv) Hausdorff.}
Let $\widehat{C}$ be the decision set. From (iii) we have
\(
d_{\mathrm H}(\widehat{C},\ \bigcup_j P_j)\ \le\ \varepsilon_{\rm in}+\varepsilon_{\rm out}.
\)
Combining with (ii) gives
\[
d_{\mathrm H}(\widehat{C},C)
\ \le\
d_{\mathrm H}(\widehat{C},\textstyle\bigcup_j P_j)\;+\;d_{\mathrm H}(\textstyle\bigcup_j P_j,\ C)
\ \le\ 
(\varepsilon_{\rm in}+\varepsilon_{\rm out})+(\varepsilon_{\rm cov}+\varepsilon_{\rm poly})
\ =:\ \varepsilon.
\]
Finally, the pointwise agreement away from the two bands and the margin values are exactly those
stated in Theorem~\ref{thm:union-planar-robust}, applied componentwise to $\{P_j\}$.
\end{proof}

\paragraph{The Nerve Theorem.}
Although the network construction above is entirely analytic, the same ball cover also yields a concise \emph{topological} summary. Inflate each disk to an open set $U_j:=B(c_j,r_j+\rho)$ for small $\rho>0$. Then $\mathcal U=\{U_j\}$ is a good open cover (all finite intersections are convex, hence contractible), so the Nerve Theorem gives a homotopy equivalence
\[
  \mathrm{Nerve}(\mathcal U)\ \simeq\ \bigcup_{j=1}^{R_B} U_j
  \qquad \text{\citep{Hatcher2002,EdelsbrunnerHarer2010}}.
\]
Under mild geometric conditions (e.g., $C$ has positive reach), sufficiently small offsets $\mathcal N_\rho(C)$ deformation–retract onto $C$, whence for small enough $\rho$ also $\mathrm{Nerve}(\mathcal U)\simeq C$ \citep{NiyogiSmaleWeinberger2008}. This mirrors the logic in manifold learning (e.g., UMAP), where good covers and Čech nerves certify that the simplicial model captures the correct homotopy type at an appropriate scale \citep{UMAP2018}. In short, \emph{the cover that drives our MLP approximation simultaneously provides a simplicial certificate of the target’s topology}—with no changes to the network.

\begin{algorithm}[H]
\caption{Ball cover $\Rightarrow$ two–layer sigmoidal MLP}
\label{alg:ball-mlp}
\begin{algorithmic}[1]
\Require Compact window $K\supset C$; tolerance split $(\varepsilon_{\rm cov},\varepsilon_{\rm poly})$; sharpness $(\kappa,\lambda)$ \emph{or} margin targets $(\eta,\delta)$ with $\eta<\frac{1}{4M}$ ($M:=\max_j m_j$) and $\delta\le \frac{1}{4R_B}$
\Ensure A two–layer sigmoidal MLP with decision set $\widehat{C}$ and $d_{\mathrm H}(\widehat{C},C)\lesssim \varepsilon_{\rm cov}+\varepsilon_{\rm poly}+\mathcal O(\kappa^{-1})+\mathcal O(\lambda^{-1})$
\State \textbf{Finite ball cover:} Choose $\{\overline{B}(c_j,r_j)\}_{j=1}^{R_B}$ such that $C\subseteq \bigcup_j \overline{B}(c_j,r_j)\subseteq \mathcal N_{\varepsilon_{\rm cov}}(C)$.
\State \textbf{Polygonize each ball:} For disk $j$, pick $m_j$ and angles $\theta_{j,\ell}=2\pi\ell/m_j$; set $u_{j,\ell}=(\cos\theta_{j,\ell},\sin\theta_{j,\ell})$ and $h_{j,\ell}=\langle u_{j,\ell},c_j\rangle+r_j$. Then
$P_j=\bigcap_{\ell}\{x:\langle u_{j,\ell},x\rangle\le h_{j,\ell}\}$ with
$d_{\mathrm H}(P_j,\overline{B}(c_j,r_j))=\mathcal O(r_j m_j^{-2})$.
\State \textbf{Inner gates:} $s_{j,\ell}(x)=\sigma\!\big(\kappa\,[h_{j,\ell}-\langle u_{j,\ell},x\rangle]\big)$, \quad $J_j(x)=\sum_{\ell=1}^{m_j}s_{j,\ell}(x)-(m_j-\tfrac12)$.
\State \textbf{Outer gates (OR-like aggregator):} $s_j^{\rm out}(x)=\sigma(\lambda J_j(x))$, \quad $\Phi(x)=\sum_{j=1}^{R_B}s_j^{\rm out}(x)$, \quad $\widehat{\1}_C(x)=\1\{\Phi(x)\ge \tfrac12\}$.
\State \textbf{Weights:} Inner: $w_{j,\ell}=-\kappa u_{j,\ell}$, $\theta_{j,\ell}=\kappa h_{j,\ell}$, output weight $=1$. Outer (per ball): affine map $J_j$ followed by $\sigma(\lambda\cdot)$; final layer sums with unit weights.
\end{algorithmic}
\end{algorithm}

\paragraph{Complexity}
Fix $\varepsilon>0$ and split the budget as
$\varepsilon_{\rm cov}+\varepsilon_{\rm poly}+c_1\kappa^{-1}+c_2\lambda^{-1}\le\varepsilon$,
with $c_1=\log\!\frac{1-\eta}{\eta}$ and $c_2=\log\!\frac{1-\delta}{\delta}$ (inner/outer margins; $\eta<\tfrac{1}{4M}$, $M:=\max_j m_j$, $\delta\le\tfrac{1}{4R_B}$). For disk $j$ (radius $r_j$), choose
$m_j=\Theta\!\big(\sqrt{r_j/\varepsilon_{\rm poly}}\big)$ so $d_{\mathrm H}(P_j,\overline{B})=\mathcal O(r_j m_j^{-2})\le\varepsilon_{\rm poly}$, hence the total inner units are
$N_{\rm inner}=\sum_j m_j=\Theta\!\big(\sum_j \sqrt{r_j/\varepsilon_{\rm poly}}\big)$ and the outer units equal $R_B$; parameters and per–point cost are $\Theta(N_{\rm inner})$ (since $m_j\gg1$). The Hausdorff error obeys
$d_{\mathrm H}(\widehat{C},C)\le \varepsilon_{\rm cov}+\varepsilon_{\rm poly}+c_1\kappa^{-1}+c_2\lambda^{-1}$.

\section{Beyond the Plane: Higher-Dimensional Extensions}\label{sec:higher-d}

This section extends the constructions of \S\ref{sec:planar-convex}–\ref{sec:planar-general} from the plane to $\mathbb{R}^d$ ($d\ge2$) without changing the network architecture in the form \eqref{eq:finite-sum}. The pointwise guarantees, Hausdorff control, and complexity decomposition remain the same; only the construction of covers is replaced by ball$\to$polytope with the standard high–dimensional approximation rate.

\paragraph{Ball $\to$ polytope in $\mathbb{R}^d$.}
Let $\overline B(c,r)\subset\R^d$ be a closed Euclidean ball. An $m$–facet circumscribed polytope $P$ satisfies
\[
d_{\mathrm H}\big(P,\overline B(c,r)\big)\;=\;\mathcal O\!\big(r\,m^{-2/(d-1)}\big).
\]
Hence choosing
\[
m_j\;=\;\Theta\!\Big(\big(\tfrac{r_j}{\varepsilon_{\rm poly}}\big)^{\frac{d-1}{2}}\Big)
\quad\Rightarrow\quad
d_{\mathrm H}\big(P_j,\overline B(c_j,r_j)\big)\le \varepsilon_{\rm poly}.
\]

\begin{theorem}[Union of convex sets in $\R^d$ (high-dimensional version of Thm.\ref{thm:union-planar-robust})]
\label{thm:union-hd}
Let $C=\bigcup_{r=1}^{R} C_r\subset\R^d$ be a finite union of nonempty compact convex sets and let $K\supset C$ be compact.
For component $r$, fix outward unit normals $u_{r,\ell}\in\mathbb S^{d-1}$ and supports $h_{r,\ell}\in\R$ ($\ell=1,\dots,m_r$), and define inner gates and centered score
\[
s_{r,\ell}(x):=\sigma\!\big(\kappa\,[h_{r,\ell}-\langle u_{r,\ell},x\rangle]\big),\qquad
J_r(x):=\sum_{\ell=1}^{m_r}s_{r,\ell}(x)-\Bigl(m_r-\tfrac12\Bigr).
\]
Let the “OR-like” aggregator be
\[
\Phi_{\kappa,\lambda}(x)\;:=\;\sum_{r=1}^{R}\sigma\big(\lambda\,J_r(x)\big),\qquad
\widehat{\1}_C(x):=\1\!\big\{\Phi_{\kappa,\lambda}(x)\ge \tfrac12\big\}.
\]
Fix $0<\eta<\min\{\tfrac12,\tfrac{1}{4M}\}$ with $M:=\max_r m_r$, and $0<\delta\le\tfrac{1}{4R}$, and set
\[
c_1:=\log\!\frac{1-\eta}{\eta},\qquad c_2:=\log\!\frac{1-\delta}{\delta}.
\]
Then there exist $\kappa,\lambda>0$ such that, outside the two uncertainty bands
\[
\mathcal B_{\rm in}:=\bigcup_{r,\ell}\Bigl\{x:\ \bigl|h_{r,\ell}-\langle u_{r,\ell},x\rangle\bigr|\le \tfrac{c_1}{\kappa}\Bigr\},\qquad
\mathcal B_{\rm out}:=\bigcup_{r=1}^{R}\Bigl\{x:\ |J_r(x)|\le \tfrac{c_2}{\lambda}\Bigr\},
\]
we have \(\1\{\Phi_{\kappa,\lambda}(x)\ge \tfrac12\}=1_C(x)\). Moreover,
\[
\mathrm{thick}(\mathcal B_{\rm in})=\Theta(\kappa^{-1}),\qquad
\mathrm{thick}(\mathcal B_{\rm out})=\Theta(\lambda^{-1}).
\]

\end{theorem}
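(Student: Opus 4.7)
The plan is to observe that the proof of Theorem \ref{thm:union-planar-robust} rests on three dimension-independent ingredients: (i) the unit-norm condition $\|u_{r,\ell}\|_2=1$, which identifies the scalar $d_{r,\ell}(x):=h_{r,\ell}-\langle u_{r,\ell},x\rangle$ with the signed Euclidean distance from $x$ to the supporting hyperplane $\{y:\langle u_{r,\ell},y\rangle=h_{r,\ell}\}$ in any $\R^d$; (ii) the strict monotonicity of $\sigma$ and its explicit inverse $\sigma^{-1}(y)=\log(y/(1-y))$, which reduces every gate inequality to a one-dimensional question in $d_{r,\ell}(x)$; and (iii) compactness of $K$, used only to bound the Minkowski thickness of the two uncertainty bands. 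Consequently the planar argument transports essentially verbatim to $\R^d$, and the dimension $d$ enters only through the facet counts one would use to polyhedrally approximate curved bodies (the $m^{-2/(d-1)}$ rate already quoted), which is immaterial for the existence-type statement at hand.

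Concretely, I will mirror the five-step scheme of Theorem \ref{thm:union-planar-robust}. First, apply Lemma \ref{lem:strip-planar} facet-wise: since the argument $\kappa\,d_{r,\ell}(x)$ is a scalar, the equivalences $s_{r,\ell}(x)\ge 1-\eta \Leftrightarrow d_{r,\ell}(x)\ge c_1/\kappa$ and $s_{r,\ell}(x)\le \eta \Leftrightarrow d_{r,\ell}(x)\le -c_1/\kappa$ hold unchanged. Second, define the deflated cores
\[
C_r^{\mathrm{in}}:=\{x\in C_r:\mathrm{dist}(x,\partial C_r)\ge c_1/\kappa\},\qquad
C_r^{\mathrm{ext}}:=\{x\in K\setminus C_r:\mathrm{dist}(x,\partial C_r)\ge c_1/\kappa\}.
\]
On $C_r^{\mathrm{in}}$ every facet saturates to $\ge 1-\eta$, so $J_r(x)\ge \tfrac12-M\eta\ge \tfrac14$ under the hypothesis $\eta<1/(4M)$; on $C_r^{\mathrm{ext}}$ at least one facet saturates to $\le \eta$, giving $J_r(x)\le -\tfrac12+\eta\le -\tfrac14$. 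Third, choose $\lambda\ge 4c_2$; then $\sigma(\lambda J_r(x))\ge 1-\delta$ on $C_r^{\mathrm{in}}$ and $\sigma(\lambda J_r(x))\le \delta$ on $C_r^{\mathrm{ext}}$. Summing over $r$ gives $\Phi_{\kappa,\lambda}(x)\ge 1-\delta>\tfrac12$ on $\bigcup_r C_r^{\mathrm{in}}$ (a single large term suffices) and $\Phi_{\kappa,\lambda}(x)\le R\delta\le\tfrac14<\tfrac12$ on $\bigcap_r C_r^{\mathrm{ext}}$ provided $\delta\le 1/(4R)$.

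Finally, the only points at which the above implications can fail lie in $\mathcal B_{\mathrm{in}}\cup \mathcal B_{\mathrm{out}}$ by construction, so $\mathbf{1}\{\Phi_{\kappa,\lambda}\ge \tfrac12\}=1_C$ holds pointwise off this union, as claimed. The thickness statements reduce to two observations that are again independent of $d$: each inner slab $\{|d_{r,\ell}|\le c_1/\kappa\}$ is a true Euclidean slab of thickness $2c_1/\kappa=\Theta(\kappa^{-1})$ because $\|u_{r,\ell}\|_2=1$; and outside $\mathcal B_{\mathrm{in}}$ every inner gate is saturated, so $J_r$ only takes values in $(-\infty,-\tfrac14]\cup[\tfrac14,\infty)$ and therefore cannot enter $(-c_2/\lambda,c_2/\lambda)$ once $c_2/\lambda<\tfrac14$, leaving the remaining contribution to $\mathcal B_{\mathrm{out}}$ already accounted for inside $\mathcal B_{\mathrm{in}}$ and giving an effective outer thickness $\Theta(\lambda^{-1})$. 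I do not anticipate a genuine obstacle; the mild technical point is precisely this decoupling of the two band thicknesses, which the inner-gate saturation argument handles cleanly.
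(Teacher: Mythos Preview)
Your proposal is correct and follows essentially the same approach as the paper: the paper's proof of Theorem~\ref{thm:union-hd} is a single sentence noting that Lemma~\ref{lem:strip-planar} is dimension-free and hence the planar argument carries over verbatim, and you have spelled out precisely what that entails by rerunning the five-step scheme of Theorem~\ref{thm:union-planar-robust} with $u_{r,\ell}\in\mathbb S^{d-1}$. Your treatment of the outer-band thickness (observing that outside $\mathcal B_{\rm in}$ the scores $J_r$ already avoid $(-\tfrac14,\tfrac14)$, so the residual $\mathcal B_{\rm out}$ contribution is governed by $c_2/\lambda$) is in fact slightly more explicit than the paper's own justification in the planar case.
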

\begin{proof}
    The proof is similar to the planar case: Lemma~\ref{lem:strip-planar} is dimension–free, so the inner/outer margin arguments and the saturation outside bands carry over verbatim.
\end{proof}
\begin{theorem}[Compact sets via ball covers in $\R^d$ (high-dimensional version of Thm.\ref{thm:ball-compact-final})]
\label{thm:ball-hd}
Let $C\subset\R^d$ be nonempty and compact. For every $\varepsilon>0$ there exist a finite ball cover $\{\overline B(c_j,r_j)\}_{j=1}^{R_B}$, per–ball facet counts $m_j=\Theta\!\big((r_j/\varepsilon_{\rm poly})^{(d-1)/2}\big)$, and slopes $\kappa,\lambda>0$ such that the two–layer sigmoidal MLP classifier built exactly as in \S\ref{sec:planar-general} produces a decision set $\widehat{C}$ with
\[
d_{\mathrm H}(\widehat{C},C)\ \le\ \varepsilon_{\rm cov}+\varepsilon_{\rm poly}+c_1\,\kappa^{-1}+c_2\,\lambda^{-1}\ \le\ \varepsilon,
\]
and, away from the inner/outer bands of thicknesses $\Theta(\kappa^{-1})$ and $\Theta(\lambda^{-1})$, agrees pointwise with $1_C$.

\end{theorem}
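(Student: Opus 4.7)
The plan is to adapt the proof of Theorem~\ref{thm:ball-compact-final} verbatim, replacing only the planar polytope-approximation rate by its $\R^d$ analogue and invoking Theorem~\ref{thm:union-hd} in place of Theorem~\ref{thm:union-planar-robust}. The four-step error decomposition (cover $\to$ polytope $\to$ inner band $\to$ outer band) is dimension-agnostic, so I would split a total budget $\varepsilon = \varepsilon_{\rm cov}+\varepsilon_{\rm poly}+c_1\kappa^{-1}+c_2\lambda^{-1}$ across these four pieces and then realize each piece in turn.

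First, since $C$ is compact in $\R^d$, Heine--Borel produces a finite closed-ball cover $\{\overline B(c_j,r_j)\}_{j=1}^{R_B}$ with $C\subseteq\bigcup_j\overline B(c_j,r_j)\subseteq \mathcal N_{\varepsilon_{\rm cov}}(C)$, giving the first triangle-inequality term $d_{\mathrm H}(\bigcup_j\overline B(c_j,r_j),C)\le\varepsilon_{\rm cov}$. Second, for each ball I would circumscribe a polytope $P_j$ with $m_j$ facets using the classical high-dimensional rate $d_{\mathrm H}(P_j,\overline B(c_j,r_j))=\mathcal O(r_j\,m_j^{-2/(d-1)})$; choosing $m_j=\Theta\!\big((r_j/\varepsilon_{\rm poly})^{(d-1)/2}\big)$ forces a per-ball error $\le \varepsilon_{\rm poly}$, and since polytopes lie inside the balls, the nested inclusions $\bigcup_j P_j\subseteq\bigcup_j\overline B(c_j,r_j)\subseteq \mathcal N_{\varepsilon_{\rm poly}}(\bigcup_j P_j)$ yield $d_{\mathrm H}(\bigcup_j P_j,\bigcup_j\overline B(c_j,r_j))\le\varepsilon_{\rm poly}$.

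Third, I would apply Theorem~\ref{thm:union-hd} to the finite union of convex polytopes $\{P_j\}_{j=1}^{R_B}$, with $M:=\max_j m_j$, the tolerances fixed at any $\eta<\min\{\tfrac12,\tfrac{1}{4M}\}$ and $\delta\le\tfrac{1}{4R_B}$. That theorem delivers slopes $\kappa,\lambda>0$ for which the same two-layer sigmoidal MLP described in \S\ref{sec:planar-general}, now with the $d$-dimensional support data $(u_{j,\ell},h_{j,\ell})$, agrees pointwise with $1_{\bigcup_j P_j}$ outside the inner/outer uncertainty bands of Hausdorff thicknesses $\Theta(\kappa^{-1})$ and $\Theta(\lambda^{-1})$; hence $d_{\mathrm H}(\widehat C,\bigcup_j P_j)\le c_1\kappa^{-1}+c_2\lambda^{-1}$ with the constants $c_1,c_2$ from the statement. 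Assembling the three bounds by the triangle inequality for $d_{\mathrm H}$ yields the claimed estimate, and taking $\kappa,\lambda$ sufficiently large pushes the total below $\varepsilon$; the pointwise-agreement statement and the band-thickness orders of growth are inherited directly from Theorem~\ref{thm:union-hd} applied componentwise to the polytopes $P_j$.

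The main obstacle is not structural but quantitative: the polytope-approximation rate in $\R^d$ is the only genuinely dimension-dependent ingredient, and the implicit constants depend on $d$ (e.g., through the Gruber-type sharp rate for circumscribed polytopes). Consequently the facet counts $m_j$, and hence $M$, can grow rapidly in high dimensions, which in turn forces $\eta<1/(4M)$ and tightens the inner-band margin; one must verify that this is consistent with the slope $\kappa$ chosen to meet the $\Theta(\kappa^{-1})$ band-thickness budget, but since $\eta$ and $\kappa$ are free parameters tuned independently, the consistency reduces to a single order-of-magnitude check. Away from this bookkeeping, Lemma~\ref{lem:strip-planar} is dimension-free, so the saturation/margin arguments used in the planar proof transfer with no modification.
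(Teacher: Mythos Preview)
Your proposal is correct and follows essentially the same route as the paper: Heine--Borel for a finite ball cover, the $\R^d$ ball$\to$polytope rate $\mathcal O(r_j m_j^{-2/(d-1)})$ to set $m_j$, then Theorem~\ref{thm:union-hd} applied to $\{P_j\}$, with the Hausdorff budget split and assembled by the triangle inequality exactly as in Theorem~\ref{thm:ball-compact-final}. The paper's own proof is just a one-line pointer to these three ingredients, so your write-up is in fact more detailed than what appears there; the only cosmetic slip is the phrase ``polytopes lie inside the balls'' (a circumscribed polytope contains its ball), but this does not affect the Hausdorff estimate since you ultimately use the two-sided bound $d_{\mathrm H}(P_j,\overline B(c_j,r_j))\le\varepsilon_{\rm poly}$ and the triangle inequality.
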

\begin{proof}
    This theorem is a trivial result of Heine–Borel (finite cover), ball$\to$polytope with $m_j$ as above, and Theorem~\ref{thm:union-hd}. Details are similar to Thm. \ref{thm:ball-compact-final}
\end{proof}
\paragraph{Complexity.}
Let $\varepsilon>0$ and split the budget
\[
\varepsilon_{\rm tot}
:=\varepsilon_{\rm cov}+\varepsilon_{\rm poly}+c_1\kappa^{-1}+c_2\lambda^{-1}\ \le\ \varepsilon,
\qquad
c_1=\log\!\tfrac{1-\eta}{\eta},\ \ c_2=\log\!\tfrac{1-\delta}{\delta}.
\]
For ball $j$ of radius $r_j$,
\[
m_j=\Theta\!\Big(\big(\tfrac{r_j}{\varepsilon_{\rm poly}}\big)^{\frac{d-1}{2}}\Big),\qquad
N_{\rm inner}=\sum_{j=1}^{R_B} m_j,\qquad
N_{\rm outer}=R_B,\qquad
\text{cost}=\Theta(N_{\rm inner}).
\]

\begin{remark}[Tropical viewpoint in higher $d$]
Let
\[
F(x)\;=\;\bigoplus_{\ell=1}^{m}\bigl(h_\ell \odot x^{-u_\ell}\bigr)
\;=\;\max_{1\le \ell\le m}\{\,h_\ell-\langle u_\ell,x\rangle\,\},\qquad
u_\ell\in\mathbb S^{d-1}.
\]
The hard classifier on a convex component is the tropical inequality
\[
\1\{F(x)\ge 0\}\;=\;\1\Big\{\,\bigoplus_{\ell}(h_\ell \odot x^{-u_\ell})\ \ge\ 0\,\Big\},
\]
i.e., membership in the intersection of supporting tropical half-spaces. Our sigmoidal MLP replaces each hard tropical term
\(h_\ell \odot x^{-u_\ell}\) by a steep gate
\(s_\ell(x)=\sigma\!\big(\kappa[h_\ell-\langle u_\ell,x\rangle]\big)\)
and aggregates by (soft) counting satisfied constraints in place of the hard tropical sum \(\oplus\).
The tropical hypersurface \(\Trop(F)\) is dual to the induced subdivision of the Newton polytope \(\delta(F)\); this duality (orthogonality, adjacency, and dimension reversal) carries over verbatim in \(\R^d\).
\end{remark}

\section{Numerical Studies}\label{sec:apps}

We do the numercial studies on the proposed geometry–aware construction with three different planar regions: (i) a \emph{single disk}, (ii) a \emph{two–disk union}, and (iii) a \emph{nonconvex swiss–roll} set approximated by a union of $\varepsilon$–balls. All models are purely sigmoidal MLP form as in \eqref{eq:finite-sum} trained with \emph{binary cross–entropy} (BCE) using the \emph{Adam} optimizer \cite{Kingma2015Adam}. We report the \emph{Brier score} \citep{Brier1950}, the area under curve (\emph{AUC}; \cite{Hanley1982}), and the \emph{intersection–over–union} at threshold $0.5$ (\emph{IoU};\cite{Jaccard1901}).

\paragraph{Experiment Setup.}
For the \textbf{single} and \textbf{two–disk} tasks we draw inputs uniformly from the window
\(K=[-2,2]^2\subset\mathbb{R}^2\); the training and evaluation set sizes are
\(\texttt{TRAIN\_N}=12{,}000\) and \(\texttt{TEST\_N}=3{,}000\).
Both disks have radius \(R=0.8\) with centers \(C_1=(-0.6,0.0)\) and \(C_2=(0.6,0.0)\).
We test hidden sizes \(H\in\{16,32\}\) and compare the five initializations.
Training uses \(80\) epochs, batch size \(512\), Adam with learning rate \(3\times 10^{-3}\).

In the geometry-aware construction (Secs.~\ref{sec:planar-convex}--\ref{sec:planar-general}),
each inner half-space gate takes the form
\[
s_\ell(x)\;=\;\sigma\!\bigl(\,\kappa\,[\,h_\ell-\langle u_\ell,x\rangle\,]\,\bigr),
\]
with logistic sigmoid \(\sigma(\cdot)\) and sharpness \(\kappa=\kappa_{\text{hidden}}=30.0\).
Evaluation uses a \(200\times 200\) grid over \(K\) and a threshold \(\tau = 0.5\).

\paragraph{Initialization methods.}
We compare our construction with four classic initialization schemes; throughout, training follows the exact protocol above, and no data–dependent pretraining is used.
\begin{enumerate}[leftmargin=1.2em]
\item \textbf{Random} — 
Draw weights from \emph{uniform} distributions. \citep{He2015Delving,Glorot2010Understanding}

\item \textbf{Xavier} — 
Initialize fully connected layers with Xavier–uniform to stabilize signal and gradients in sigmoidal multilayer perceptrons. \citep{Glorot2010Understanding}

\item \textbf{Kaiming (uniform)} — 
Use Kaiming–uniform for hidden layers and linear-consistent scaling for heads; serves as a rectifier-scaled baseline in our smooth setting. \citep{He2015Delving}

\item \textbf{He (normal)} — 
Sample hidden weights with He–\emph{normal} (Gaussian)  and keep linear-consistent heads. \citep{He2015Delving,Mishkin2016AllYouNeed}


\end{enumerate}

\subsection{Non-convex example : Swiss–roll}\label{sec:apps-fps}
Following Sec.~\ref{sec:planar-general}, we approximate the nonconvex swiss–roll target by a finite union of $\varepsilon$–balls on the plane, where $\varepsilon = 1.5$. Positives are identified on a uniform window and treated as candidate centers. We then (i) \emph{Voxel downsampling.} Let $v:=\varepsilon\times\texttt{VOXEL\_RATIO}=1.5\times0.6$. We quantize each candidate point $p\in\mathbb{R}^2$ to a grid cell by $q=\big\lfloor p/v \big\rfloor$ and retain a single representative per occupied cell (e.g., the first point). This “one-per-voxel’’ rule removes near-duplicate centers within distance $\lesssim v$, yielding a thinner, spatially uniform candidate set for next step.
(ii) apply \emph{Farthest Point Sampling} (FPS)—the classical greedy $k$–center algorithm \citep{Gonzalez1985}—to select at most $\texttt{FPS\_BUDGET}=120$ centers. The resulting cover drives the constructive initializer: for each selected center we place $H_{\text{sides}}=24$ steep half–planes (inner sharpness $\kappa_{\text{hidden}}=8.0$, per–disk sharpness $\kappa_{\text{disk}}=6.0$), and use an OR–like head $\alpha\!\left(\sum_k s_k-\tau\right)$ with $\alpha=6.0$ and $\tau=0.5$. This is the example construction of the two–layer sigmoidal construction from Sec.~\ref{sec:planar-general} within the form \eqref{eq:finite-sum}.

\begin{figure}[H]
  \centering
  \includegraphics[width=0.82\linewidth]{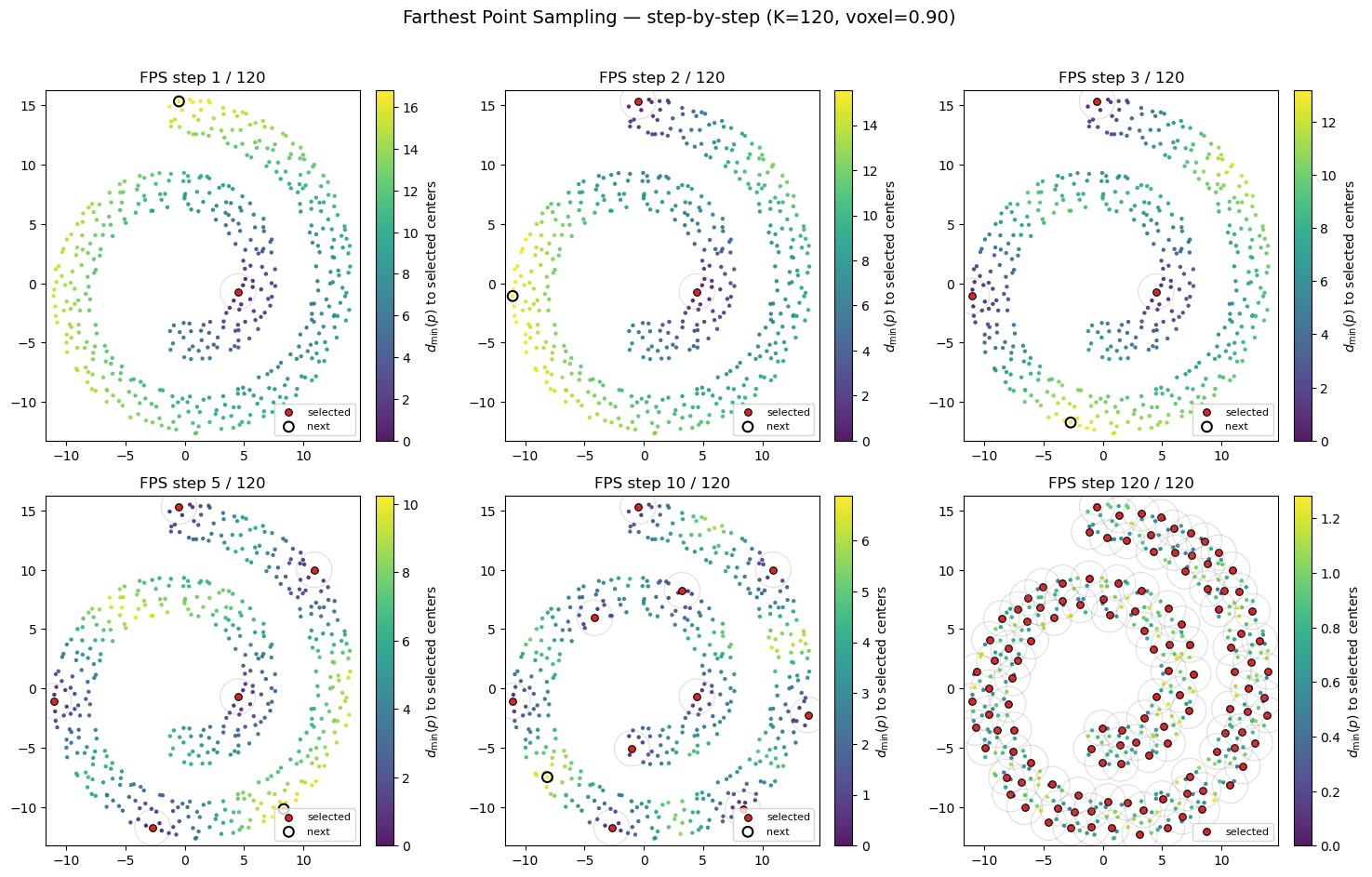}
  \caption{\textbf{How Farthest Point Sampling choose centers}}
  \label{fig:fps-pipeline}
\end{figure}

\paragraph{Metrics.}
Let $\{(x_i,y_i)\}_{i=1}^{N}$ be a binary dataset with $y_i\in\{0,1\}$.
The model outputs a logit $z_i\in\mathbb{R}$ and a probability
$p_i:=\sigma(z_i)\in[0,1]$, where $\sigma(t)=(1+e^{-t})^{-1}$.
Write $P:=\{i:\,y_i=1\}$ and $N:=\{i:\,y_i=0\}$.

\emph{(i) Brier score.}\cite{Brier1950} The mean squared error between calibrated probabilities and labels:
\[
\mathrm{Brier}\ :=\ \frac{1}{N}\sum_{i=1}^{N}\bigl(p_i-y_i\bigr)^2.
\]

\emph{(ii) Area under the ROC curve (AUC).}\cite{Hanley1982} Using scores $p_i$ (higher means more positive), the empirical AUC is the Wilcoxon–Mann–Whitney statistic:
\[
\mathrm{AUC}\ :=\ \frac{1}{|P|\,|N|}\sum_{i\in P}\sum_{j\in N}
\Bigl[\ \1\{p_i>p_j\}\;+\;\tfrac{1}{2}\,\1\{p_i=p_j\}\ \Bigr],
\]

\emph{(iii) Intersection-over-Union at threshold $\tau$ (IoU@$\tau$).}\cite{Jaccard1901}
Let $\widehat{y}_i(\tau):=\1\{p_i\ge\tau\}$ be the predicted label at threshold $\tau$ (we use $\tau=0.5$ in this section). Then
\[
\mathrm{IoU}(\tau)\ :=\
\frac{\sum_{i=1}^{N}\1\{\widehat{y}_i(\tau)=1,\ y_i=1\}}
{\sum_{i=1}^{N}\1\{\widehat{y}_i(\tau)=1\ \text{or}\ y_i=1\}}
\]

\subsection{Quantitative results}\label{sec:apps-quant}
Table~\ref{tab:single-double} summarizes single– and two–disk results under the exact protocol above. The geometry–aware initializer provides a near–perfect boundary at \emph{initialization} for the single–disk case (AUC $=1.0000$, IoU $\ge 0.9814$) and markedly stronger IoU for the two–disk union (AUC $\ge 0.9890$, IoU $\ge 0.7395$) compared to geometry–agnostic baselines. After training, all methods reach near–perfect AUC and high IoU; nonetheless, our construction is the only one that starts with a closed boundary on the tasks, reducing the reliance on optimization to \emph{refine} rather than \emph{discover} geometry.

\begin{table}[H]
\centering

\scriptsize
\setlength{\tabcolsep}{5pt}
\begin{tabular}{lllrrrrrr}
\toprule
Case & $H$ & Init & Init Brier & Init AUC & Init IoU & Final Brier & Final AUC & Final IoU\\
\midrule
\multirow{10}{*}{Single}
&16&random & 0.4176 & 0.4861 & 0.1255 & 0.0089 & 0.9999 & 0.9548\\
&16&xavier & 0.2086 & 0.4965 & 0.0805 & 0.0056 & 1.0000 & 0.9767\\
&16&kaiming& 0.3943 & 0.5154 & 0.1336 & 0.0073 & 1.0000 & 0.9727\\
&16&he     & 0.1441 & 0.5194 & 0.0000 & 0.0045 & 1.0000 & 0.9794\\
&16&\textbf{ours}   & \textbf{0.0216} & \textbf{1.0000} & \textbf{0.9954} & 0.0066 & 0.9999 & 0.9634\\
&32&random & 0.1263 & 0.5119 & 0.0000 & 0.0034 & 1.0000 & 0.9828\\
&32&xavier & 0.3518 & 0.5197 & 0.1295 & 0.0049 & 1.0000 & 0.9760\\
&32&kaiming& 0.2322 & 0.5034 & 0.1050 & 0.0038 & 1.0000 & 0.9874\\
&32&he     & 0.1875 & 0.5116 & 0.0000 & 0.0038 & 1.0000 & 0.9807\\
&32&\textbf{ours}   & \textbf{0.0193} & \textbf{1.0000} & \textbf{0.9814} & 0.0026 & 1.0000 & 0.9874\\
\midrule
\multirow{10}{*}{Double}
&16&random & 0.2365 & 0.5038 & 0.1626 & 0.0057 & 0.9996 & 0.9709\\
&16&xavier & 0.3645 & 0.5047 & 0.2297 & 0.0081 & 0.9995 & 0.9573\\
&16&kaiming& 0.4116 & 0.4951 & 0.1771 & 0.0070 & 0.9996 & 0.9611\\
&16&he     & 0.2374 & 0.5098 & 0.1626 & 0.0045 & 0.9998 & 0.9795\\
&16&\textbf{ours}   & \textbf{0.0659} & \textbf{0.9890} & \textbf{0.7395} & 0.0102 & 0.9993 & 0.9496\\
&32&random & 0.2538 & 0.5003 & 0.1972 & 0.0033 & 1.0000 & 0.9870\\
&32&xavier & 0.3771 & 0.5001 & 0.1867 & 0.0055 & 0.9998 & 0.9719\\
&32&kaiming& 0.2227 & 0.4902 & 0.0000 & 0.0031 & 0.9999 & 0.9870\\
&32&he     & 0.2235 & 0.4963 & 0.0806 & 0.0043 & 0.9999 & 0.9774\\
&32&\textbf{ours}   & \textbf{0.0601} & \textbf{0.9934} & \textbf{0.7757} & 0.0066 & 0.9999 & 0.9751\\
\bottomrule

\end{tabular}
\caption{Results of single and two–disk tasks: Brier, AUC, and IoU@0.5 at initialization and after training (single/two–disk: epochs $=80$, batch $=512$, learning rate $=3\!\times\!10^{-3}$). Best initialization per block in \textbf{bold}.}
\label{tab:single-double}
\end{table}

On the \textbf{single–disk} task, our initializer achieves near–perfect geometry at initialization (AUC $=1.0000$, IoU $>0.98$). After training, all methods converge to near–perfect AUCs. On the \textbf{two–disk union}, our OR-like aggregator produces a faithful union boundary already at initialization, reflected in high IoU; subsequent training mainly refines calibration.

\subsection{Qualitative results}\label{sec:apps-qual}
We visualize (i) the data distributions (positives / negatives), (ii) decision maps at initialization and after training for $H=32$, and (iii) loss curves for each $H$.
\begin{figure}[H]
\centering
\begin{subfigure}[t]{0.31\textwidth}
  \centering\includegraphics[width=\linewidth]{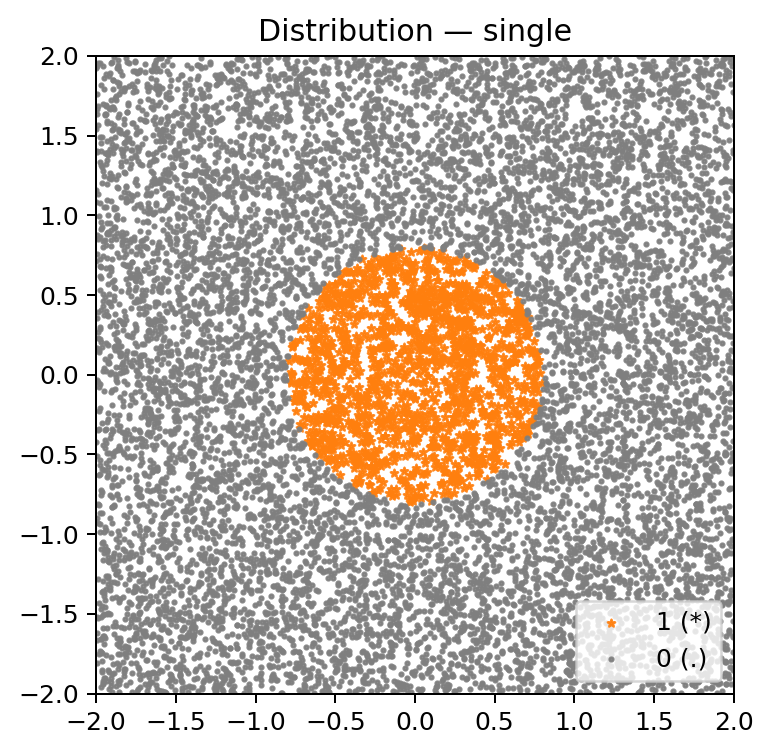}
  \caption{Single—data distribution}
\end{subfigure}\hfill
\begin{subfigure}[t]{0.31\textwidth}
  \centering\includegraphics[width=\linewidth]{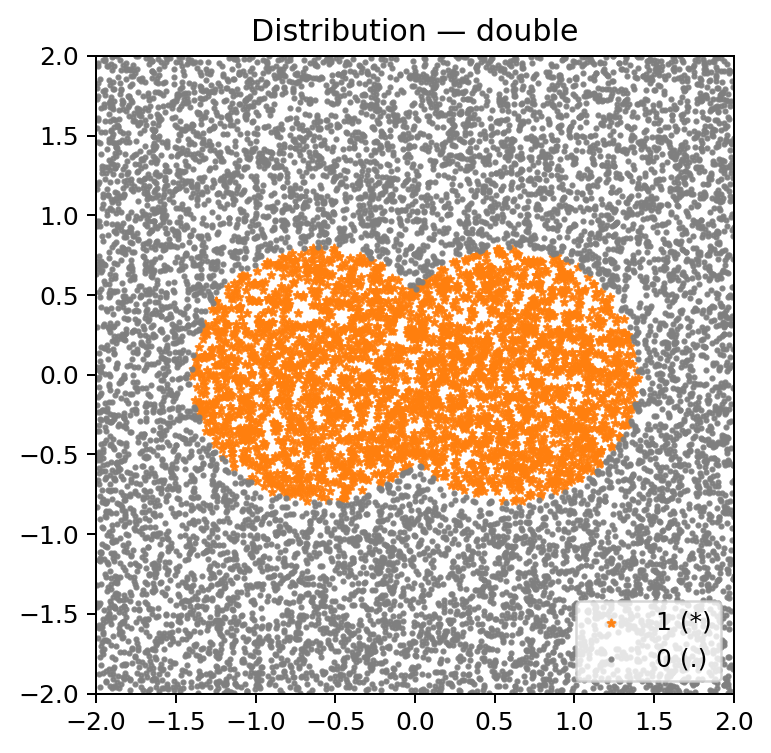}
  \caption{Double—data distribution}
\end{subfigure}\hfill
\begin{subfigure}[t]{0.27\textwidth}
  \centering\includegraphics[width=\linewidth]{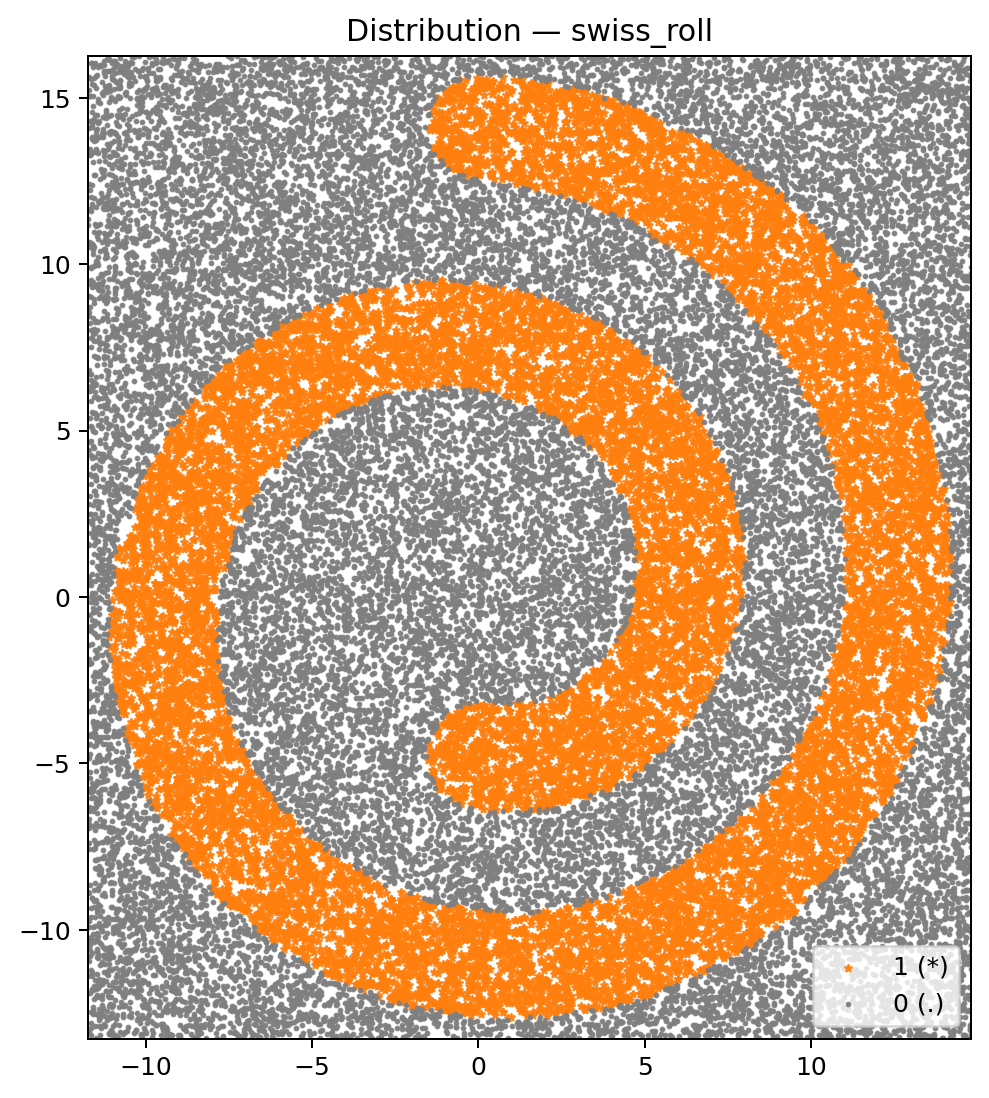}
  \caption{Swiss–roll—uniform labels}
\end{subfigure}
\caption{Distributions used in the three cases, positives are colored in orange, while negatives are in gray.}
\end{figure}

\begin{figure}[H]
\centering
\begin{subfigure}[t]{0.8\textwidth}
  \centering\includegraphics[width=\linewidth]{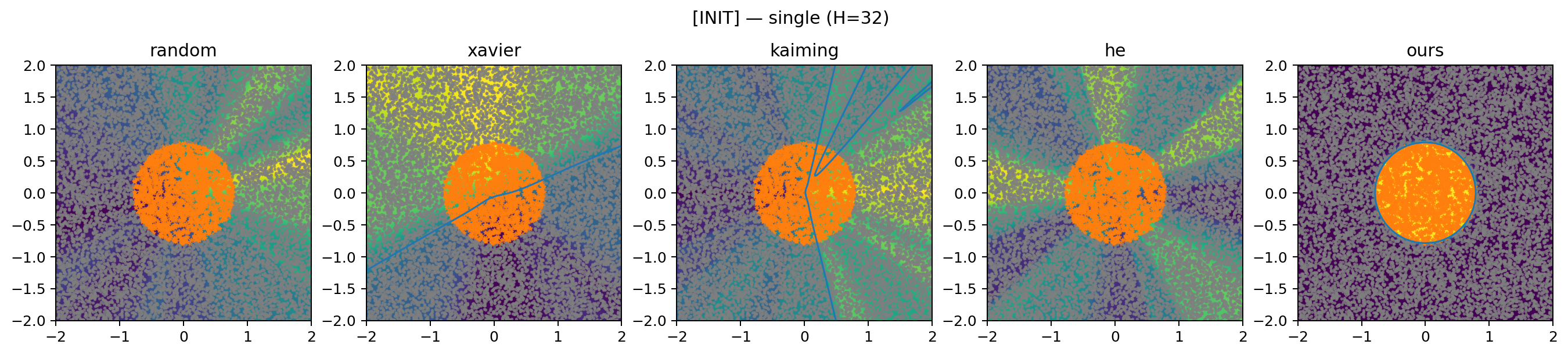}
  \caption{Single, $H{=}32$—initialization}
\end{subfigure}\hfill
\begin{subfigure}[t]{0.8\textwidth}
  \centering\includegraphics[width=\linewidth]{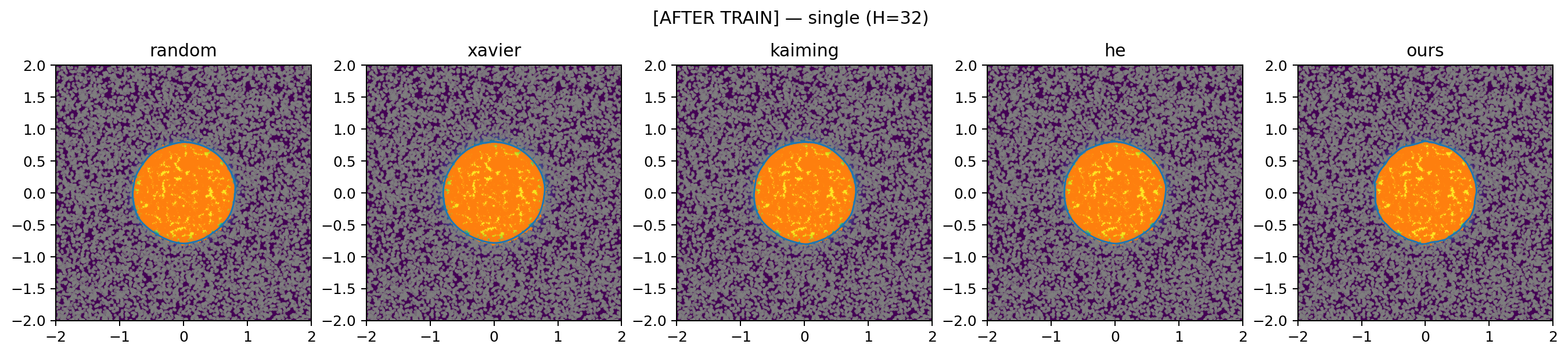}
  \caption{Single, $H{=}32$—after training}
\end{subfigure}

\vspace{0.4em}
\begin{minipage}{0.8\textwidth}
  \centering
  \includegraphics[width=\linewidth]{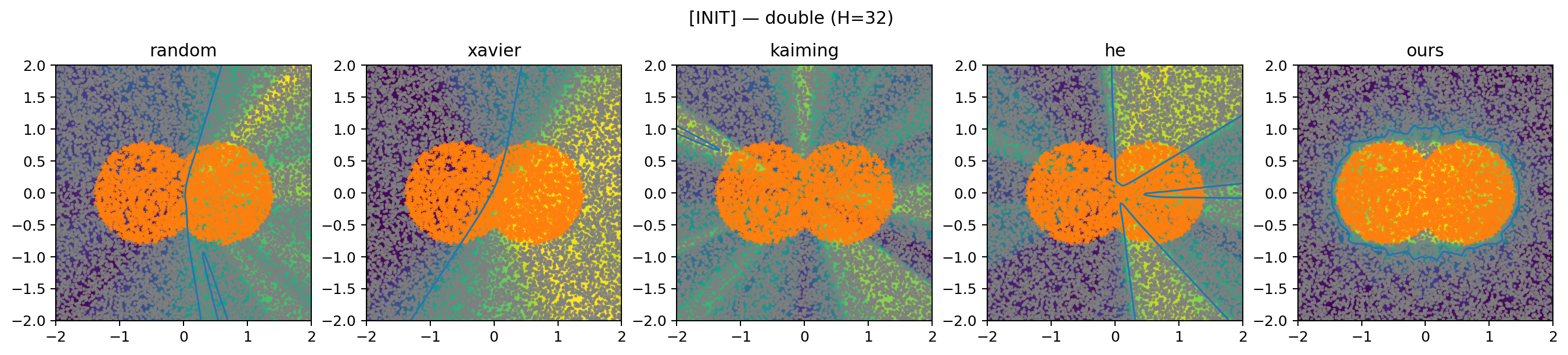}\\
  \small (Double, $H{=}32$—initialization)
\end{minipage}\hfill
\begin{minipage}{0.8\textwidth}
  \centering
  \includegraphics[width=\linewidth]{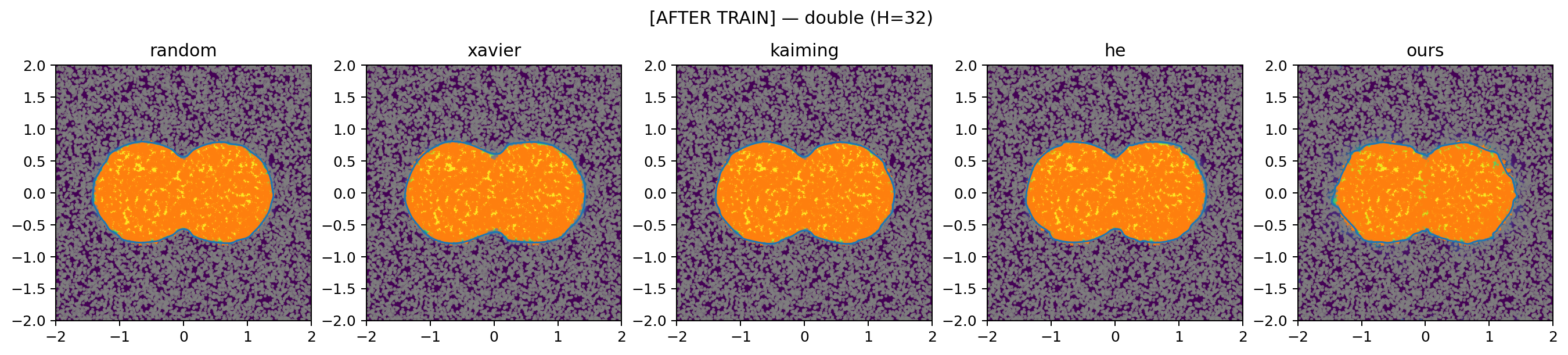}\\
  \small (Double, $H{=}32$—after training)
\end{minipage}
\caption{Decision maps for the five initializations. The blue contour line marks the $0.5$ is the decision boundary ($\tau=0.5$). The background shows the predicted probability $\sigma(z)$; warmer colors indicate higher probability (closer to $1$).}
\end{figure}

\begin{figure}[H]
\centering
\begin{subfigure}[t]{0.48\textwidth}
  \centering\includegraphics[width=\linewidth]{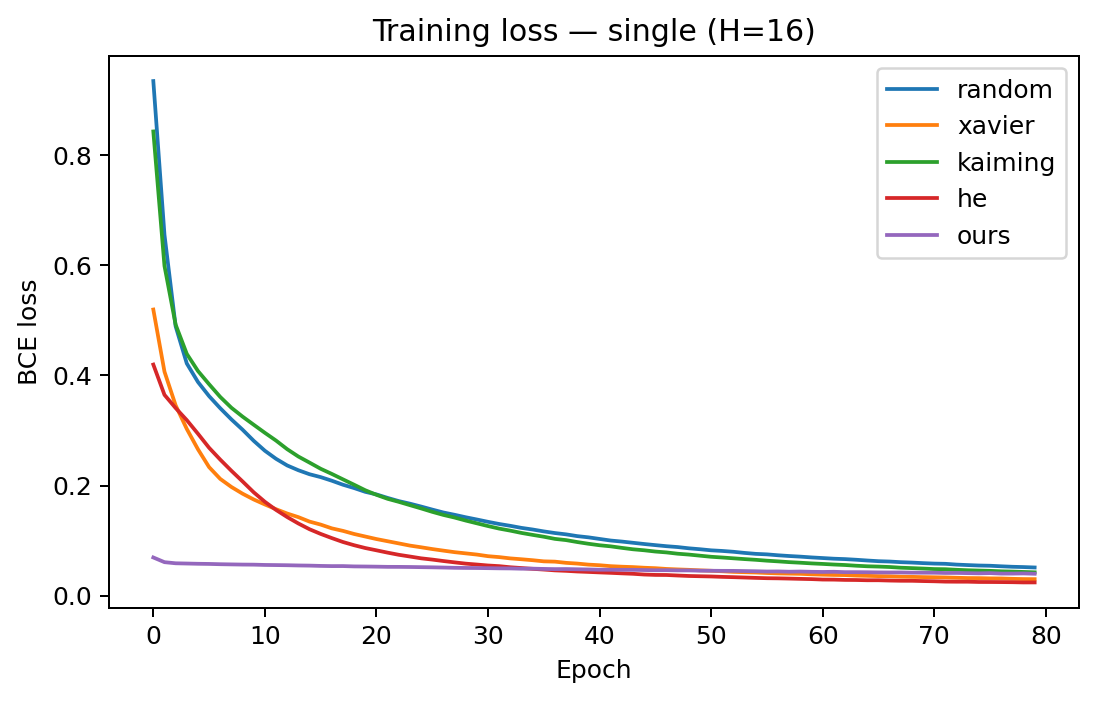}
  \caption{Single—loss ($H{=}16$)}
\end{subfigure}\hfill
\begin{subfigure}[t]{0.48\textwidth}
  \centering\includegraphics[width=\linewidth]{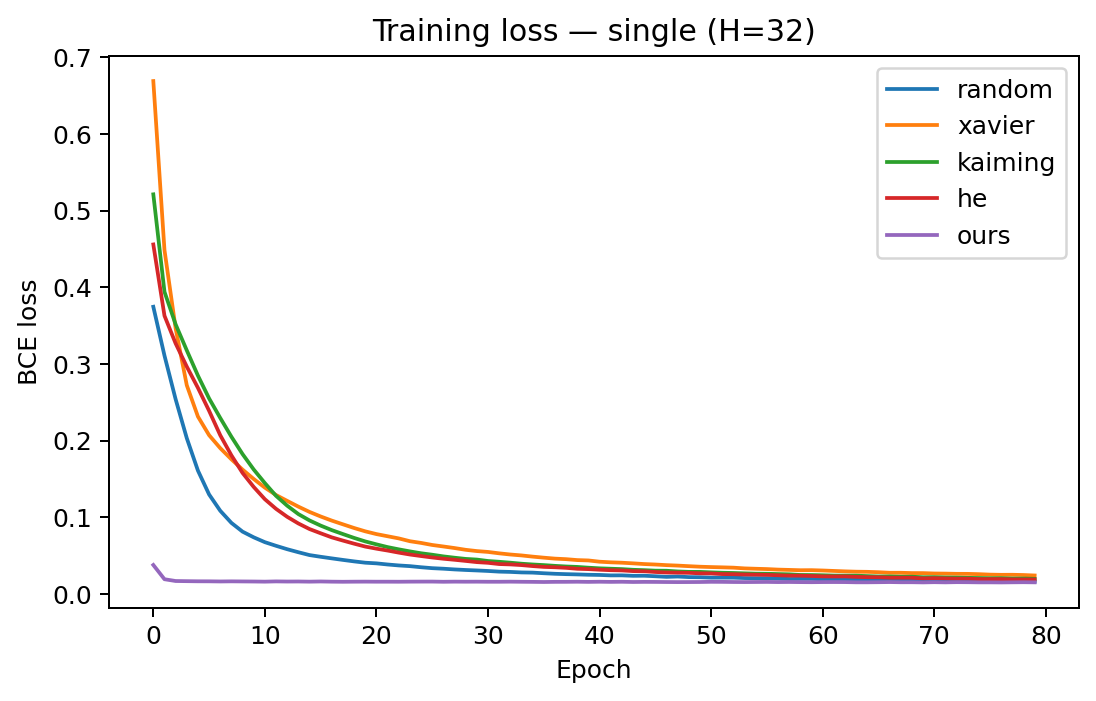}
  \caption{Single—loss ($H{=}32$)}
\end{subfigure}

\vspace{0.4em}
\begin{subfigure}[t]{0.48\textwidth}
  \centering\includegraphics[width=\linewidth]{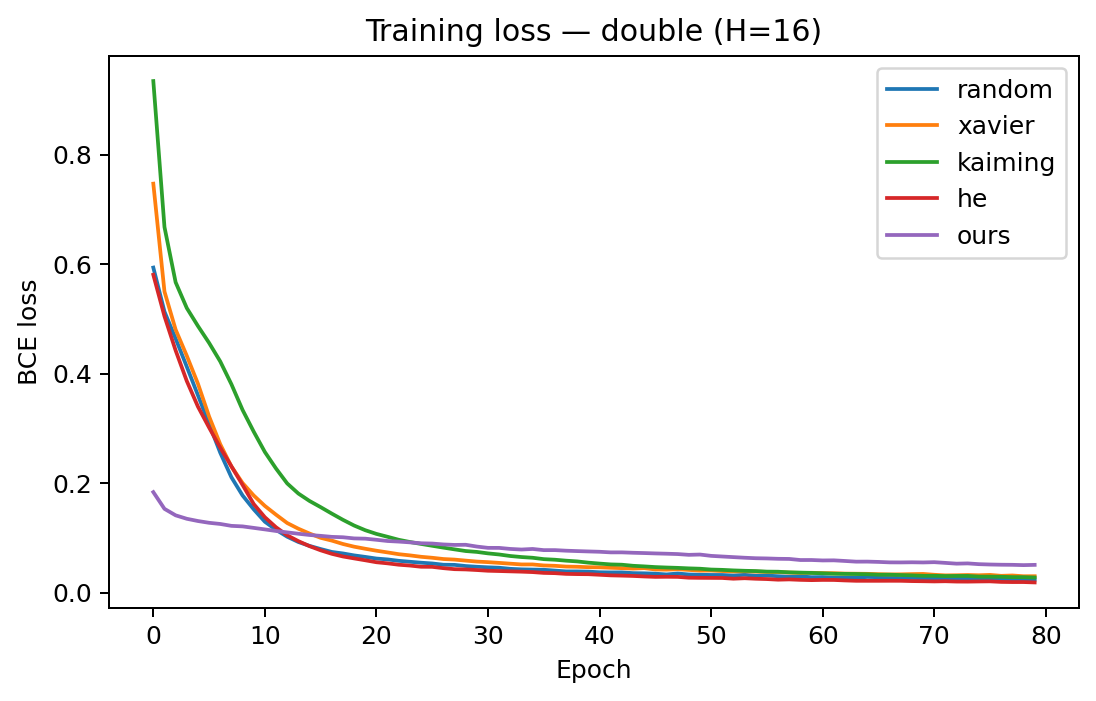}
  \caption{Double—loss ($H{=}16$)}
\end{subfigure}\hfill
\begin{subfigure}[t]{0.48\textwidth}
  \centering\includegraphics[width=\linewidth]{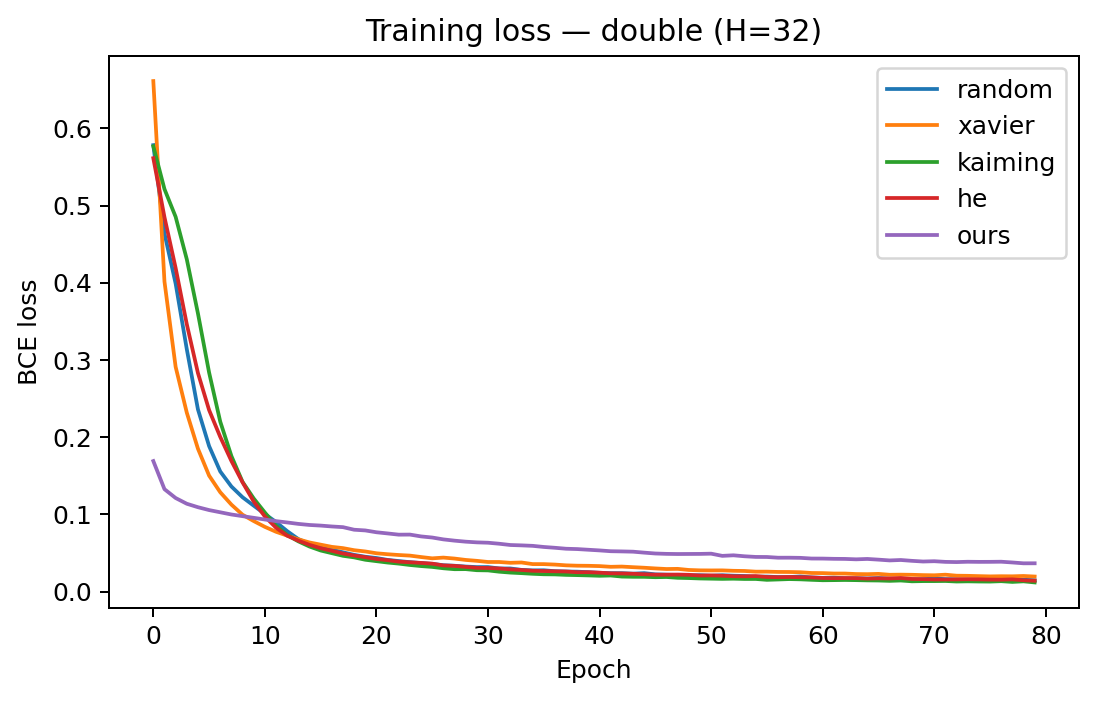}
  \caption{Double—loss ($H{=}32$)}
\end{subfigure}
\caption{Training curves across initializations.}
\end{figure}

\subsection{Swiss–roll}\label{sec:apps-swiss}
The network starts from a geometrically consistent initialization (INIT: $\mathrm{BCE}=0.5823$, Brier$=0.1615$, IoU$=0.6477$, AUC$=0.9336$) and is then refined by training with early stopping, after which performance improves to (AFTER(early-stopped at $epoch =67$: $\mathrm{BCE}=0.0454$, Brier$=0.0060$, IoU$=0.9845$, AUC$=0.9999$). Fig.~\ref{fig:swiss-roll-ours} shows the initialized and final probability maps, together with the loss curve.

\begin{figure}[H]
\centering
\begin{subfigure}[t]{0.32\textwidth}
  \centering\includegraphics[width=\linewidth]{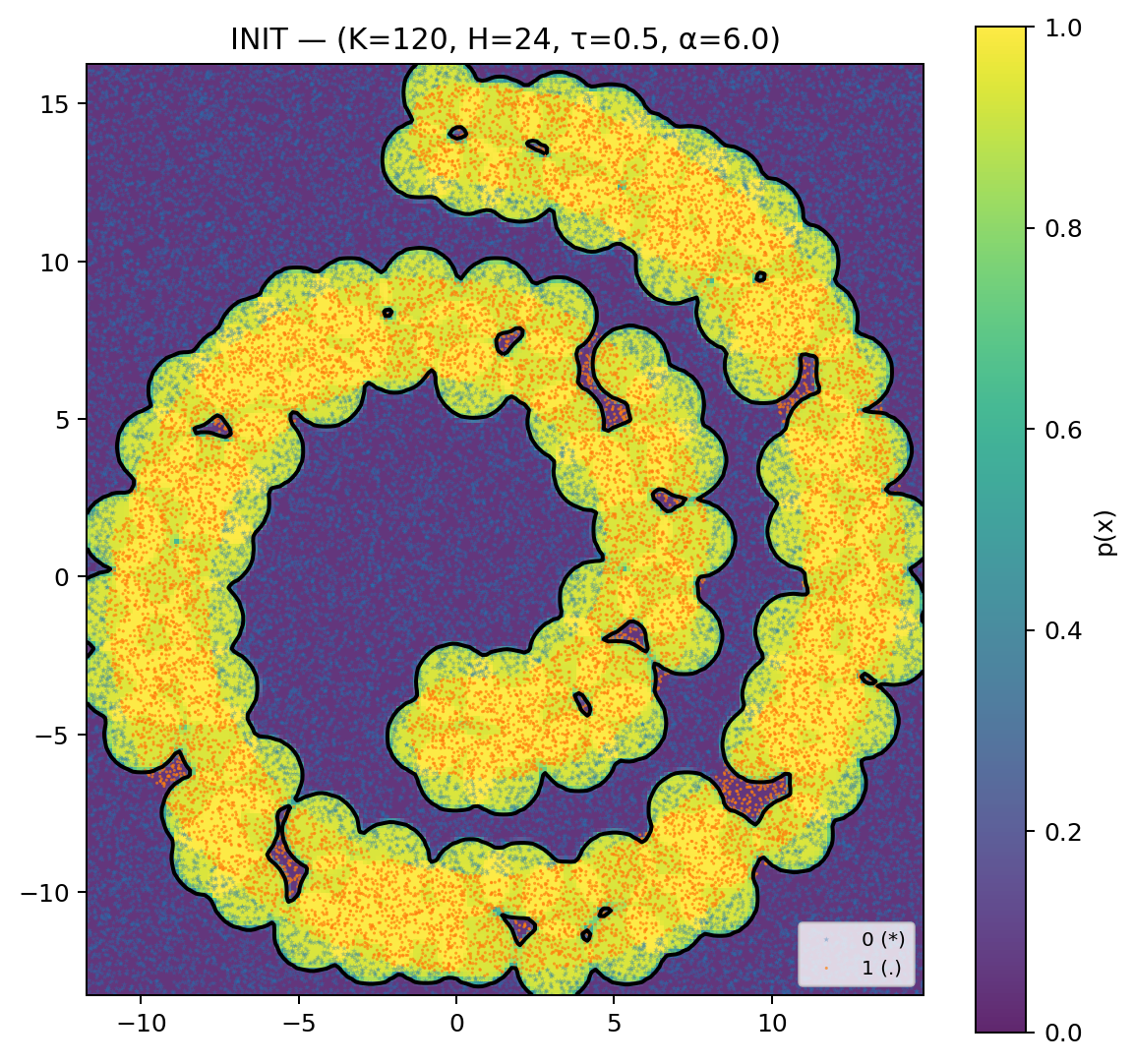}
  \caption{Initialization}
\end{subfigure}\hfill
\begin{subfigure}[t]{0.32\textwidth}
  \centering\includegraphics[width=\linewidth]{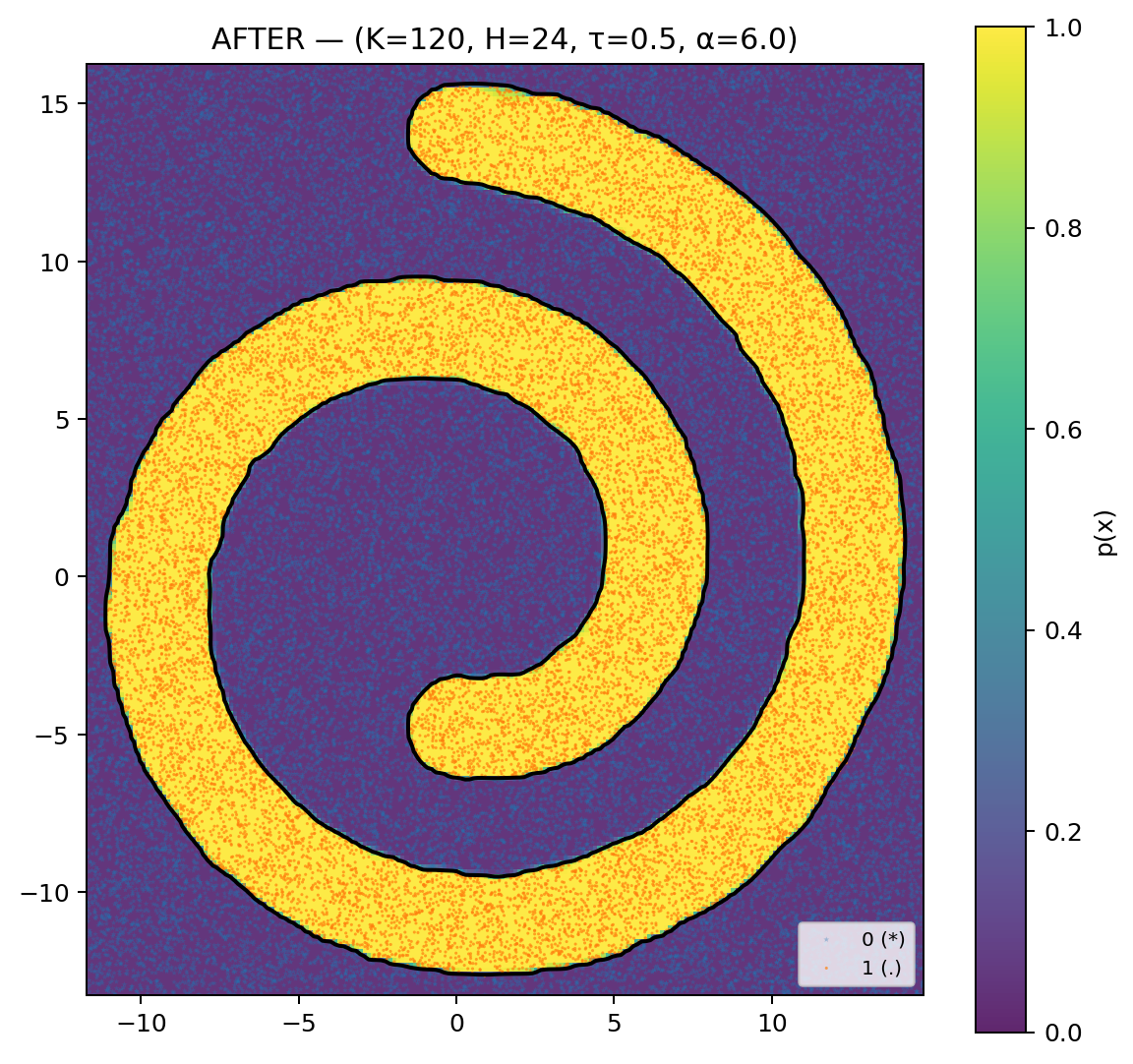}
  \caption{After training}
\end{subfigure}\hfill
\begin{subfigure}[t]{0.32\textwidth}
  \centering\includegraphics[width=\linewidth]{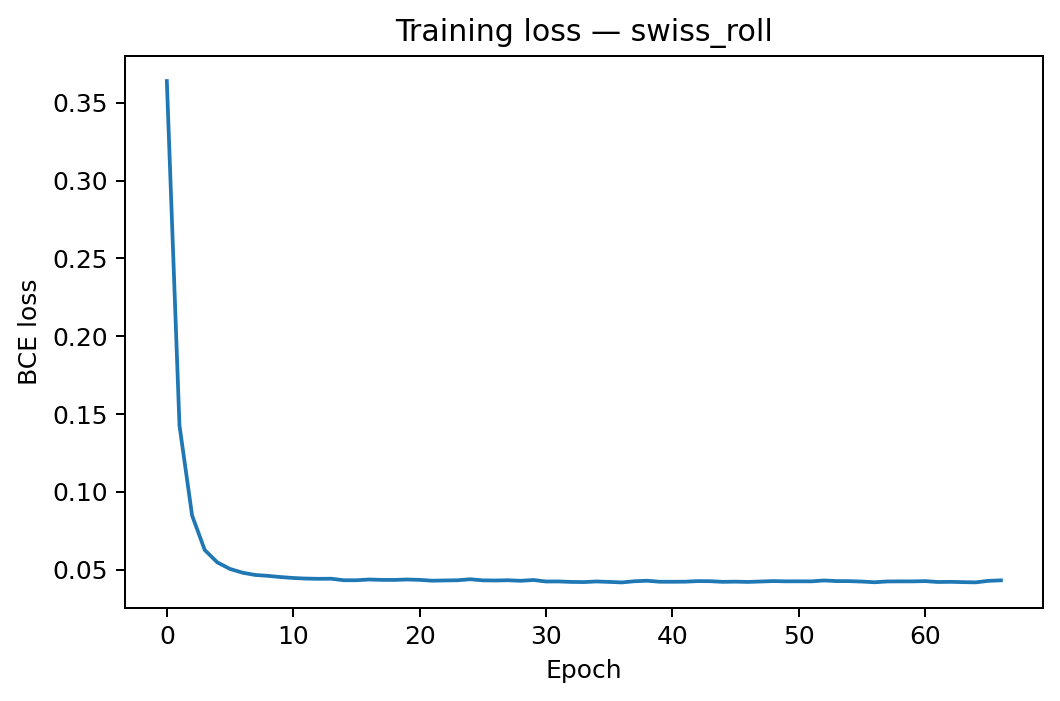}
  \caption{Swiss–roll training loss}
\end{subfigure}
\caption{Swiss–roll case: geometric construction and post–training refinement.}
\label{fig:swiss-roll-ours}
\end{figure}

Although our construction may seems to be artificial, the construction plays an important role in describing the inductive bias of purely sigmoidal MLPs. By compiling target geometry into sigmoidal half–space gates (Secs.~\ref{sec:planar-convex}–\ref{sec:planar-general}), we obtain networks whose \emph{initialized} decision boundaries already align with the task geometry up to a narrow band. On planar numerical experiments, we observed near–perfect AUC and high IoU at initialization for the constructive models, while standard random/Xavier/Kaiming/He initializations start near chance and rely entirely on optimization and training to discover the geometric boundary.

\section{Conclusions}\label{sec:conclusion}

We revisited Universal Approximation Theorem for sigmoidal Multi-Layer Perceptrons through a constructive, tropical geometry–aware sense. Within the finite–sum form \eqref{eq:finite-sum}, our method build a covering of the target region into sigmoidal half–space gates, yielding decision boundaries that agree with the prescribed geometry at \emph{initialization}; subsequent optimization is not essential and primarily refines calibration rather than discovering the boundary. This positions our approach as a smooth counterpart to tropical, boundary-first reasoning while remaining entirely sigmoidal.

\paragraph{Limitations.}
Our analysis and constructions focus on planar (\(d{=}2\)) binary classification with logistic sigmoid function; while Sec.~7 gives the outlines to higher-dimensional extensions, we have not yet provided the same level of quantitative study beyond the plane. The construction algorithm depends on a reasonable covering (e.g., by disks); extremely fine geometric detail or high curvature can increase the required size of the network. Finally, although training typically preserves the intended union semantics, incorrect centering/scaling of the head can degrade stability.

\paragraph{Future Works.}
Two directions appear especially promising: (i) principled cover selection with approximation guarantees and budget-aware trade-offs; and (ii) integrating tropical certificates with sigmoidal training, e.g., enforcing margin/coverage constraints derived from support functions during optimization. More broadly, we view geometry-aware compilation as a practical bridge between classical UAT and tropical structure: it turns \emph{where} the boundary should be (Sections~5–6) into \emph{how} to place weights so that the MLP starts, which allow learning focus on calibration and robustness rather than boundary discovery.


\bibliographystyle{plainnat}
\bibliography{ref}

\end{document}